\newcolumntype{M}{>{$}l<{$}}
\newcommand{\commentPablo}[1]{\textbf{\textcolor{black}}}
\newcommand{\abs}[1]{\left\lvert#1\right\rvert}
\newcommand{\normdist}[2]{\mathcal{N}(#1, #2)}
\newcommand{\triplenorm}[1]{%
  \left\lVert\!\!\left\lVert\!\!\left\lVert #1 
  \right\rVert\!\!\right\rVert\!\!\right\rVert
}
\newtheorem{definition}{Definition}[section]
\newtheorem{theorem}{Theorem}[section]
\newtheorem{lemma}{Lemma}[section]
\newtheorem{remark}{Remark}[section]
\title{Explicit Density Approximation for Neural Implicit Samplers Using a Bernstein-Based Convex Divergence}
\author{%
  José Manuel de Frutos\thanks{Corresponding author: jofrutos@ing.uc3m.es} \\
  Universidad Carlos III\\
  \And
  Manuel A. Vázquez\\
  Universidad Carlos III\\
  \AND
  Pablo M. Olmos\\
  Universidad Carlos III\\
  \And
  Joaquín Míguez\\
  Universidad Carlos III\\
}
  \let\saved@addcontentsline\addcontentsline
  \newcommand\suppressTOCentries{%
    \renewcommand{\addcontentsline}[3]{}%
  }
  \newcommand\restoreTOCentries{%
    \let\addcontentsline\saved@addcontentsline%
  }
\begin{document}

\maketitle

\suppressTOCentries
\begin{abstract}

Rank-based statistical metrics, such as the invariant statistical loss (ISL), have recently emerged as robust and practically effective tools for training implicit generative models. In this work, we introduce dual-ISL, a novel likelihood-free objective for training implicit generative models that interchanges the roles of the target and model distributions in the ISL framework, yielding a convex optimization problem in the space of model densities. We prove that the resulting rank-based discrepancy $d_K$ is i) continuous under weak convergence and with respect to the $L^1$ norm, and ii) convex in its first argument—properties not shared by classical divergences such as KL or Wasserstein distances. Building on this, we develop a theoretical framework that interprets $d_K$ as an $L^2$-projection of the density ratio $q = p/\tilde p$ onto a Bernstein polynomial basis, from which we derive exact bounds on the truncation error, precise convergence rates, and a closed-form expression for the truncated density approximation. We further extend our analysis to the multivariate setting via random one-dimensional projections, defining a sliced dual-ISL divergence that retains both convexity and continuity. We empirically show that these theoretical advantages translate into practical ones. Specifically, across several benchmarks dual-ISL converges more rapidly, delivers markedly smoother and more stable training, and more effectively prevents mode collapse than classical ISL and other leading implicit generative methods—while also providing an explicit density approximation. 

\end{abstract}

\section{Introduction}

Implicit generative models are a class of models that learn to generate data samples without explicitly modeling the underlying probability distribution \cite{mohamed2016learning}, enabling flexible modeling of high-dimensional data across vision (e.g., DCGAN \cite{radford2015unsupervised}), audio (e.g., WaveGAN \cite{donahue2018adversarial}), and text domains (e.g., SeqGAN \cite{yu2017sequence}). Instead of directly estimating the data distribution, these models learn a mapping from a simple input distribution (such as a multivariate Gaussian) to the data space through a deterministic or stochastic function. A prominent example is the generator in a Generative Adversarial Network (GAN) \cite{goodfellow2014generative}, which transforms random noise vectors into realistic data samples. The generator is trained in tandem with a discriminator that learns to distinguish real data from generated data, providing feedback that guides the generator to improve. Unlike explicit models, implicit models do not require tractable likelihoods, allowing them to generate high-quality samples even when the data distribution is complex or high-dimensional.

The Invariant Statistical Loss (ISL) is a rank-based loss function recently proposed in \cite{de2024training} that compares the empirical order statistics of samples from the data and from the implicit generative model. In this work, we introduce \textbf{dual-ISL}, a novel likelihood-free objective obtained by swapping the roles of the data and model distributions within the ISL framework. Remarkably, the induced discrepancy \(d_K\) admits a fully \emph{explicit closed-form density approximation}: it is exactly the \(L^2\)–projection of the density ratio
\[
q(x)\;=\;\frac{p_{\mathrm{target}}(x)}{p_{\mathrm{model}}(x)}, 
\qquad
x = F_{\mathrm{target}}^{-1}(t),\;t\in[0,1],
\]
onto the space of dual-Bernstein polynomials of degree \(K\) \cite{lorentz2012bernstein,jiittler1998dual}. Writing
\[
q_K(x)
=\sum_{n=0}^K \mathbb{Q}_K(n)\,\tilde b_{n,K}\bigl(F_{\mathrm{target}}(x)\bigr)
\]
with computable coefficients \(\{\mathbb{Q}_K(n)\}\) immediately yields
\[
p_{\mathrm{model}}(x)\;\approx\;\frac{p_{\mathrm{target}}(x)}{q_K(x)}.
\]
This explicit representation not only provides analytic error bounds via Bernstein approximation theory and ensures convexity over the space of densities, but also enables efficient density evaluation without auxiliary sampling and provable convergence rates inherited from polynomial approximation. By marrying likelihood-free training with a tractable, closed-form density, dual-ISL bridges a critical gap—offering both rigorous theory and practical stability in implicit generative modeling. Moreover, this rank-based construction directly parallels the univariate optimal transport problem: matching order statistics via the probability–integral transform recovers the Monge map and yields the \(p\)-Wasserstein distance \cite{villani2008optimal}. Dual-ISL extends this perspective by providing an explicit, closed-form polynomial approximation of the density ratio—complete with convexity guarantees and convergence rates—rather than only a transport map.

\section{The Invariant Statistical Loss (ISL)} \label{section: preliminaries}

We briefly review the invariant statistical loss (ISL) from \cite{de2024training}.  ISL is built on a simple rank statistic whose distribution is exactly uniform when two samples come from the same probability density function, and which varies continuously under $L^1$‐perturbations of the underlying densities.

\subsection{Rank statistic and uniformity}\label{sec:rank}

Let $\tilde y_1,\dots,\tilde y_K$ be i.i.d.\ samples from a univariate real distribution with pdf $\tilde{p}$, and let $y$ be a single sample independently drawn from another distribution with pdf $p$. Define the subset
\begin{align*}
\mathcal{A}_{K} 
\;:=\; 
\Bigl\{\tilde{y} \in \{\tilde{y}_{k}\}_{k=1}^{K} : \tilde{y} \leq y\Bigr\},
\end{align*}
and the \emph{rank statistic}
\begin{align}\label{eq: rank statistic}
     A_{K} 
     \;:=\; 
     \bigl|\mathcal{A}_{K}\bigr|,
 \end{align}
 i.e., $A_{K}$ counts how many samples in $\{\tilde{y}_1,\ldots,\tilde{y}_K\}$ lie at or below $y$. Then $A_{K}$ is a discrete random variable (r.v.) taking values in $\{0,1,\ldots,K\}$, and we denote its pmf by 
 \begin{align*}
     \mathbb{Q}_K:\{0,\ldots,K\}\to[0,1].
 \end{align*}
When the two pdfs $p$ and $\tilde{p}$ coincide, this pmf is exactly uniform \cite{de2024training}.
\begin{theorem}\label{thm:uniformity}
If $p = \tilde{p}$, then $A_{K}$ is uniformly distributed on $\{0,\ldots,K\}$, i.e.\ $\mathbb{Q}_{K}(n)=\tfrac{1}{K+1}$ for all $n \in \{0,\ldots,K\}$.
\end{theorem}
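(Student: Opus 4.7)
The plan is to exploit the exchangeability of i.i.d.\ samples from a common distribution. When $p = \tilde p$, the collection $y, \tilde y_1, \dots, \tilde y_K$ consists of $K+1$ mutually independent random variables with the same pdf $\tilde p$. Because $\tilde p$ is a density, the event that any two of these variables coincide has probability zero, so almost surely all $K+1$ samples are distinct and admit a unique strict ordering.

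Next I would invoke exchangeability: since the joint distribution of $(y,\tilde y_1,\dots,\tilde y_K)$ is invariant under every permutation of its coordinates, the position of $y$ in the sorted order of the combined sample is uniformly distributed on $\{1,2,\dots,K+1\}$. Writing $R$ for this rank (say, with $R=1$ meaning $y$ is the smallest), we have $\Pr(R=r) = 1/(K+1)$ for each $r \in \{1,\dots,K+1\}$.

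Finally I would connect $R$ to the rank statistic $A_K$. By the definition in \eqref{eq: rank statistic}, $A_K$ counts how many of the $\tilde y_k$ satisfy $\tilde y_k \le y$; on the full-measure event that all samples are distinct, this equals the number of $\tilde y_k$ strictly less than $y$, which is precisely $R-1$. Hence $A_K = R-1$ is uniform on $\{0,1,\dots,K\}$, giving $\mathbb{Q}_K(n) = 1/(K+1)$ for every $n \in \{0,\dots,K\}$.

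The argument is essentially combinatorial once exchangeability is invoked, so there is no real obstacle; the only subtlety is the almost-sure absence of ties, which must be justified by the continuity of the common distribution (guaranteed here by the existence of a pdf). Without that observation, $A_K$ would have to be defined with a tie-breaking rule and would generally fail to be uniform. I would therefore make the no-ties remark explicit before applying the exchangeability argument.
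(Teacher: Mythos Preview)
Your exchangeability argument is correct and complete: once $p=\tilde p$ the $K+1$ samples are i.i.d.\ from a continuous law, ties have probability zero, and the rank of $y$ in the combined sample is uniform by symmetry, whence $A_K=R-1$ is uniform on $\{0,\dots,K\}$.

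The paper does not supply its own proof here---it quotes the result from \cite{de2024training}---but the machinery it develops later points to a different, analytic route. Using the integral representation
\[
\mathbb{Q}_K(n)=\int_{\mathbb{R}}\binom{K}{n}\tilde F(y)^n\bigl(1-\tilde F(y)\bigr)^{K-n}p(y)\,\mathrm{d}y,
\]
setting $p=\tilde p$ and substituting $t=\tilde F(y)$ gives $\mathbb{Q}_K(n)=\int_0^1 b_{n,K}(t)\,\mathrm{d}t=\binom{K}{n}B(n+1,K-n+1)=\tfrac{1}{K+1}$ via the Beta-function identity (the paper invokes exactly this identity in the proof of Lemma~\ref{lemma:dual_basis_sum}). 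Your combinatorial proof is arguably more transparent and avoids any computation, while the analytic version ties directly into the Bernstein-projection viewpoint that drives the rest of the paper; each is a one-line argument once the right observation is made.
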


\subsection{ISL discrepancy}\label{sec:loss}
The ISL discrepancy quantifies the deviation of the pmf $\mathbb{Q}_{K}$ from the uniform law on $\{0,\dots,K\}$. To be specific, we define the discrepancy function
\begin{align}\label{eq:dK}
d_{K}(p,\tilde{p})
:= \dfrac{1}{K+1}\|\mathbb{Q}_{K} - \mathbb{U}_{K}\|_{\ell_{1}}= 
\frac{1}{K+1}\sum_{n=0}^{K}
\Bigl|\,
\tfrac{1}{K+1} 
\;-\; 
\mathbb{Q}_{K}(n)
\Bigr|= \dfrac{2}{K+1} \mathrm{TV}(\mathbb{Q}_{K}, \mathbb{U}).
\end{align}
where $\mathbb U_K$ is the uniform pmf on $\{0,\dots,K\}$ and $\mathrm{TV}(\cdot,\cdot)$ denotes total variation distance.
By Theorem~\ref{thm:uniformity}, $d_K(p,p)=0$ for all $K$.  Moreover, Theorem~\ref{thm:continuity} below, ensures that \(d_K(p,\tilde p)\) depends continuously on \(\tilde p\) in the \(L^1\) sense, while Theorem~\ref{thm:identifiability} guarantees that if \(d_K(p,\tilde p)=0\) for all \(K\), then \(\tilde p=p\) almost everywhere.  Hence, in the large-\(K\) limit, \(d_K\) behaves as a proper divergence, vanishing precisely when the two densities coincide.

\begin{theorem}[Continuity]\label{thm:continuity}
If $\|p - \tilde{p}\|_{L^1(\mathbb{R})}\,\le\,\epsilon,$ then for all $n\in \{0,\ldots,K\}$,
\begin{align*}
 \frac{1}{K+1}-\epsilon
 \;\;\le\;\;
 \mathbb{Q}_{K}(n)
 \;\;\le\;\;
 \frac{1}{K+1}+\epsilon.
\end{align*}
\end{theorem}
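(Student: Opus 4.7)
The plan is to represent $\mathbb{Q}_K(n)$ as an integral of a bounded probability kernel against $p$, and to use Theorem~\ref{thm:uniformity} to rewrite $\tfrac{1}{K+1}$ as the \emph{same} integral with $p$ replaced by $\tilde p$; the two-sided bound then drops out from a one-line $L^1$ estimate. The only structural fact required beyond Theorem~\ref{thm:uniformity} is that a single Binomial point-mass is a number in $[0,1]$.

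Concretely, let $\tilde F$ denote the CDF of $\tilde p$. Conditional on the value of $y$, the rank $A_K$ counts how many of the i.i.d.\ samples $\tilde y_1,\dots,\tilde y_K$ fall at or below $y$, so $A_K\mid y$ is $\mathrm{Binomial}(K,\tilde F(y))$. Taking expectation with respect to $y\sim p$ gives
\begin{align*}
\mathbb{Q}_K(n) \;=\; \int_{\mathbb{R}} \binom{K}{n}\,\tilde F(y)^n\bigl(1-\tilde F(y)\bigr)^{K-n}\,p(y)\,dy .
\end{align*}
Applying Theorem~\ref{thm:uniformity} to the pair $(\tilde p,\tilde p)$ (that is, taking the ``target'' sample to also have density $\tilde p$) forces the same integrand against $\tilde p$ to integrate to $\tfrac{1}{K+1}$. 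Subtracting these two identities yields
\begin{align*}
\mathbb{Q}_K(n)-\tfrac{1}{K+1} \;=\; \int_{\mathbb{R}} \binom{K}{n}\,\tilde F(y)^n\bigl(1-\tilde F(y)\bigr)^{K-n}\bigl(p(y)-\tilde p(y)\bigr)\,dy .
\end{align*}

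The final step is an $L^1$ estimate: the kernel $\binom{K}{n}\tilde F(y)^n(1-\tilde F(y))^{K-n}$ is a single Binomial probability mass and hence lies in $[0,1]$ pointwise, so pulling absolute values inside the integral gives
\begin{align*}
\Bigl|\mathbb{Q}_K(n)-\tfrac{1}{K+1}\Bigr|\;\le\;\int_{\mathbb{R}}\bigl|p(y)-\tilde p(y)\bigr|\,dy \;=\; \|p-\tilde p\|_{L^1}\;\le\;\epsilon ,
\end{align*}
which is the claimed two-sided inequality, and it holds uniformly in $n$. I do not anticipate a genuine obstacle here; the only point worth isolating is that invoking Theorem~\ref{thm:uniformity} to evaluate the reference integral avoids having to perform the change of variables $u=\tilde F(y)$ (which would otherwise require regularity/atomlessness hypotheses on $\tilde F$) and reduces the argument to the trivial pointwise bound on the Binomial kernel.
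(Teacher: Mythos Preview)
Your proof is correct and is essentially the natural argument: the integral representation $\mathbb{Q}_K(n)=\int h_n(y)\,p(y)\,dy$ with $h_n(y)=\binom{K}{n}\tilde F(y)^n(1-\tilde F(y))^{K-n}$ is exactly the one the paper itself invokes later (Section~\ref{section: new theoretical results about discrepancy measure}, citing \cite[Appendix~1]{de2024training}), and the paper defers the proof of Theorem~\ref{thm:continuity} to \cite{de2024training,de2024robust} rather than giving it inline. Your use of Theorem~\ref{thm:uniformity} to identify $\tfrac{1}{K+1}$ with $\int h_n\,\tilde p$ is a clean way to avoid the change of variables, and the pointwise bound $0\le h_n\le 1$ finishes it exactly as one would expect.
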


\begin{theorem}[Identifiability]\label{thm:identifiability}
Let $p,\tilde{p}$ be pdfs of univariate real r.v.s. If the rank statistic $A_{K}$ in \eqref{eq: rank statistic} is uniformly distributed on $\{0,\ldots,K\}$ for \emph{every} $K\in \mathbb{N}$, then $p=\tilde{p}$ almost everywhere.
\end{theorem}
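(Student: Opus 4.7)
The strategy is to reduce the identifiability statement to the uniqueness theorem of the Hausdorff moment problem on $[0,1]$, via a probability-integral-transform representation of $\mathbb{Q}_K$.

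First, I would derive a closed integral form for $\mathbb{Q}_K(n)$. Conditional on the reference sample $y$, the indicators $\mathbf{1}\{\tilde y_k\le y\}$ are i.i.d.\ Bernoulli with parameter $\tilde F(y)=\int_{-\infty}^{y}\tilde p$, so $A_K\mid y\sim\mathrm{Binomial}(K,\tilde F(y))$. Marginalizing over $y\sim p$ and changing variables $u=\tilde F(y)$ yields
\[
\mathbb{Q}_K(n)
\;=\;\binom{K}{n}\int_{\mathbb{R}}\tilde F(y)^n\bigl(1-\tilde F(y)\bigr)^{K-n}p(y)\,dy
\;=\;\binom{K}{n}\int_{0}^{1} u^n(1-u)^{K-n}\,d\mu(u),
\]
where $\mu:=\tilde F_{\#}p$ denotes the pushforward of the law of $Y\sim p$ under $\tilde F$, regarded as a Borel probability measure on $[0,1]$. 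This recasts the family of uniformity constraints as a family of Bernstein-integral constraints on the single unknown measure $\mu$.

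Next, I would exploit the hypothesis $\mathbb{Q}_K(n)=1/(K+1)$ for every $K$ and every $0\le n\le K$. Specializing to $n=K$ collapses the Bernstein identity to
\[
\int_{0}^{1} u^K\,d\mu(u)\;=\;\frac{1}{K+1}\qquad\text{for every }K\in\mathbb{N},
\]
which is exactly the moment sequence of Lebesgue measure on $[0,1]$. By the uniqueness theorem for the Hausdorff moment problem on a compact interval, $\mu$ must coincide with Lebesgue measure; equivalently $\tilde F(Y)\sim\mathrm{Uniform}[0,1]$ whenever $Y\sim p$.

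Finally, I would translate the pushforward identity $\tilde F_{\#}p=\mathrm{Leb}_{[0,1]}$ back to $p=\tilde p$ almost everywhere. Because $\tilde F$ is continuous and non-decreasing, $\{Y\le y\}\subseteq\{\tilde F(Y)\le\tilde F(y)\}$, and evaluating both sides gives $\int_{-\infty}^{y}p\le\tilde F(y)=\int_{-\infty}^{y}\tilde p$. For the reverse inequality I would separately handle the flat regions of $\tilde F$: on any maximal interval where $\tilde F\equiv c$ (equivalently $\tilde p=0$ a.e.), uniformity of $\tilde F(Y)$ forbids $\mu$ from having an atom at $c$, which forces $p=0$ a.e.\ on that interval; on the complement, strict monotonicity of $\tilde F$ yields $\{\tilde F(Y)\le\tilde F(y)\}\subseteq\{Y\le y\}$ up to a $p$-null set. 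The two CDFs therefore agree at every $y$, and hence $p=\tilde p$ almost everywhere. The main obstacle I expect is precisely this last step—passing from equality of pushforward measures to equality of densities when $\tilde F$ fails to be strictly increasing—whereas the Bernstein-to-moment reduction and the appeal to Hausdorff's uniqueness theorem are essentially routine once the binomial representation is available.
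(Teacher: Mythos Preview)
The paper does not itself prove this theorem; it is imported from \cite{de2024training,de2024robust}, so there is no in-paper argument to compare against. Your approach is correct and efficient: the pushforward representation $\mathbb{Q}_K(n)=\binom{K}{n}\int_0^1 u^n(1-u)^{K-n}\,d\mu(u)$ with $\mu=\tilde F_{\#}p$ is the right reduction, and specializing to $n=K$ cleanly extracts the moment sequence $\int_0^1 u^K\,d\mu=1/(K+1)$, whence $\mu=\mathrm{Leb}_{[0,1]}$ by Hausdorff determinacy on a compact interval.

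Your diagnosis of the only delicate step is accurate, and it can be closed with less case analysis than you anticipate. For each $u\in(0,1)$ set $b(u)=\sup\{y:\tilde F(y)\le u\}$; continuity of $\tilde F$ gives $\{\tilde F(Y)\le u\}=\{Y\le b(u)\}$ identically and $\tilde F(b(u))=u$, so $F(b(u))=P\bigl(\tilde F(Y)\le u\bigr)=u=\tilde F(b(u))$ for every $u$. This already yields $F=\tilde F$ at every point outside the half-open flat intervals $[a_i,b_i)$ of $\tilde F$; on each such interval your observation $P(Y\in[a_i,b_i])\le\mu(\{c_i\})=0$ makes $F$ constant there and equal to $\tilde F(b_i)$ at the right endpoint, so continuity finishes the identification $F\equiv\tilde F$.

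As a side remark, the paper's Bernstein machinery supplies an alternative route under the extra hypotheses $p,\tilde p\in C(\mathbb{R})$ and $\tilde p>0$: if every $\mathbb{Q}_K$ is uniform, Lemma~\ref{lemma:dual_basis_sum} gives $p_K(x)=\tilde p(x)\cdot\tfrac{1}{K+1}\sum_m\tilde b_{m,K}(\tilde F(x))=\tilde p(x)$ for all $K$, and Theorem~\ref{theorem:general_forward_pushforward_expanded} then yields $p=\lim_K p_K=\tilde p$. Your moment argument is strictly more general, requiring neither continuity nor strict positivity of $\tilde p$.
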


Finally, when $\tilde p=\tilde p_\theta$ depends smoothly on a parameter vector ~$\theta$, one can show (under mild regularity assumptions) that $\theta\mapsto d_K(p,\tilde p_\theta)$ is continuous and differentiable, making it suitable for gradient‐based optimization (see Theorem 4 in \cite{de2024robust}). For full proofs and additional remarks, see \cite{de2024training,de2024robust}.

\subsection{A surrogate for ISL optimization}


Directly minimizing the divergence $d_K\bigl(p,\tilde p_\theta\bigr)$ with respect to the generator parameters $\theta$ is normally not feasible: the pmf $\mathbb{Q}_{K}$ has to be approximated empirically and its dependence on $\theta$ is unknown. To overcome this difficulty, \cite{de2024training} introduced a carefully designed surrogate loss that (i) closely tracks $d_K$, and (ii) admits gradient optimization via standard backpropagation. This surrogate is constructing by approximating the pmf of $\mathbb{Q}_{K}$ using  sigmoidal functions and a Gaussian kernel density estimator. In practice, training with the surrogate yields virtually identical performance to optimizing the true ISL, while remaining fully likelihood‐free and amenable to efficient stochastic optimization. For full details of the surrogate derivation, implementation, and bias‐variance trade‐offs, see \cite[Section 2.3]{de2024robust}.

\section{The dual-Invariant statistical loss} \label{section: new theoretical results about discrepancy measure}

By interchanging the roles of the data distribution \(p\) and the model distribution \(\tilde p\) in the ISL framework, we obtain a \emph{dual} objective that remains likelihood-free, but crucially becomes convex in the model pdf $\tilde{p}$.  

\subsection{Continuity and convexity of $d_{K}(p,\tilde{p})$}

Unlike most classical discrepancies, this rank‐based measure is \emph{weakly continuous}: if \(p_n\overset{w}{\longrightarrow} p\) weakly, then \(\lim_{n\to\infty}d_K(p_n,\tilde p)= d_K(p,\tilde p)\) (Theorem~\ref{thm:rankWeakConv} below).  In contrast, the Kullback–Leibler divergence does not enjoy weak continuity, and the Wasserstein and Energy distances require uniformly bounded moments to guarantee even this level of stability \cite[Section 5]{huster2021pareto}.  Finally, we show that \(d_K\) is convex in its first argument (Theorem~\ref{Theorem: Convexity in the First Argument}), yielding a tractable convex optimization problem in the space of densities.

A key insight in the continuity proof is that each probability mass $\mathbb{Q}_{K}(n)$, for $n=0,\ldots, K$, can be written as a continuous mixture of the binomial pmf's. Indeed, drawing \(K\) i.i.d.\ samples \(\tilde y_i\sim\tilde p\) and counting how many of them fall below \(y\) yields a \(\mathrm{Binomial}(K,\tilde F(y))\) distribution.  Since \(y\) itself is drawn from \(p\), one obtains \cite[Appendix 1]{de2024training}
\[
\begin{aligned}
\mathbb{Q}_K(n)&:=\int_{\mathbb{R}}h_{n}(y)\,p(y)\,\mathrm{d}y\,,\\
\text{where, }&\; h_{n}(y):=\binom{K}{n}\,\tilde F(y)^{n}\bigl(1-\tilde F(y)\bigr)^{K-n}\,,\quad n=0,1,\dots,K\,.
\end{aligned}
\]
and the bounded, continuous functions \(h_n\) then ensure weak continuity of \(\mathbb{Q}_K\) and hence of \(d_K\).  We formalize this argument in the following theorem.




\begin{theorem}[Continuity under weak convergence]
\label{thm:rankWeakConv}
Let $(p_n)_{n\ge1}$ be a sequence of pdfs on~$\mathbb{R}$ converging weakly to a density~$p$, and let $\tilde p$ be a fixed reference density with cdf~$\tilde F$.  For each $K\in\mathbb{N}$, define
\begin{align*}
\mathbb Q_K^{(n)}(m)
:=\int_{\mathbb{R}}\binom Km\tilde F(y)^m\bigl(1-\tilde F(y)\bigr)^{K-m}\,p_n(y)\,\mathrm{d} y,
\end{align*}
for $m=0,\dots,K$.  Then
\begin{enumerate}[label=(\roman*)]
  \item (\emph{Pointwise convergence})  
    $\;\lim_{n\to \infty}\mathbb Q_K^{(n)}(m)\;=\;\mathbb Q_K(m)\;$ for each $\;m=0,\dots,K$.
  \item (\emph{Continuity of $d_K$}) $\;\lim_{n\to \infty}d_K(p_n,\tilde p)\;=\;d_K(p,\tilde p)\,$.
\end{enumerate}
\end{theorem}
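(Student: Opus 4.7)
The plan is to recognise each rank probability $\mathbb Q_K^{(n)}(m)$ as the expectation of a bounded continuous test function against $p_n$ and then invoke the Portmanteau characterisation of weak convergence. Concretely, I would rewrite
\[
\mathbb Q_K^{(n)}(m) \;=\; \int_{\mathbb{R}} h_m(y)\,p_n(y)\,\mathrm{d} y,
\qquad
h_m(y) := \binom{K}{m}\,\tilde F(y)^{m}\bigl(1-\tilde F(y)\bigr)^{K-m},
\]
and verify that $h_m$ satisfies the two hypotheses required by Portmanteau: (a) boundedness, trivially by $\binom{K}{m}$ since $\tilde F$ takes values in $[0,1]$; and (b) continuity, which follows because $\tilde p$ is a pdf, so $\tilde F(y)=\int_{-\infty}^y \tilde p(t)\,\mathrm{d} t$ is absolutely continuous in $y$ and $h_m$ is a polynomial in $\tilde F$.

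With those two facts in hand, assertion (i) follows in one line: the weak convergence of the probability measures associated with $p_n$ to that associated with $p$ gives, by Portmanteau, $\int h_m\,p_n\,\mathrm{d} y\to \int h_m\,p\,\mathrm{d} y$ for each $m\in\{0,\dots,K\}$, i.e.\ $\mathbb Q_K^{(n)}(m)\to\mathbb Q_K(m)$.

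Assertion (ii) is then an immediate consequence of (i). Since
\[
d_K(p_n,\tilde p)
\;=\;\frac{1}{K+1}\sum_{m=0}^{K}\Bigl|\tfrac{1}{K+1}-\mathbb Q_K^{(n)}(m)\Bigr|
\]
is a \emph{finite} sum of continuous functions (the absolute value) of the quantities shown to converge in (i), pointwise limits commute with the sum and with $|\cdot|$, yielding $d_K(p_n,\tilde p)\to d_K(p,\tilde p)$.

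There is no serious technical obstacle here; the only place that requires care is the continuity of $\tilde F$, which is precisely why we must use that $\tilde p$ is a Lebesgue density rather than an arbitrary probability measure—otherwise atoms in $\tilde p$ would introduce jumps in $\tilde F$ and one would have to fall back on the stronger form of Portmanteau for functions continuous off a $P$-null set. Since $\tilde p$ is assumed to be a pdf, this issue does not arise, and the proof reduces to the clean two-step argument above.
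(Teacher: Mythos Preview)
Your proposal is correct and follows essentially the same approach as the paper: define $h_m(y)=\binom{K}{m}\tilde F(y)^m(1-\tilde F(y))^{K-m}$, observe that it is bounded and continuous, apply the definition of weak convergence to obtain (i), and then use the continuity of $|\cdot|$ together with the finiteness of the sum to deduce (ii). Your remark that the continuity of $\tilde F$ relies on $\tilde p$ being a density (rather than an arbitrary probability measure with atoms) is a nice additional observation that the paper does not make explicit.
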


\begin{proof}
    See Appendix \ref{appendix: proofs of new theoretical results about discrepancy measure}. 
\end{proof}

Since strong convergence implies weak convergence, the previous theorem remains applicable when the sequence $\{p_n\}_{n\geq 1}$ converges to $p$ in the $L^1$ norm. We can also establish that the ISL divergence is continuous with respect to its second argument $\tilde{p}$; a detailed proof can be found in Appendix~\ref{appendix: proofs of new theoretical results about discrepancy measure}, Theorem~\ref{thm:continuity-second-arg}.

We now see that the discrepancy $d_{K}(p,\tilde{p})$ is indeed convex in its first argument.
\begin{theorem}[Convexity] \label{Theorem: Convexity in the First Argument}
    For any probability distributions $p_1, p_2$ and $\tilde{p}$ on $\mathbb{R}$, and for any $\lambda \in [0, 1]$, the discrepancy $d_{K}$ satisfies
    \begin{align*}
    d_{K}(\lambda p_1 + (1 - \lambda) p_2, \, \tilde{p}) \leq \lambda \, d_{K}(p_1, \, \tilde{p}) + (1 - \lambda)\, d_{K}(p_2, \, \tilde{p})
    \end{align*}
\end{theorem}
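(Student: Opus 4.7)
The key observation is that, as a functional of its first argument $p$, the pmf $\mathbb{Q}_K$ is \emph{affine}. Indeed, from the integral representation established in Section~3.1,
\[
\mathbb{Q}_K(n) \;=\; \int_{\mathbb{R}} h_n(y)\, p(y)\, dy, \qquad h_n(y) = \binom{K}{n}\tilde F(y)^n\bigl(1-\tilde F(y)\bigr)^{K-n},
\]
the kernels $h_n$ depend only on $\tilde p$, so by linearity of the integral the map $p \mapsto \mathbb{Q}_K(n)$ is linear for each fixed $n \in \{0,\dots,K\}$. The plan is therefore to reduce the stated convexity inequality to the triangle inequality applied termwise in $n$.

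Concretely, I would write $\mathbb{Q}_K^{p_i}(n)$ for the pmf obtained from the pair $(p_i,\tilde p)$. For $p = \lambda p_1 + (1-\lambda) p_2$, linearity of the integral yields
\[
\mathbb{Q}_K^{p}(n) \;=\; \lambda\,\mathbb{Q}_K^{p_1}(n) \;+\; (1-\lambda)\,\mathbb{Q}_K^{p_2}(n).
\]
Using the trivial identity $\tfrac{1}{K+1} = \lambda\,\tfrac{1}{K+1} + (1-\lambda)\,\tfrac{1}{K+1}$, each residual can be rewritten as
\[
\tfrac{1}{K+1} - \mathbb{Q}_K^{p}(n) \;=\; \lambda\bigl(\tfrac{1}{K+1} - \mathbb{Q}_K^{p_1}(n)\bigr) + (1-\lambda)\bigl(\tfrac{1}{K+1} - \mathbb{Q}_K^{p_2}(n)\bigr).
\]
Taking absolute values, applying the triangle inequality termwise, summing over $n$ and multiplying by $1/(K+1)$ then yields the claimed bound.

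I do not anticipate a genuine obstacle: the conclusion is simply the composition of the convex function $|\cdot|$ with an affine map (namely $p \mapsto \tfrac{1}{K+1} - \mathbb{Q}_K^{p}(n)$), followed by a nonnegative summation, both of which preserve convexity. The only conceptual point worth highlighting is that this affine structure holds in the \emph{first} argument only, since $\tilde p$ enters nonlinearly through the kernels $h_n$ via $\tilde F$. This asymmetry is precisely why the dual-ISL construction—swapping the roles of $p$ and $\tilde p$—is the key to obtaining a tractable convex optimization problem in the space of model densities.
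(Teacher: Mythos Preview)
Your proposal is correct and follows essentially the same approach as the paper: both arguments exploit the linearity of $p\mapsto\mathbb{Q}_K(n)$ (via the integral representation with kernels depending only on $\tilde p$), then apply the triangle inequality termwise and sum over $n$. Your additional remark connecting the asymmetry in the two arguments to the motivation for dual-ISL is a nice touch that the paper's proof does not include.
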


\begin{proof}
    See Appendix~\ref{appendix: proofs of new theoretical results about discrepancy measure}.
\end{proof}

\subsection{A dual loss function}


Because \(d_K\) is convex in its first argument, we can obtain a new training criterion by swapping the data and model distributions.  Specifically, let \(\tilde y\sim \tilde p\) be a simulated sample from our generator and let \(y_{1:K}\overset{\mathrm{i.i.d.}}{\sim} p\) be \(K\) independent real data points.  We then form the rank statistic as
\begin{align*}
\tilde{A}_K \;:=\; 
\abs{\Bigl\{y \in \{y_{k}\}_{k=1}^{K} : y \leq \tilde{y}\Bigr\}}
\end{align*}
whose pmf \(\tilde{\mathbb{Q}}_K(n)=\mathbb{P}\left(\tilde{A}_K=n\right)\) remains uniform if and only if \(\tilde p=p\).  All of our previous guarantees—continuity under small \(L^1\) perturbations (Theorem~\ref{thm:continuity}) and identifiability when \(\mathbb{Q}_K\) is exactly uniform for every \(K\) (Theorem~\ref{thm:identifiability})—carry over unchanged. The dual‐ISL discrepancy
\[
d_K(\tilde p,p)
=\frac1{K+1}\sum_{n=0}^K\left|\tilde{\mathbb{Q}}_K(n)-\dfrac{1}{K+1}\right|
\]
therefore yields a convex, likelihood‐free training objective in the space of generator densities. The pseudocode for this method is provided in the supplementary material (see Algorithm~\ref{alg:dual-isl}).

\subsection{Dual-ISL vs. ISL, GANs \& Diffusion on 1D distributions}

We start considering the same experimental setup as \cite{zaheer2017gan, de2024training}. We evaluate dual‐ISL on six benchmark targets using \(N=1000\) i.i.d.\ samples drawn from each distribution.  The first three are standard univariate pdfs, and the latter three are mixtures with equal mixing weights. Model$_1$ combines Gaussians $\normdist{5}{2}$ and $\normdist{-1}{1}$; Model$_2$ combines Gaussians $\normdist{5}{2}$, $\normdist{-1}{1}$, and $\normdist{-10}{3}$; and Model$_3$ combines a Gaussian $\normdist{-5}{2}$ with a Pareto$(5,1)$ distribution.

We train a 4-layer MLP generator (7–13–7–1 units, ELU activations) with $\epsilon\sim\mathcal{N}(0,1)$ input noise for $10^4$ epochs with Adam (learning rate $10^{-2}$), and compare Dual-ISL, ISL, GAN \cite{goodfellow2014generative}, WGAN \cite{arjovsky2017wasserstein}, MMD-GAN \cite{li2017mmd}, and a diffusion baseline using the Kolmogorov–Smirnov distance (KSD) metric (Table \ref{paper Learning1D_table}). Experimental details are provided in Supplementary Material Section \ref{appendix: 1D experiments}.

\begin{table}[h]
\centering
\small                     
\setlength{\tabcolsep}{5pt}     
\rowcolors{2}{gray!10}{white}
\resizebox{\linewidth}{!}{       
  \begin{tabular}{
    l
    c
    c
    c
    c
    c
    c
  }
  \toprule
  \textbf{Target} 
    & \textbf{Dual‐ISL} 
    & \textbf{ISL} 
    & \textbf{GAN} 
    & \textbf{WGAN} 
    & \textbf{MMD‐GAN} 
    & \textbf{Diffusion}\\
  \midrule
  \(\mathcal{N}(4,2)\)  
    & \(\mathbf{0.018 \pm 0.005}\)
    & \(0.020 \pm 0.003\) 
    & \(0.018 \pm 0.003\) 
    & \(0.024 \pm 0.017\)
    & \(0.042 \pm 0.026\) 
    & \(0.020 \pm 0.002\)\\
  \(\mathcal{U}(-2,2)\) 
    & \(0.034 \pm 0.015\) 
    & \(0.021 \pm 0.004\) 
    & \(0.049 \pm 0.032\) 
    & \(0.064 \pm 0.062\) 
    & \(0.104 \pm 0.060\) 
    & \(\mathbf{0.013 \pm 0.002}\)\\
  \(\mathrm{Cauchy}(1,2)\)    
    & \(0.016 \pm 0.003\) 
    & \(\mathbf{0.013 \pm 0.002}\) 
    & \(0.013 \pm 0.002\) 
    & \(0.052 \pm 0.055\) 
    & \(0.031 \pm 0.008\) 
    & \(0.114 \pm 0.034\)\\
  \(\mathrm{Pareto}(1,1)\)    
    & \(\mathbf{0.090 \pm 0.080}\) 
    & \(0.198 \pm 0.148\) 
    & \(0.117 \pm 0.041\) 
    & \(0.106 \pm 0.043\) 
    & \(0.158 \pm 0.168\) 
    & \(0.209 \pm 0.011\)\\
  \(\mathrm{Mixture}_1\)       
    & \(0.016 \pm 0.004\) 
    & \(\mathbf{0.016 \pm 0.002}\) 
    & \(0.017 \pm 0.004\) 
    & \(0.080 \pm 0.069\) 
    & \(0.054 \pm 0.033\) 
    & \(0.031 \pm 0.031\)\\
  \(\mathrm{Mixture}_2\)        
    & \(\mathbf{0.016 \pm 0.002}\) 
    & \(0.017 \pm 0.003\) 
    & \(0.026 \pm 0.014\) 
    & \(0.031 \pm 0.023\) 
    & \(0.042 \pm 0.061\) 
    & \(0.050 \pm 0.005\)\\
  \(\mathrm{Mixture}_3\)        
    & \(\mathbf{0.170 \pm 0.019}\) 
    & \(0.171 \pm 0.012\) 
    & \(0.190 \pm 0.094\) 
    & \(0.216 \pm 0.040\) 
    & \(0.187 \pm 0.108\) 
    & \(0.173 \pm 0.024\)\\
  \bottomrule
  \end{tabular}
}
\caption{\small KSD  over 10 runs for Dual‐ISL and baselines.  
Setup: \(K=10\), 1000 epochs, \(N=1000\).}
\label{paper Learning1D_table}
\end{table}


The convexity of the dual-ISL objective not only accelerates convergence—yielding faster, smoother, and more stable training curves compared to classical ISL (see Figure \ref{figure: convergence rate in epochs dual vs isl} in Appendix)—but also enhances mode coverage on challenging mixtures.  As shown in Figure \ref{figure:dual-isl vs isl vs mmd-gan mixture normal pareto}, both dual-ISL and classical ISL successfully avoid the mode collapse exhibited by MMD-GAN, with dual-ISL most accurately capturing the heavy tail of the Pareto component.  


\begin{figure}[htbp] 
   \centering
   \begin{subfigure}{0.32\textwidth}
     \centering
     \includegraphics[width=\linewidth]{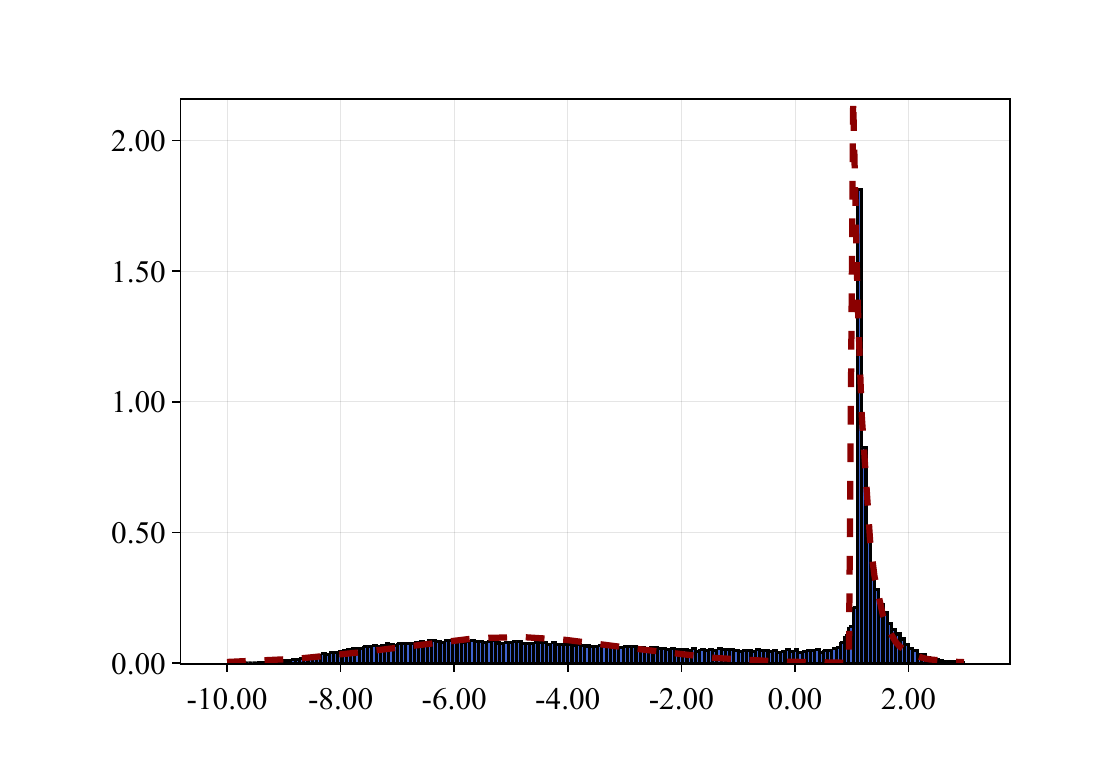}
     \caption{dual-ISL}
     \label{figure:dual-isl vs isl vs mmd-gan mixture normal pareto:dual-ISL}
   \end{subfigure}
   \hfill
   \begin{subfigure}{0.32\textwidth}
     \centering
     \includegraphics[width=\linewidth]{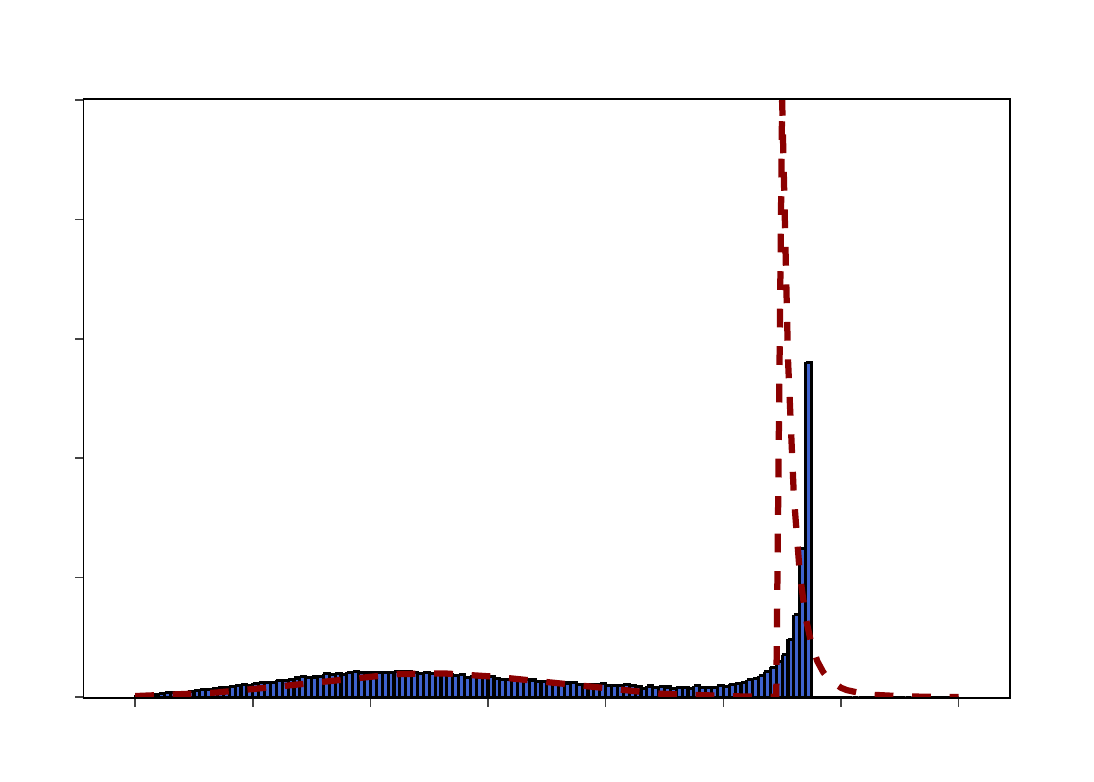}
     \caption{ISL}
     \label{figure:dual-isl vs isl vs mmd-gan mixture normal pareto:isl}
   \end{subfigure}
   \hfill
   \begin{subfigure}{0.32\textwidth}
     \centering
     \includegraphics[width=\linewidth]{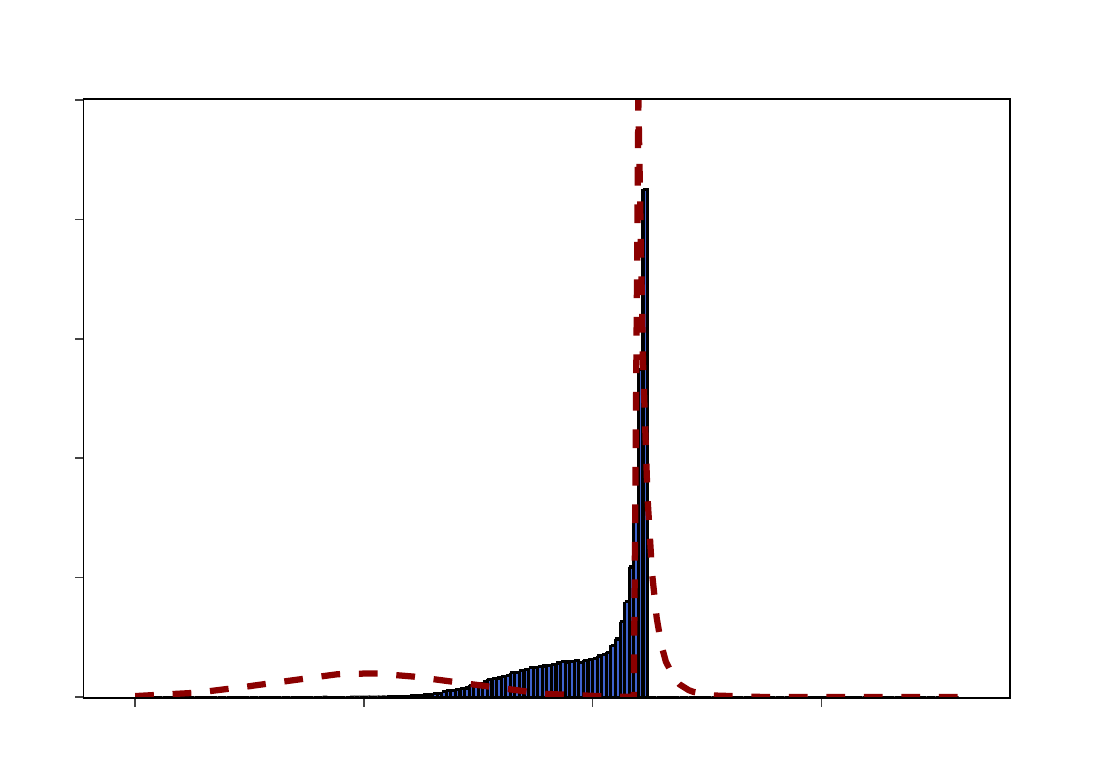}
     \caption{MMD-GAN}
     \label{figure:dual-isl vs isl vs mmd-gan mixture normal pareto:mmd-gan}
   \end{subfigure}
   \caption{\small Comparison of dual-ISL, standard ISL, and MMD-GAN for modeling a mixture of Pareto and Normal distributions. Subfigure \ref{figure:dual-isl vs isl vs mmd-gan mixture normal pareto:dual-ISL} displays the dual-ISL results, Subfigure \ref{figure:dual-isl vs isl vs mmd-gan mixture normal pareto:isl} illustrates the performance of the standard ISL approach, and Subfigure \ref{figure:dual-isl vs isl vs mmd-gan mixture normal pareto:mmd-gan} showcases the outcomes obtained via MMD-GAN. Further comparisons—including diffusion models and additional target distributions—are provided in Appendix~\ref{Supplementary experiments}}
   \label{figure:dual-isl vs isl vs mmd-gan mixture normal pareto}
\end{figure}

Additional experiments in the supplementary material provide detailed runtime benchmarks, demonstrating the computational advantages of dual‐ISL over the standard ISL formulation. In Appendix \ref{Moment-Agnostic Optimal Transport via Monotonicity-Penalized ISL}, we also propose a new ISL-based method with a monotonicity penalty that guarantees recovery of the optimal-transport map even for distributions without finite moments (e.g., heavy-tailed), an advantage over the \(p\)-Wasserstein distance which requires finite \(p\)th moments.

\section{An \(L^2\)-projection view of $d_{K}$}
\label{section: A rank-based / binomial mapping}



We adopt a projection‐based view of ISL. From this point on, we treat $p$ and $\tilde p$ interchangeably—so that, with a slight abuse of notation, our framework covers both standard ISL ($q=p/\tilde p$) and dual‐ISL ($q=\tilde p/p$).  Specifically, we show that the discrete pmf $\mathbb{Q}_K(n)$ coincides with the $L^2$-projection coefficients of the density ratio $q=p(x)/\tilde p(x)$ onto the degree-$K$ Bernstein basis $\{b_{n,K}\}_{n=0}^K$ .  In this light, ISL becomes a purely likelihood-free density-ratio divergence—comparing projection coefficients rather than intractable likelihoods —and we conclude by deriving sharp convergence rate bounds.

\subsection{Projection interpretation}

To reveal the underlying geometry of $d_{K}(\cdot, \cdot)$, we define a linear operator that collects the $K+1$ probabilities $\mathbb{Q}_{K}(0), \ldots, \mathbb{Q}_{K}(K)$, into a single vector. We then show that each entry $\mathbb{Q}_{K}(n)$ is precisely the $L^2$ inner product between a density ratio and its corresponding Bernstein basis function.


\begin{definition}[Binomial mapping]
\label{def:PhiK}
Let \(p,\tilde p\in C(\mathbb{R})\) be two continuous pdfs with cdfs \(F\) and \(\tilde F\).  For any integer \(K\ge1\), define the operator, 
\begin{align*}
\Phi_{K}(p,\tilde p)
:=
\bigl(\mathbb{Q}_{K}(0),\,\mathbb{Q}_{K}(1),\,\dots,\,\mathbb{Q}_{K}(K)\bigr)
\;\in\;\mathbb{R}^{K+1}.
\end{align*}
\end{definition}


It is straightforward from the integral representation that, for each fixed $\tilde p$, the map $p \;\mapsto\;\Phi_K(p,\tilde p)$ is linear and continuous under mild regularity conditions on $p$ and $\tilde{p}$.  A full statement of these and related properties appears in Theorem \ref{thm:PhiK_Properties}.


The next result shows that \(\Phi_K\) admits a Riesz representation (see \cite[Theorem~4.11]{brezis2011functional}), expressing each probability mass $\mathbb{Q}_{K}$ as an \(L^2\) inner product with a Bernstein basis function.  Let the \(n\)th Bernstein polynomial of degree \(K\) be defined as
\begin{align*}
b_{n,K}(t)
\;:=\;
\binom{K}{n}\,t^n\,(1-t)^{K-n},
\qquad
t\in[0,1].
\end{align*}



\begin{theorem}(Riesz representation of $\Phi_{K}$)\label{theorem:Riesz Representation}
Let \(\tilde p\) be a fixed continuous density on \(\mathbb{R}\) with cdf \(\tilde F\).  Then for any \(K\ge0\), the operator
\(\Phi_K(\cdot, \tilde p)\) mapping \(p\mapsto(\mathbb{Q}_K(0),\dots,\mathbb{Q}_K(K))\) satisfies
\[
\mathbb{Q}_K(n)
=\int_{\mathbb{R}}b_{n,K}\bigl(\tilde F(x)\bigr)\,p(x)\,\mathrm{d}x
=\bigl\langle\,b_{n,K}\circ\tilde F,\;p\bigr\rangle_{L^2(\mathbb{R})},
\qquad n=0,\dots,K.
\]
Moreover, if $\tilde p(x)>0$ for all $x\in\mathbb{R}$, then defining the density ratio \(q(x)=p(x)/\tilde p(x)\), we get
\[
\mathbb{Q}_K(n)
=\int_{0}^{1} b_{n,K}(t)\,q\bigl(\tilde F^{-1}(t)\bigr)\,\mathrm{d}t
=\bigl\langle\,b_{n,K},\,q\circ\tilde F^{-1}\bigr\rangle_{L^2([0,1])}, \quad n=0,\ldots,K.
\]
\end{theorem}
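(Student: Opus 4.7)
The plan is to combine the mixture representation of $\mathbb{Q}_K(n)$ already derived in Section~\ref{section: new theoretical results about discrepancy measure} with a straightforward change of variables. Both identities are essentially computational once the right algebraic identification is made, so I do not expect any substantive obstacle.

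First I would recall the representation
\[
\mathbb{Q}_K(n)=\int_{\mathbb{R}}\binom{K}{n}\tilde F(y)^n\bigl(1-\tilde F(y)\bigr)^{K-n}\,p(y)\,\mathrm{d}y,
\]
which was established in the discussion preceding Theorem~\ref{thm:rankWeakConv} via the tower-rule argument that conditions on $y\sim p$ and then invokes the $\mathrm{Binomial}(K,\tilde F(y))$ law of the rank statistic. Comparing with the definition $b_{n,K}(t)=\binom{K}{n}t^n(1-t)^{K-n}$, the integrand is exactly $(b_{n,K}\circ\tilde F)(y)$, and the first displayed equation in the theorem follows immediately. Since each $b_{n,K}$ is uniformly bounded on $[0,1]$ (in fact by $1$) and $p\in L^1(\mathbb{R})$, the integral is absolutely convergent, and we may legitimately interpret it as the $L^2(\mathbb{R})$ pairing $\langle b_{n,K}\circ\tilde F,\,p\rangle$.

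For the second identity, the strict positivity hypothesis $\tilde p(x)>0$ for every $x\in\mathbb{R}$ is used precisely to guarantee that the continuous cdf $\tilde F:\mathbb{R}\to(0,1)$ is a strictly increasing $C^1$ bijection, so that $\tilde F^{-1}:(0,1)\to\mathbb{R}$ is well defined and differentiable. Setting $t=\tilde F(x)$ gives $\mathrm{d}t=\tilde p(x)\,\mathrm{d}x$, and factoring $p(x)=q(x)\tilde p(x)$ yields
\[
\int_{\mathbb{R}} b_{n,K}\bigl(\tilde F(x)\bigr)\,p(x)\,\mathrm{d}x
=\int_{\mathbb{R}} b_{n,K}\bigl(\tilde F(x)\bigr)\,q(x)\,\tilde p(x)\,\mathrm{d}x
=\int_0^1 b_{n,K}(t)\,q\bigl(\tilde F^{-1}(t)\bigr)\,\mathrm{d}t,
\]
which is the second displayed identity, rewritten as an $L^2([0,1])$ inner product against the pulled-back ratio $q\circ\tilde F^{-1}$.

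The only subtlety to address is the validity of the change of variables, and this is resolved exactly by the strict positivity assumption on $\tilde p$; no moment or $L^2$ integrability of $p$ or $q$ is needed for the pairings to be well defined, because the Bernstein factor is bounded. If one wished to relax the hypothesis to $\tilde p>0$ almost everywhere, the same argument would go through on the support of $\tilde p$, with the boundary contributions vanishing by continuity of the $b_{n,K}$. As stated, strict positivity yields the cleanest global bijection and the most transparent proof.
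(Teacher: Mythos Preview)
Your proposal is correct and follows essentially the same route as the paper's proof: both identify the integrand in the mixture representation of $\mathbb{Q}_K(n)$ as $(b_{n,K}\circ\tilde F)(y)$ to obtain the first identity, then use strict positivity of $\tilde p$ to ensure $\tilde F$ is a $C^1$ bijection and perform the change of variables $t=\tilde F(x)$, $\mathrm{d}t=\tilde p(x)\,\mathrm{d}x$ together with the factorization $p=q\,\tilde p$ to obtain the second. Your treatment is slightly more careful about absolute convergence and the role of the positivity hypothesis, but the argument is the same.
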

\begin{proof}
    See Appendix \ref{appendix:proofs section A rank-based / binomial mapping}.
\end{proof}

Theorem~\ref{theorem:Riesz Representation} implies that, if we define $\tilde{q}(t) = q\bigl(\tilde F^{-1}(t)\bigr)$ for $t\in[0,1]$ then each coefficient $\mathbb Q_K(n)$ is exactly the \(L^2([0,1])\)-projection of \(\tilde{q}\) onto the Bernstein polynomial \(b_{n,K}\).  Equivalently, the vector $\{\mathbb{Q}_K(n)\}_{n=0}^{K}$ collects the best mean‐square approximation coefficients of the  \emph{push-forward density ratio} \(q\circ\tilde F^{-1}\) in the degree-\(K\) Bernstein basis.


\begin{theorem}[Bernstein‐basis truncation for the density‐ratio]
\label{thm:bernstein_truncation_ratio}
Let \(p,\tilde p\in C(\mathbb{R})\) with \(\tilde p(x)>0\) for all $x\in \mathbb{R}$. Then \(\tilde{q}\in C([0,1])\) admits the Bernstein‐polynomial expansion
\begin{align*}
\tilde q(t)
\;=\;
\sum_{n=0}^{\infty}\alpha_{n}\,b_{n,K}(t),
\end{align*}
where \(\alpha=(\alpha_n)_{n\ge0}\) are the unique Bernstein‐basis coordinates of \(\tilde q\). Its degree-\(K\) truncation can be expressed as
\begin{align*}
\tilde{q}_{K}(t)
\;:=\;
\sum_{n=0}^{K}\alpha_{n}\,b_{n,K}(t) = \sum_{n=0}^{K} \mathbb{Q}_{K}(n) \, \tilde{b}_{n,K}(t),
\end{align*}
where \(\{\tilde b_{n,K}\}_{n=0}^K\) is the dual Bernstein basis \cite{jiittler1998dual}.
\end{theorem}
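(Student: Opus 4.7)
The strategy centers on interpreting $\tilde q_K$ as the $L^2([0,1])$-orthogonal projection of $\tilde q$ onto the finite-dimensional space $\mathcal{P}_K := \mathrm{span}\{b_{n,K}\}_{n=0}^K$, and then exploiting the biorthogonality that defines the dual Bernstein basis to rewrite that projection in two equivalent ways. First I would verify that $\tilde q\in C([0,1])\subset L^2([0,1])$ under the hypothesis $\tilde p>0$, so that orthogonal projection onto $\mathcal{P}_K$ is well defined and unique, and that the coefficients $\alpha_n$ in the Bernstein expansion of $\tilde q_K$ are uniquely determined since $\{b_{n,K}\}_{n=0}^K$ is a basis of $\mathcal{P}_K$ (this gives the first equality essentially by definition of the truncation). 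The "infinite expansion" statement I would read as a formal/limit device saying that as $K\to\infty$ the projections $\tilde q_K$ recover $\tilde q$ (e.g., in $L^2$ and uniformly, via the Weierstrass–Bernstein theorem applied to the continuous function $\tilde q$), so that truncation at level $K$ picks out a well-defined polynomial.

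Next I would recall the defining property of the dual Bernstein basis $\{\tilde b_{n,K}\}_{n=0}^K$, namely the biorthogonality relation
\[
\langle b_{m,K},\,\tilde b_{n,K}\rangle_{L^2([0,1])} \;=\; \delta_{mn}, \qquad 0\le m,n\le K,
\]
which implies that $\{\tilde b_{n,K}\}_{n=0}^K$ is itself a basis of $\mathcal{P}_K$ (see \cite{jiittler1998dual}). Hence I can expand the projection $\tilde q_K\in\mathcal{P}_K$ in the dual basis as $\tilde q_K(t)=\sum_{n=0}^K\beta_n\,\tilde b_{n,K}(t)$ for unique scalars $\beta_n$. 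Taking the $L^2$-inner product with $b_{m,K}$ and using biorthogonality collapses the sum to $\langle \tilde q_K,\,b_{m,K}\rangle_{L^2}=\beta_m$.

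To finish, I would identify $\beta_m$ with $\mathbb{Q}_K(m)$. Because $\tilde q_K$ is the orthogonal projection of $\tilde q$ onto $\mathcal{P}_K$ and $b_{m,K}\in\mathcal{P}_K$, one has
\[
\beta_m \;=\; \langle \tilde q_K,\,b_{m,K}\rangle_{L^2([0,1])} \;=\; \langle \tilde q,\,b_{m,K}\rangle_{L^2([0,1])},
\]
and the right-hand side is exactly $\mathbb{Q}_K(m)$ by the second identity of Theorem~\ref{theorem:Riesz Representation}. Substituting $\beta_m=\mathbb{Q}_K(m)$ into the dual-basis expansion yields the claimed second equality $\tilde q_K(t)=\sum_{n=0}^K\mathbb{Q}_K(n)\,\tilde b_{n,K}(t)$.

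The main conceptual obstacle is not computational but definitional: making precise what the "infinite Bernstein expansion" means (since for fixed $K$ the polynomials $b_{n,K}$ only exist for $0\le n\le K$) and justifying in what topology $\tilde q_K\to\tilde q$; this is resolved by interpreting the statement as uniqueness of the projection coefficients at each level $K$ plus a Weierstrass-type convergence argument. Beyond that, the proof is essentially a linear-algebra exercise: the biorthogonality relation turns the change of basis from $\{b_{n,K}\}$ to $\{\tilde b_{n,K}\}$ into the one-line inner-product identification above, and the link to $\mathbb{Q}_K(n)$ is supplied verbatim by Theorem~\ref{theorem:Riesz Representation}.
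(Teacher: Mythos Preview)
Your proposal is correct and follows essentially the same route as the paper: both arguments verify $\tilde q\in C([0,1])$, interpret $\tilde q_K$ as the $L^2$-orthogonal projection of $\tilde q$ onto $\mathrm{span}\{b_{n,K}\}_{n=0}^K$, express that projection in the dual Bernstein basis via biorthogonality, and then identify the resulting coefficients $\langle \tilde q,\,b_{n,K}\rangle_{L^2([0,1])}$ with $\mathbb{Q}_K(n)$ using Theorem~\ref{theorem:Riesz Representation}. The paper constructs the dual basis explicitly through the inverse Gram matrix before reaching the same inner-product identity, whereas you invoke biorthogonality directly; this is a presentational rather than a mathematical difference, and your observation about the definitional ambiguity of the ``infinite Bernstein expansion'' matches the paper's own handling (it resolves it by citing the Schauder-basis property of $\{b_{n,n}\}_{n\ge0}$ in $C([0,1])$).
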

\begin{proof}
    See Proof in Appendix \ref{appendix:proofs section A rank-based / binomial mapping}.
\end{proof}

\begin{remark}
With \(\tilde p\) fixed, the map $\Phi_K: p \;\longmapsto\; \bigl(\mathbb Q_K(0),\dots,\mathbb Q_K(K)\bigr)$, cannot distinguish between any two target densities \(p_1, p_2\) whose pushed‐through ratios \(q_i\circ\tilde F^{-1}\) have the same degree-\(K\) Bernstein projections.  Equivalently,
\[
\Phi_K(p_1)=\Phi_K(p_2)
\quad\Longleftrightarrow\quad
\bigl\langle b_{n,K},\,q_1\circ\tilde F^{-1}-q_2\circ\tilde F^{-1}\bigr\rangle_{L^2([0,1])}
=0
\quad\forall\,n=0,\dots,K.
\]
Thus \(\Phi_K\) factors through the quotient of \(L^2([0,1])\) by the subspace orthogonal to \(\mathrm{span}\{b_{n,K}\}\), inducing a bijection onto its image.
\end{remark}

\subsection{Approximation error}


We now quantify the truncation error in the approximation ratio $\tilde{q}_{K}(t)$ is an estimate of $q(x)$ and it remains uniformly close to 1, with its sup-norm deviation bounded by the discrepancy \(d_K(p,\tilde p)\).  

\begin{theorem}\label{lemma: uniform boudn for q_K convergence to 1}
Let $p,\tilde{p}\in C(\mathbb{R})$ be pdfs. Then $q_K(x)$ satisfies
\begin{align*}
\|q_K - 1\|_{\infty} \leq (K+1)^{2}\, d_K(p,\tilde{p}).
\end{align*}
\end{theorem}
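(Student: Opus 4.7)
The plan is to leverage the dual Bernstein representation of $\tilde q_K$ from Theorem~\ref{thm:bernstein_truncation_ratio} and subtract off the corresponding identity at $p=\tilde p$. First, I would specialize to $p=\tilde p$: Theorem~\ref{thm:uniformity} then forces $\mathbb Q_K(n)=1/(K+1)$, while the $L^2$‐projection in Theorem~\ref{thm:bernstein_truncation_ratio} fixes the constant $\tilde q\equiv 1$. Substituting back yields the partition‐of‐unity–type identity
\[
1 \;=\; \sum_{n=0}^K \tfrac{1}{K+1}\,\tilde b_{n,K}(t),\qquad t\in[0,1],
\]
which plays the role of the Bernstein identity $\sum_n b_{n,K}(t)=1$ for the dual basis.

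Next, for general $p,\tilde p$, subtract this identity from the expansion of $\tilde q_K$ given by Theorem~\ref{thm:bernstein_truncation_ratio} to get
\[
q_K(x) - 1 \;=\; \sum_{n=0}^K \Bigl[\mathbb Q_K(n)-\tfrac{1}{K+1}\Bigr]\,\tilde b_{n,K}\bigl(\tilde F(x)\bigr).
\]
Taking $L^\infty$ norms, applying the triangle inequality, and recognizing from~\eqref{eq:dK} that $\sum_n\bigl|\mathbb Q_K(n)-1/(K+1)\bigr|=(K+1)\,d_K(p,\tilde p)$ gives
\[
\|q_K-1\|_\infty \;\le\; (K+1)\,d_K(p,\tilde p)\cdot \max_{n=0,\dots,K}\|\tilde b_{n,K}\|_{L^\infty([0,1])}.
\]

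The hard part will be the final dual-basis bound $\max_n\|\tilde b_{n,K}\|_\infty \le K+1$, which converts the $(K+1)\,d_K$ factor into the desired $(K+1)^2 d_K$. It is not immediate: the $\tilde b_{n,K}$ are sign-alternating, so although $\sum_n\tilde b_{n,K}(t)\equiv K+1$ by biorthogonality with $\sum_n b_{n,K}\equiv 1$, a naive triangle-inequality estimate on individual terms can overshoot. A natural route is to invoke the explicit closed form of $\tilde b_{n,K}$ in terms of shifted Jacobi polynomials from Jüttler~\cite{jiittler1998dual} and apply standard pointwise bounds; an equivalent approach is to control the row sums of the inverse Gram matrix $G^{-1}$ starting from the closed form $G_{ij}=\binom Ki\binom Kj/\bigl[(2K+1)\binom{2K}{i+j}\bigr]$. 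With this uniform estimate in hand, the bound $(K+1)^2 d_K(p,\tilde p)$ follows directly from the previous display.
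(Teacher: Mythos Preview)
Your decomposition matches the paper's exactly: rewrite $\tilde q_K(t)-1=\sum_{n}\bigl[\mathbb Q_K(n)-\tfrac{1}{K+1}\bigr]\tilde b_{n,K}(t)$ using the identity $\sum_n\tilde b_{n,K}\equiv K+1$, then bound termwise. (The paper derives that identity via an eigenvector argument on the Gram matrix---$G e=\tfrac{1}{K+1}e$ for $e=(1,\dots,1)^\top$, hence $G^{-1}e=(K+1)e$---while your specialization to $p=\tilde p$ is a clean alternative giving the same conclusion.) The gap is the final step: the bound $\max_n\|\tilde b_{n,K}\|_\infty\le K+1$ you aim to prove is \emph{false}. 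Already for $K=1$ one has $G=\tfrac16\bigl(\begin{smallmatrix}2&1\\1&2\end{smallmatrix}\bigr)$, $G^{-1}=\bigl(\begin{smallmatrix}4&-2\\-2&4\end{smallmatrix}\bigr)$, so $\tilde b_{0,1}(t)=4(1-t)-2t=4-6t$ and $\|\tilde b_{0,1}\|_\infty=4>2=K+1$. Your worry about sign-alternation is exactly the issue: the sum identity cannot control individual terms, and neither the J\"uttler closed form nor an analysis of the row sums of $G^{-1}$ will yield an inequality that simply does not hold. The paper's proof invokes the same pointwise bound, asserting it ``follows from'' the sum identity in Lemma~\ref{lemma:dual_basis_sum}; that is the same invalid inference.

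In fact the $K=1$ computation shows the constant in the statement is too optimistic: with $\mathbb Q_1=(a,1-a)$ one gets $\tilde q_1(t)-1=(a-\tfrac12)\bigl(\tilde b_{0,1}(t)-\tilde b_{1,1}(t)\bigr)=(a-\tfrac12)(6-12t)$, hence $\|q_1-1\|_\infty=6\,|a-\tfrac12|$ while $(K+1)^2\,d_1(p,\tilde p)=4\,|a-\tfrac12|$. What your argument genuinely delivers is $\|q_K-1\|_\infty\le(K+1)\,d_K(p,\tilde p)\cdot\max_{0\le n\le K}\|\tilde b_{n,K}\|_\infty$, with the last factor growing faster than $K+1$; any fix must either weaken the constant or exploit additional cancellation among the coefficients $\mathbb Q_K(n)-\tfrac{1}{K+1}$ beyond the termwise triangle inequality.
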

\begin{proof}
    See Proof in Appendix \ref{appendix:proofs section A rank-based / binomial mapping}
\end{proof}


By standard Bernstein‐approximation theory, one can bound the truncation error via the modulus of continuity of \(q\).  In particular, if \(q\in C^2(\mathbb{R})\), then 
\[
\|q - q_K\|_{\infty} = O(K^{-1}),
\]
and more generally, if \(q\) is \(\alpha\)–Hölder continuous on \(\mathbb{R}\), then
\[
\|q - q_K\|_{\infty} = O\bigl(K^{-\alpha/2}\bigr).
\]
See \cite{gzyl1997weierstrass} for the \(C^2\) case and \cite{mathe1999approximation} for the Hölder regime.

\begin{remark} \label{remark:extansion of bertein to other domains}
 If we assume that $q\in C^{2}(\mathbb{R})$, by the triangle inequality, we have 
        \begin{align}\label{eq:convergence of qK to 1}
            \|q(x) - 1\|_{\infty}\leq \|1 - q_{K}(x)\|_{\infty} + \|q_{K}(x) - q(x)\|_{\infty} \leq (K+1)^{2} d_{K}(p, \tilde{p}) + \dfrac{\|q(x)''\|_{\infty}}{8K}.
        \end{align}
\end{remark}


To empirically validate Equation \ref{eq:convergence of qK to 1}, we train the same NN architecture under identical hyperparameters as in our earlier experiments. The model receives as input noise \(z\sim\mathcal{N}(0,1)\) and approximates a mixture of Cauchy distributions. We then recover the estimated density \(\tilde p\) via kernel density estimation and compute the second derivative of the quotient \(q\) with sixth‐order central finite differences. Each experiment is repeated ten times, and the mean results are plotted in Figure \ref{fig:empirical-convergence-rate}.


\begin{figure}[htbp]
  \centering
  \captionsetup[subfigure]{justification=centering, font=small, labelfont=bf}
  \captionsetup{font=small, labelfont=bf, skip=4pt}
  \begin{subfigure}[t]{0.48\linewidth}
    \centering
    \includegraphics[width=\linewidth,keepaspectratio]{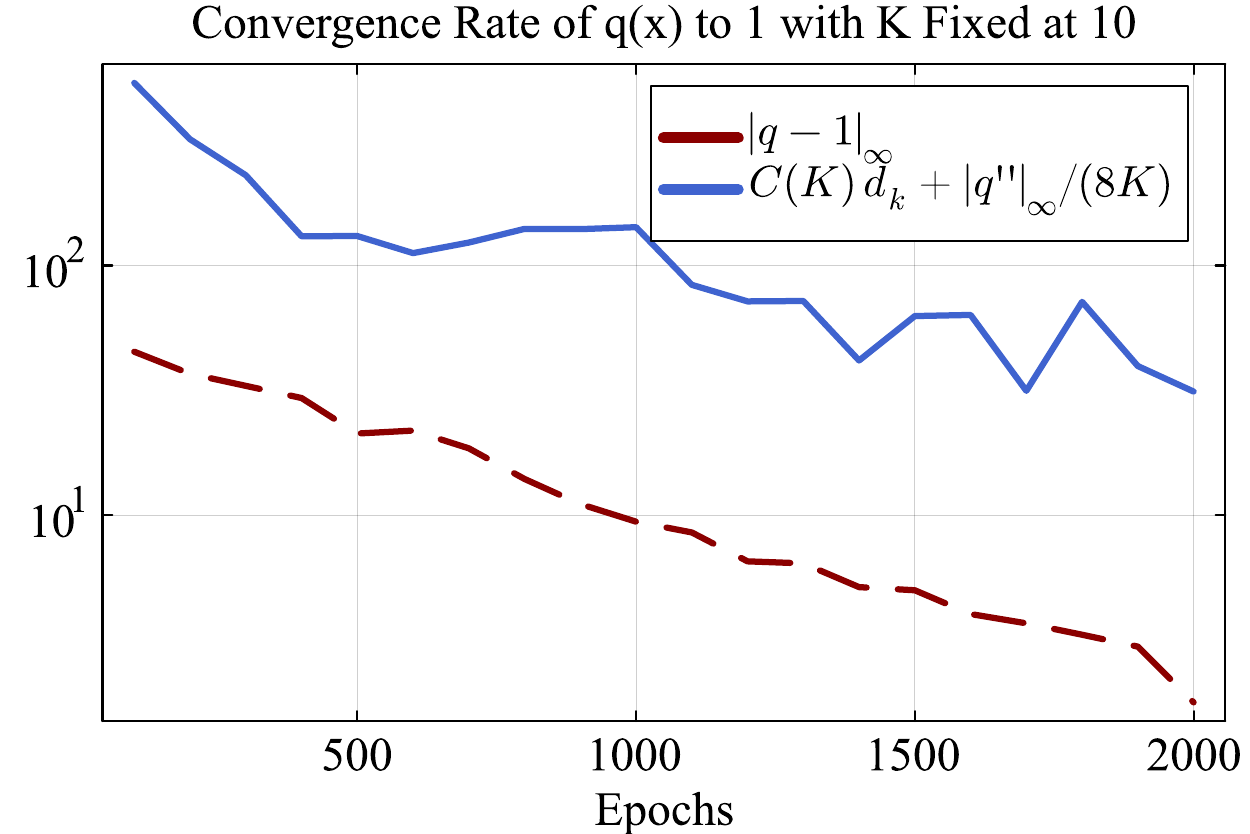}
    \caption{\small Fixed \(K=10\). NN trained with \(N=1000\) samples, lr \(10^{-3}\).}
    \label{fig:conv-rate-fixed-K}
  \end{subfigure}
  \hfill
  \begin{subfigure}[t]{0.48\linewidth}
    \centering
    \includegraphics[width=\linewidth,keepaspectratio]{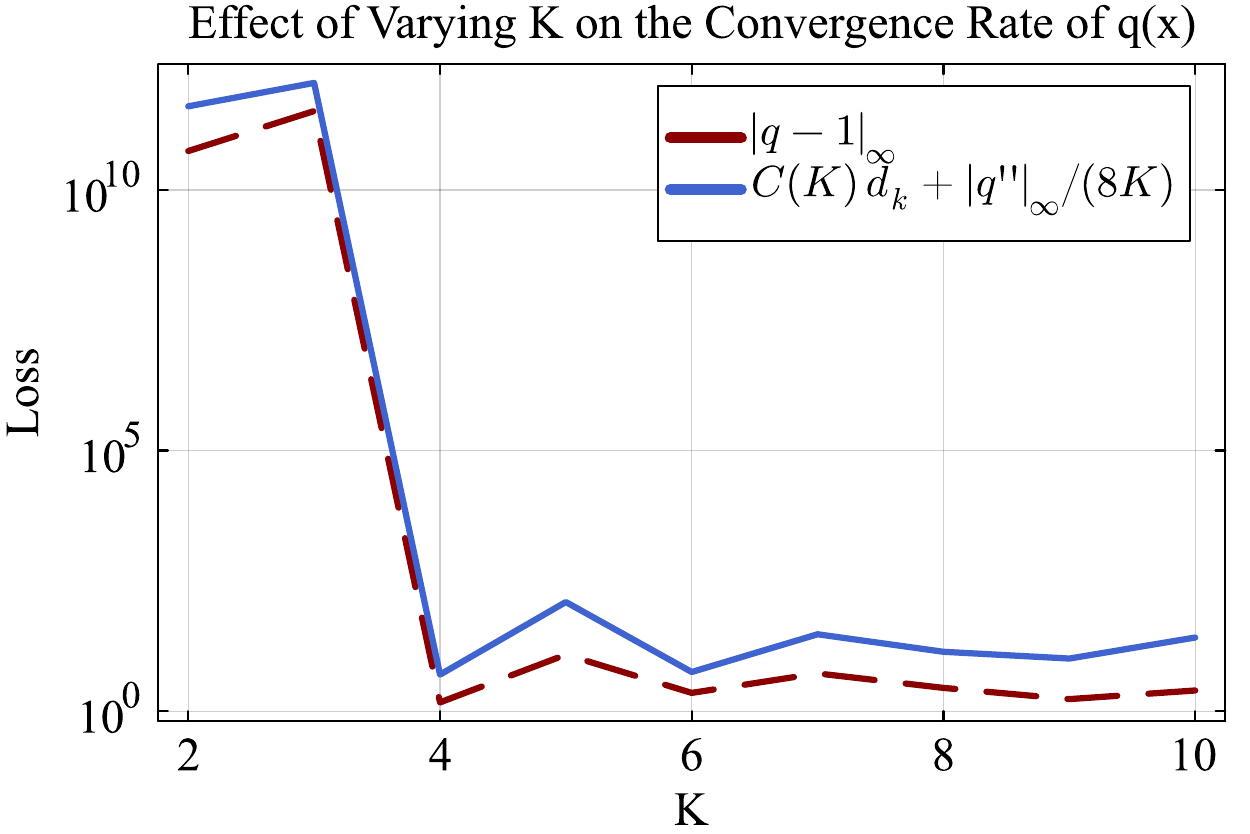}
    \caption{Varying \(K\). Each run uses 1000 epochs, \(N=1000\), lr \(10^{-3}\).}
    \label{fig:conv-rate-vary-K}
  \end{subfigure}
  \caption{\small Empirical convergence of ISL’s Bernstein approximation (cf.\ Eq.~\ref{eq:convergence of qK to 1}).  
    The solid blue curve shows the mean theoretical upper bound \(\|q_K - 1\|_\infty\le (K+1)^2d_K\), and the dashed red curve shows the observed \(\|q - 1\|_\infty\).}
  \label{fig:empirical-convergence-rate}
\end{figure}

\begin{theorem}[Explicit density approximation]\label{theorem:general_forward_pushforward_expanded}
Let \(p,\tilde p\in C(\mathbb{R})\) with \(\tilde p(x)>0\) for all \(x\in\mathbb{R}\), and let $\tilde F(x)$ be the cdf of \(\tilde p\).  Define
\begin{align}\label{eq:pK definition}
p_K(x)
:=\tilde p(x)\,\sum_{m=0}^{K}\mathbb{Q}_{K}(m)\,\tilde b_{m,K}\!\bigl(\tilde F(x)\bigr).
\end{align}
Then for every \(x\in\mathbb{R}\),
\begin{align*}
\lim_{K\to\infty}p_K(x)
= p(x).
\end{align*}
\end{theorem}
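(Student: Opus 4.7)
The first step is to rewrite the statement in terms of the density ratio $q=p/\tilde p$ and its polynomial approximation. Indeed, by Theorem~\ref{thm:bernstein_truncation_ratio} the bracketed sum in \eqref{eq:pK definition} is exactly $\tilde q_K\bigl(\tilde F(x)\bigr)$, where $\tilde q_K$ is the degree-$K$ truncation of the Bernstein expansion of $\tilde q = q\circ \tilde F^{-1}$. Hence
\[
p_K(x) \;=\; \tilde p(x)\,\tilde q_K\bigl(\tilde F(x)\bigr)
\qquad\text{and}\qquad
p(x) \;=\; \tilde p(x)\,\tilde q\bigl(\tilde F(x)\bigr).
\]
Because $\tilde p(x)$ is a fixed finite positive factor independent of $K$, the claim reduces to the pointwise limit $\tilde q_K(t)\to \tilde q(t)$ at the interior point $t=\tilde F(x)\in(0,1)$.

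Next I would verify the regularity needed for this limit. The hypotheses $p,\tilde p\in C(\mathbb{R})$ with $\tilde p>0$ imply $q\in C(\mathbb{R})$, while strict monotonicity of $\tilde F$ makes $\tilde F^{-1}$ continuous on $(0,1)$, so $\tilde q\in C((0,1))$. Theorem~\ref{thm:bernstein_truncation_ratio} exhibits $\tilde q_K$ via the dual Bernstein basis, equivalently as the $L^2([0,1])$-best approximation of $\tilde q$ by polynomials of degree $\le K$. Pairing Weierstrass density of polynomials in $C([0,1])$ with the $L^2$-optimality of $\tilde q_K$—or, more quantitatively, with the $O(1/K)$ estimate recalled in Remark~\ref{remark:extansion of bertein to other domains} under the stronger hypothesis $q\in C^2$—yields $\tilde q_K(t)\to \tilde q(t)$ at every interior $t$. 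Composing with $t=\tilde F(x)$ and multiplying by $\tilde p(x)$ completes the argument.

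The main obstacle is upgrading the automatic $L^2$-convergence of the dual-Bernstein projection to genuine pointwise convergence at arbitrary interior points. For merely continuous $\tilde q$ this requires combining the existence of a uniformly close polynomial of degree $\le K$ (Weierstrass) with the $L^2$-best-approximation property of $\tilde q_K$, and then localising by continuity; under mild smoothness (Hölder or $C^2$) the explicit Bernstein bounds quoted after Theorem~\ref{lemma: uniform boudn for q_K convergence to 1} deliver the uniform convergence directly. The behaviour of $\tilde F^{-1}$ at the endpoints $t\in\{0,1\}$ is immaterial for any finite $x$, since in that case $\tilde F(x)$ lies strictly inside $(0,1)$ and interior pointwise convergence is all that is needed.
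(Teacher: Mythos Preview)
Your reduction $p_K(x)=\tilde p(x)\,\tilde q_K(\tilde F(x))$ and $p(x)=\tilde p(x)\,\tilde q(\tilde F(x))$, with the claim then amounting to $\tilde q_K\to\tilde q$ at the point $t=\tilde F(x)$, matches the paper's proof exactly. The divergence is in how that convergence is obtained.

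The paper does \emph{not} go through the $L^2$-projection interpretation. It instead invokes (from Theorem~\ref{thm:bernstein_truncation_ratio} and the cited Bernstein literature) that $\tilde q\in C([0,1])$ and that $\{b_{n,K}\}$ form a Schauder basis of $(C([0,1]),\|\cdot\|_\infty)$, so the truncations satisfy $\|\tilde q_K-\tilde q\|_\infty\to 0$ directly. Uniform convergence then gives the pointwise limit at $t_x=\tilde F(x)$ for free, and multiplying by $\tilde p(x)$ finishes.

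Your route via ``Weierstrass + $L^2$-optimality + localising by continuity'' does not close on its own: Weierstrass plus $L^2$-best-approximation yields only $\|\tilde q_K-\tilde q\|_{L^2}\to 0$, and $L^2$-best polynomial approximants (essentially Legendre partial sums after a change of interval) of a merely continuous function need not converge pointwise. So the obstacle you flag is real, and under the bare hypotheses of the theorem your sketch has a gap precisely there. The paper sidesteps this entirely by asserting uniform convergence from the Schauder-basis claim rather than passing through $L^2$. Your fallback to the $C^2$ or H\"older case via the explicit Bernstein bounds recalled after Theorem~\ref{lemma: uniform boudn for q_K convergence to 1} does work under those extra assumptions, and your caution about $\tilde q\in C((0,1))$ versus $C([0,1])$ is more careful than the paper's treatment---but to match the paper's argument at the stated generality you should invoke the uniform convergence of the Bernstein truncation directly rather than trying to bootstrap from $L^2$.
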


\begin{proof}
    See Proof in Appendix \ref{appendix:proofs section A rank-based / binomial mapping}.
\end{proof}

\begin{remark}
In practice, one draws latent samples \(z_1,\dots,z_N\overset{\mathrm{i.i.d.}}{\sim} p_z\) and computes
$x_i = f(z_i),$ where \(f\) is the neural network pushing \(p_z\) forward to \(\tilde p\).  One then forms the empirical cdf and density estimates
\begin{align*}
\widehat{\tilde F}(x)
=\frac{1}{N}\sum_{i=1}^N\mathbf{1}\{x_i\le x\},
\qquad
\widehat{\tilde p}(x)
=\frac{\widehat{\tilde F}(x+\delta)-\widehat{\tilde F}(x-\delta)}{2\delta}.
\end{align*}
Substituting these into Equation \eqref{eq:pK definition} yields the Monte Carlo approximation
\begin{align}\label{eq:pK estimation}
\widehat p_K(x)
=\widehat{\tilde p}(x)\sum_{m=0}^K\mathbb Q_K(m)\,\tilde b_{m,K}\bigl(\widehat{\tilde F}(x)\bigr).
\end{align}
\end{remark}

In Figure~\ref{fig:density-estimation} we illustrate ISL’s capability—via its Bernstein polynomial approximation—to recover the true density in both one-dimensional and two-dimensional settings. Figure \ref{fig:density-estimation:a} compares the ground-truth mixture Gaussian (red) with dual-ISL estimates at \(K=2\) (light blue) and \(K=15\) (dark blue), while Figure \ref{fig:density-estimation:b} overlays the estimated density contours on the two-moons sample scatter. Additional experiments and implementation details, are provided in Appendix \ref{appendix:Density estimation}.

\begin{figure}[htbp]
  \captionsetup[subfigure]{justification=centering, font=small, labelfont=bf, skip=1pt}
  \centering
  \begin{subfigure}[bt]{0.45\textwidth}
    \centering
\includegraphics[width=\linewidth,height=5cm,keepaspectratio]{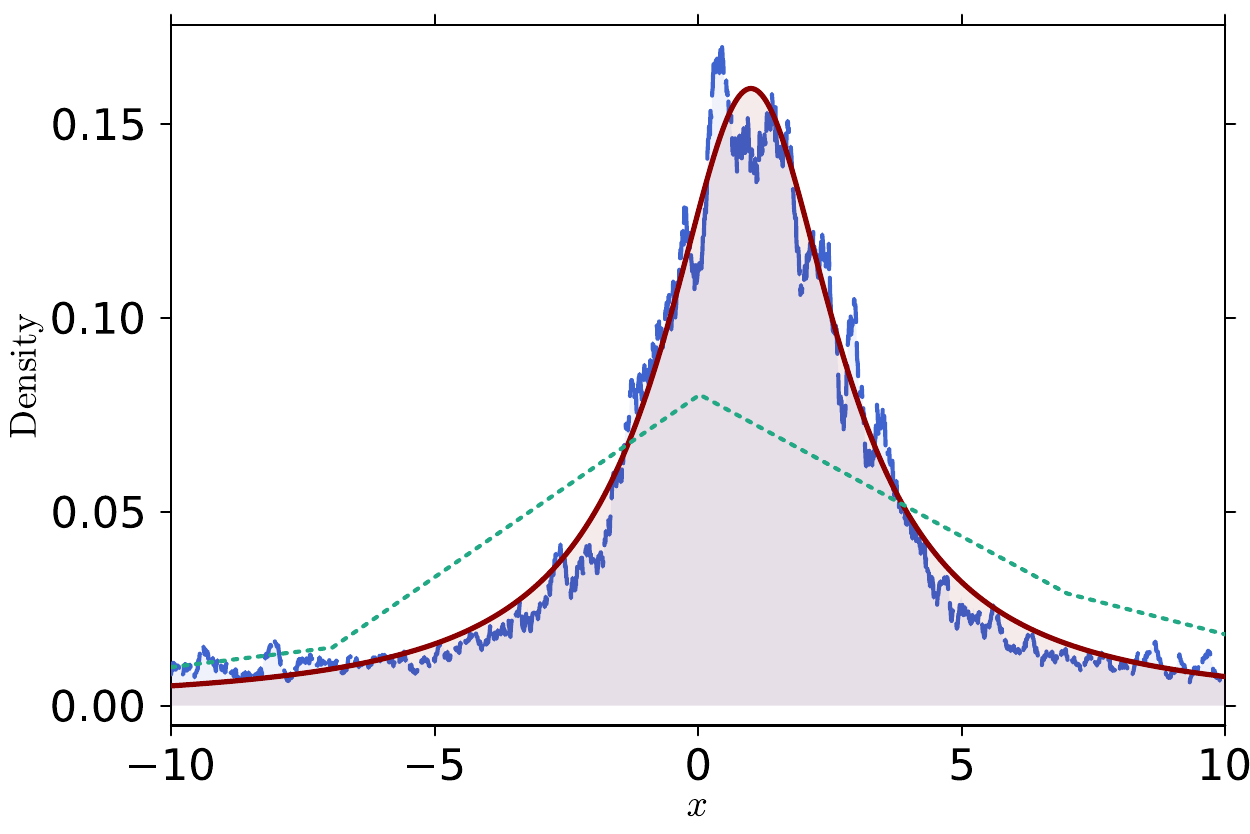}
    \caption{\small Comparison of the true $\mathrm{Cauchy}(1,2)$ density (red), the dual-ISL estimate for $K=10$ (blue), and the kernel density estimate (green).}
    \label{fig:density-estimation:a}
  \end{subfigure}
  \hfill
  \begin{subfigure}[bt]{0.45\textwidth}
    \centering
    \includegraphics[width=\linewidth,height=4cm,keepaspectratio]{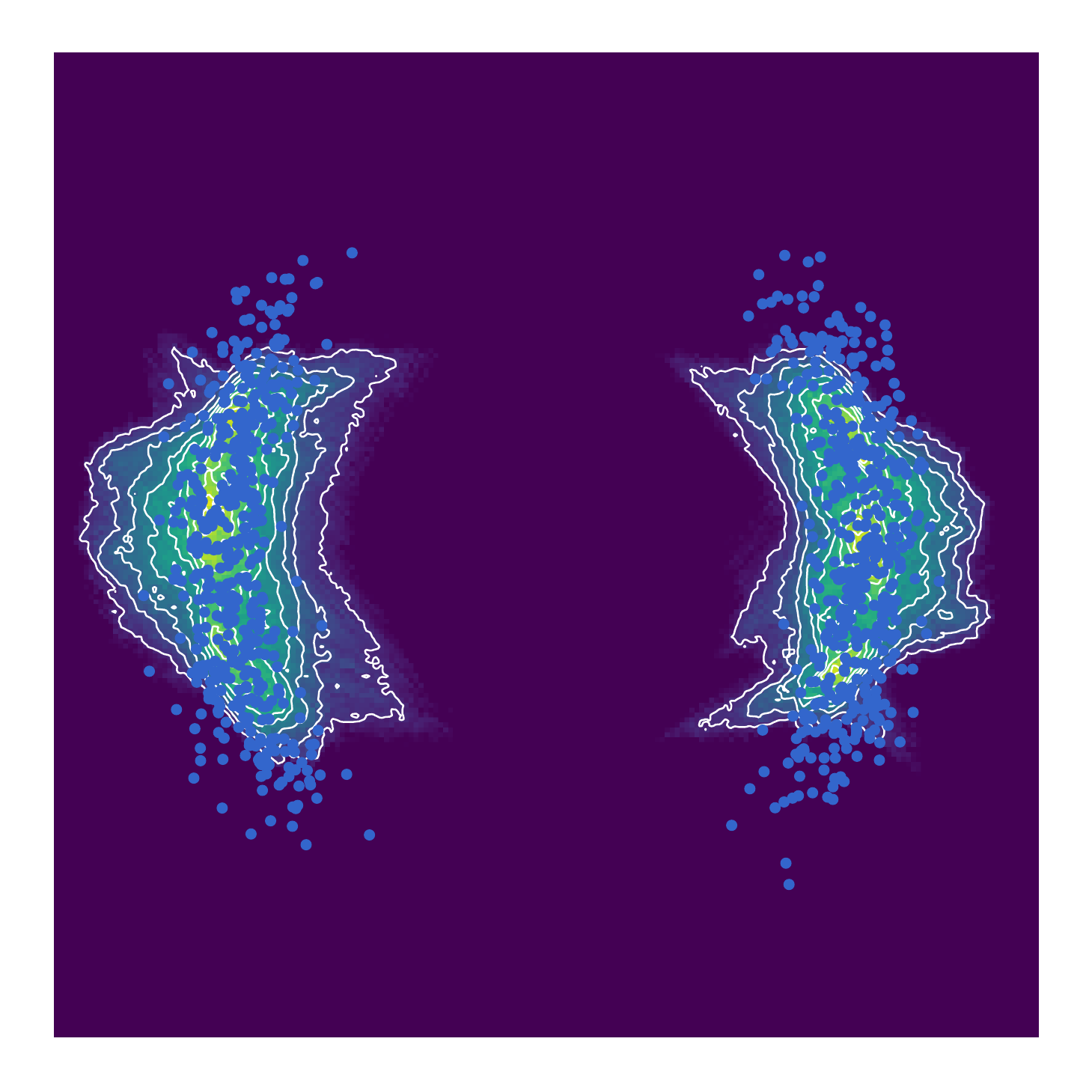}
    \caption{\small Two‐moons target: sample scatter (blue points) with dual-ISL density overlay.}
    \label{fig:density-estimation:b}
  \end{subfigure}
  \caption{\small Dual-ISL density estimation results. (a) On a 1D Cauchy target, dual-ISL (blue) closely matches the true density (red) and outperforms the KDE baseline (green). (b) On a 2D two-moons dataset, dual-ISL accurately captures the manifold structure, with learned contours aligning tightly with the sample cloud.}
  \label{fig:density-estimation}
\end{figure}

\section{Sliced multivariate ISL via Bernstein polynomial approximation} \label{section:Sliced Multivariate ISL via Bernstein Polynomial Approximation}

When the data are multidimensional, the target $p(x)$ is a pdf on $\mathbb{R}^{d}$, with $d>1$, and there is no finite set of univariate statistics that uniquely characterizes an arbitrary density (cf. Theorem \ref{thm:identifiability} in 1D).  Instead, we employ a sliced strategy: we assess a $d$-dimensional distribution by projecting it onto many random directions, computing the one-dimensional ISL discrepancy along each slice, and then averaging these values \cite{de2024robust,kolouri2019generalized}.

\subsection*{One–dimensional projected statistic} 
For any unit vector in the $d$-dimensional sphere \(s\in\mathbb S^{d}\subset\mathbb{R}^{d+1}\), denote by $s\#p$ the pdf of the one-dimensional projection $y = s^\top x$ with associated cdf denoted by \(\tilde F_s\).  Then the pmf of order \(K\) in direction \(s\) is
\begin{align*}
\mathbb{Q}_K^s(n) &= \int_{\mathbb{R}} \binom{K}{n}\,[\tilde{F}_s(y)]^n\,[1-\tilde{F}_s(y)]^{K-n}\,(s\#p)(y)\,\mathrm{d}y \\&= \int_{\mathbb{R}}\binom{K}{n} t^{n} (1 - t)^{K-n} \dfrac{s\#p(\tilde{F}_{s}^{-1}(t))}{s\#\tilde{p}(\tilde{F}_{s}^{-1}(t))}\mathrm{d}t = \langle b_{n,K}, q^{s} \circ \tilde{F}_{s}^{-1}\rangle_{L^{2}(\mathbb{R})}, \qquad n=0,\ldots, K. 
\end{align*}
where we have denoted by $q^{s}(x) = s\#q(x) = \dfrac{s\#p}{s\#\tilde{p}}(x) $ the push-forward of the quotient $q$ by the linear transformation $s$.

\subsection*{Sliced ISL divergence}
We then define the \emph{sliced} ISL discrepancy by integrating over the unit sphere,
\begin{align}\label{eq:sliced-d}
d_{K}^{\mathbb S^d}(p,\tilde p)
\;=\;\int_{\mathbb S^d}
d_K\bigl(s\#p,\;s\#\tilde p\bigr)\,\mathrm{d}s,
\end{align}
where \(d_K\) is the discrepancy in Definition~\ref{eq:dK}. In practice, to approximate the integral in Equation \ref{eq:sliced-d}, one randomly samples a finite set of directions \(\{s_\ell\}_{\ell=1}^L\) and averages the resulting evaluations.

The following Theorem is derived using the bounds of Equation~\ref{eq:convergence of qK to 1}, and shows that, under mild smoothness assumptions on $q(x)=p(x)/\tilde p(x)$, if $\lim_{K\to\infty}d^{\mathbb S^d}_{K}(p,\tilde p)=0$ then every one‐dimensional projected ratio $q^s$ converges uniformly to 1.  By the Cramér--Wold theorem \cite[Thm.~29.4]{billingsley2017probability}, this ensures that $p=\tilde p$ on $\mathbb{R}^d$, and hence $d_{\mathbb S^d}^K$ becomes a proper divergence as $K\to\infty$.
\begin{theorem}[Uniform convergence under slicing]\label{thm:qs_convergence}
Let $p,\tilde p\in C^2\left(\mathbb{R}^d\right)$. Then there is a constant \(C_d=\mathcal L(\mathbb S^d)\) such that
\begin{enumerate}[label=(\roman*)]
  \item\label{thm:qs_convergence:i}
    $\displaystyle
      \int_{\mathbb S^d}\|q^s-1\|_\infty\,ds
      \le (K+1)^2\,d_{K}^{\mathbb S^d}(p,\tilde p)
      + C_d\;\frac{\|\nabla^2 q\|_\infty}{8K},
    $ 
  \item\label{thm:qs_convergence:ii}
    $\displaystyle
      \sup_{s\in\mathbb S^d}\|q^s-1\|_\infty
      \le (K+1)^2\,
      \sup_{s\in\mathbb S^d}d_K\bigl(s\#p,s\#\tilde p\bigr)
      + \frac{\|\nabla^2 q\|_\infty}{8K}.
    $
\end{enumerate}

Here \(\|\nabla^2 q\|_\infty=\sup_{x\in[0,1]^d}\|\nabla^2 q(x)\|\) and 
\(\mathcal L(\mathbb S^d)\) is the surface measure of the sphere.
\end{theorem}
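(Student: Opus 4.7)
The plan is to apply the univariate bound from Remark \ref{remark:extansion of bertein to other domains} pointwise in each slicing direction $s\in\mathbb S^d$, treating the projected pair $(s\#p,\,s\#\tilde p)$ as a one-dimensional problem with density ratio $q^s$. This yields, for every $s\in\mathbb S^d$,
\[
\|q^s-1\|_\infty \;\le\; (K+1)^2\,d_K\bigl(s\#p,\,s\#\tilde p\bigr)\;+\;\frac{\|(q^s)''\|_\infty}{8K}.
\]
From this pointwise inequality, part \ref{thm:qs_convergence:i} follows by integrating both sides with respect to the surface measure on $\mathbb S^d$: the first term on the right reproduces $(K+1)^2\,d_K^{\mathbb S^d}(p,\tilde p)$ by Definition~\eqref{eq:sliced-d}, while the second term, once we uniformly dominate $\|(q^s)''\|_\infty$ by $\|\nabla^2 q\|_\infty$, contributes the constant $C_d=\mathcal L(\mathbb S^d)$ by integrating a constant over the sphere. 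Part \ref{thm:qs_convergence:ii} follows identically but by taking the supremum over $s$ rather than the integral.

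The only non-routine ingredient — and the main obstacle — is controlling the second derivative of the marginal density ratio $q^s(t)$ by the Hessian of the ambient ratio $q(x)$. The natural strategy is to use the conditional-expectation identity
\[
q^s(t)\;=\;\mathbb E_{\tilde p}\bigl[\,q(X)\,\bigm|\,s^\top X=t\,\bigr],
\]
which follows from the disintegration $p(x)=q(x)\,\tilde p(x)$ together with the definitions of $s\#p$ and $s\#\tilde p$. Differentiating twice in $t$ under the conditional expectation (justified by the $C^2$ hypothesis on $p,\tilde p$ and the resulting integrability of the Hessian entries against $\tilde p$) gives the directional identity
\[
(q^s)''(t)\;=\;\mathbb E_{\tilde p}\bigl[\,s^\top\nabla^2 q(X)\,s\,\bigm|\,s^\top X=t\,\bigr].
\]
Since $|s^\top\nabla^2 q(x)\,s|\le\|\nabla^2 q(x)\|_{\mathrm{op}}\le\|\nabla^2 q\|_\infty$ whenever $\|s\|=1$, taking absolute values inside the conditional expectation yields the uniform bound $\|(q^s)''\|_\infty\le\|\nabla^2 q\|_\infty$ required in both parts.

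With this key estimate in hand, the remaining steps are bookkeeping: integrate (respectively, take the supremum of) the pointwise inequality over $\mathbb S^d$, use Fubini to pull the integral through the nonnegative integrand in the sliced discrepancy, and identify the surface-measure constant $C_d=\mathcal L(\mathbb S^d)$. The conclusion about $p=\tilde p$ via Cramér–Wold is not needed for the inequalities themselves but only for interpreting them; the two bounds in the statement follow directly from the pointwise Bernstein estimate together with the Hessian domination of $(q^s)''$.
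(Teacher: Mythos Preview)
Your overall strategy is exactly the paper's: apply the univariate bound of Remark~\ref{remark:extansion of bertein to other domains} slice by slice, then integrate (for part~\ref{thm:qs_convergence:i}) or take the supremum (for part~\ref{thm:qs_convergence:ii}) over $\mathbb S^d$, controlling $\|(q^s)''\|_\infty$ uniformly by $\|\nabla^2 q\|_\infty$. The paper's own argument for this last step is terse---it simply invokes the directional-derivative bound $|s^\top\nabla^2 q(x)\,s|\le\|\nabla^2 q(x)\|$ without explicitly connecting $D^2_{s,s}q$ to $(q^s)''$.

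You go further by writing down the conditional-expectation identity $q^s(t)=\mathbb E_{\tilde p}[q(X)\mid s^\top X=t]$, which is correct and usefully clarifies that $q^s$ is a ratio of \emph{marginals} rather than a line restriction of $q$. However, your subsequent claim
\[
(q^s)''(t)=\mathbb E_{\tilde p}\bigl[s^\top\nabla^2 q(X)\,s\bigm| s^\top X=t\bigr]
\]
is not valid in general: the conditional law $\tilde p(\cdot\mid s^\top X=t)$ itself depends on $t$, so differentiating under the expectation produces additional terms from that dependence. The clean identity you wrote holds when $s^\top X$ and its orthogonal complement are independent under $\tilde p$ (e.g.\ $\tilde p$ isotropic Gaussian), but not for an arbitrary $\tilde p\in C^2(\mathbb R^d)$. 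So while your framework matches the paper's, the Hessian-domination step remains a gap---one that the paper's proof also glosses over, but which your more explicit identity makes visible rather than resolves.
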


\begin{proof}
See Appendix~\ref{section: Multidimensional Extension of the Projection Schema}.
\end{proof}


Since \(s\mapsto s\#(\cdot)\) is linear, compactness of \(\mathbb S^d\) plus
Theorems~\ref{thm:rankWeakConv}–\ref{Theorem: Convexity in the First Argument} imply that
\(\;(p,\tilde p)\mapsto d_{K}^{\mathbb S^d}(p,\tilde p)\) is continuous
and convex in its first argument.  Consequently, by interchanging the roles of the model and target distributions in the slicing framework we obtain a \emph{sliced dual‐ISL} method that retains both convexity and differentiability (almost everywhere) under mild smoothness of the network parameters. Pseudocode for its implementation is given in Appendix \ref{pseudocodes}.

\section{Summary and concluding remarks}

In this paper, we introduced dual-ISL, a novel likelihood-free objective that significantly advances the training of implicit generative models. By interchanging the roles of the target and model distributions within the Invariant Statistical Loss (ISL) framework, dual-ISL provides a convex optimization problem in the space of model densities, addressing common challenges like instability, non-convexity, and mode collapse prevalent in existing methods.

A central theoretical contribution is the interpretation of dual-ISL as an explicit \(L^2\)-projection of the \emph{push-forward density ratio} $q \circ \tilde{F}^{-1}$ onto a Bernstein polynomial basis. This innovative projection approach yields an explicit closed-form approximation of the density ratio, enabling efficient and analytically tractable density evaluation—a capability traditionally missing in implicit modeling. We derived precise error bounds and convergence rates leveraging classical results from polynomial approximation theory, thus ensuring both theoretical rigor and practical stability.

We further generalized dual-ISL to multivariate distributions through a sliced projection methodology, maintaining convexity, continuity, and analytic tractability in higher-dimensional settings. Empirically, dual-ISL consistently demonstrated improved convergence, smoother training dynamics, and reduced mode collapse compared to classical ISL, GAN variants (including WGAN and MMD-GAN), and normalizing flow baselines across a variety of synthetic benchmarks.

In conclusion, dual-ISL bridges an important gap in implicit generative modeling by providing both strong theoretical foundations and practical advantages. Future directions include exploring adaptive slicing strategies, extending the theoretical analysis to broader classes of polynomial bases, and applying dual-ISL to large-scale generative modeling tasks in diverse real-world domains.

\section*{Acknowledgments}

This work has been supported by the the Office of Naval Research (award N00014-22-1-2647) and Spain's {\em Agencia Estatal de Investigaci\'on} (refs. PID2021-125159NB-I00 TYCHE and PID2021-123182OB-I00 EPiCENTER). Pablo M. Olmos also acknowledges the support by Comunidad de Madrid under grants IND2022/TIC-23550 and ELLIS Unit Madrid.


\printbibliography


\newpage

\restoreTOCentries

\appendix

\tableofcontents
\vfill
\newpage

\section{Proofs of Theorems Section \ref{section: new theoretical results about discrepancy measure}} \label{appendix: proofs of new theoretical results about discrepancy measure}


In this appendix, we establish three key analytic properties of the rank‐based divergence \(d_K\)
\begin{itemize}
  \item Continuity in its first argument under weak convergence (Theorem \ref{thm:rankWeakConv}).
  \item Continuity in its second argument under \(L^1\) norm (Theorem \ref{thm:continuity-second-arg}).
  \item Convexity in its first argument (Theorem \ref{Theorem: Convexity in the First Argument}).
\end{itemize}

\subsection{Proof of Theorem \ref{thm:rankWeakConv}}
\begin{proof}[Proof of Theorem \ref{thm:rankWeakConv}]
  \smallskip           
  \noindent
\begin{enumerate}[label=(\roman*)]
\item For each fixed \(m \in \{0,1,\dots,K\}\), define
\begin{align*}
h_{m}(y)=\binom{K}{m}\,[\tilde{F}(y)]^m\,[1-\tilde{F}(y)]^{K-m}.
\end{align*}
Since \(\tilde{F}(y)\) is the cdf of the fixed density \(\tilde{p}\), the function \(h_{m}(y)\) is continuous and bounded on $\mathbb{R}$. By the definition of weak convergence $p_n \overset{w}{\longrightarrow} p$ we have that for every bounded continuous function $h$,
\begin{align*}
\lim_{n\to\infty}\int_{\mathbb{R}} h(y)\,p_n(y)\,dy = \int_{\mathbb{R}} h(y)\,p(y)\,dy.
\end{align*}
Taking \(h(y)=h_{m}(y)\) yields
\begin{align*}
\lim_{n\to\infty}\mathbb{Q}_K^{(n)}(m) = \lim_{n\to\infty}\int_{\mathbb{R}} h_m(y)\,p_n(y)\,dy = \int_{\mathbb{R}} h_m(y)\,p(y)\,dy = \mathbb{Q}_K(m).
\end{align*}
\item The discrepancy between $p$ and $\tilde{p}$ is
\[
d_K(p,\tilde{p}) = \frac{1}{K+1}\sum_{m=0}^{K}\left|\frac{1}{K+1} - \mathbb{Q}_K(m)\right|.
\]
Since for each \(m\) we have shown that
\[
\lim_{n\to\infty}\mathbb{Q}_K^{(n)}(m) = \mathbb{Q}_K(m),
\]
and because the absolute value function is continuous, it follows that
\[
\lim_{n\to\infty}\left|\frac{1}{K+1} - \mathbb{Q}_K^{(n)}(m)\right| = \left|\frac{1}{K+1} - \mathbb{Q}_K(m)\right|.
\]
As the sum is finite (from \(m=0\) to \(K\)), we can exchange the limit and the summation to conclude that
\begin{align*}
\lim_{n\to\infty} d_K(p_n,\tilde{p}) &= \frac{1}{K+1}\sum_{m=0}^{K}\lim_{n\to\infty}\left|\frac{1}{K+1} - \mathbb{Q}_K^{(n)}(m)\right| \\&= \frac{1}{K+1}\sum_{m=0}^{K}\left|\frac{1}{K+1} - \mathbb{Q}_K(m)\right| = d_K(p,\tilde{p}).
\end{align*}
\end{enumerate}
\end{proof}

\subsection{Proof of Theorem \ref{thm:continuity-second-arg}}
We now state and prove that the divergence \(d_K\) is continuous with respect to its second argument in the \(L^1\) norm.

\begin{theorem}[Continuity in the second argument]
\label{thm:continuity-second-arg}
Let $p$ and $\{\tilde{p}_n\}_{n \ge 1}$ be continuous densities on $\mathbb{R}$ such that $\tilde{p}_n \to \tilde{p}$ in the $L^1$ norm. Then, the discrepancy function $d_{K}$ is continuous in its second argument, i.e.
\begin{align*}
\lim_{n\to\infty} d_{K}\bigl(p,\tilde{p}_n\bigr)
\;=\;
d_{K}(p,\tilde{p}).
\end{align*}
\end{theorem}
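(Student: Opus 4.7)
The plan is to reduce continuity of $d_K$ in the second argument to pointwise convergence of the pmf values $\mathbb Q_K^{(n)}(m)\to\mathbb Q_K(m)$ for each fixed $m$, using the integral representation derived earlier. Once we have that, the finite sum over $m=0,\dots,K$ and continuity of $|\cdot|$ immediately give continuity of $d_K(p,\tilde p_n)$.

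First, I would upgrade $L^1$ convergence of the densities to uniform convergence of their cdfs. Namely, for every $y\in\mathbb{R}$,
\begin{align*}
|\tilde F_n(y)-\tilde F(y)|
= \left|\int_{-\infty}^{y}\bigl(\tilde p_n(x)-\tilde p(x)\bigr)\,\mathrm{d}x\right|
\le \|\tilde p_n-\tilde p\|_{L^1(\mathbb{R})}\xrightarrow[n\to\infty]{}0,
\end{align*}
so $\tilde F_n\to\tilde F$ uniformly on $\mathbb{R}$. Next, set
\begin{align*}
h_m^n(y):=\binom{K}{m}\tilde F_n(y)^m\bigl(1-\tilde F_n(y)\bigr)^{K-m},\qquad
h_m(y):=\binom{K}{m}\tilde F(y)^m\bigl(1-\tilde F(y)\bigr)^{K-m}.
\end{align*}
Since the polynomial $t\mapsto t^m(1-t)^{K-m}$ is Lipschitz on the compact interval $[0,1]$, uniform convergence $\tilde F_n\to\tilde F$ lifts to uniform convergence $h_m^n\to h_m$ on $\mathbb{R}$.

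Then I would conclude pointwise convergence of the pmfs: using $\|p\|_{L^1}=1$,
\begin{align*}
\bigl|\mathbb Q_K^{(n)}(m)-\mathbb Q_K(m)\bigr|
=\left|\int_{\mathbb{R}}\bigl(h_m^n(y)-h_m(y)\bigr)p(y)\,\mathrm{d}y\right|
\le \|h_m^n-h_m\|_\infty\xrightarrow[n\to\infty]{}0.
\end{align*}
(Alternatively one could invoke dominated convergence, since $|h_m^n|\le\binom{K}{m}$ and $p$ is integrable.) Finally, by continuity of the absolute value and the fact that the defining sum in $d_K$ is finite,
\begin{align*}
\lim_{n\to\infty}d_K(p,\tilde p_n)
=\frac{1}{K+1}\sum_{m=0}^{K}\lim_{n\to\infty}\left|\tfrac{1}{K+1}-\mathbb Q_K^{(n)}(m)\right|
=d_K(p,\tilde p).
\end{align*}

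The only nontrivial step is the first one—passing from $L^1$ convergence of densities to uniform convergence of cdfs—but this is a standard consequence of Fubini/monotone arguments and the uniform bound $|\tilde F_n(y)-\tilde F(y)|\le\|\tilde p_n-\tilde p\|_{L^1}$. Everything else is mechanical, and no regularity beyond continuity of the densities is needed.
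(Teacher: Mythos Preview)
Your proof is correct and follows essentially the same approach as the paper: both arguments use the bound $|\tilde F_n(y)-\tilde F(y)|\le\|\tilde p_n-\tilde p\|_{L^1}$, the Lipschitz continuity of $t\mapsto t^m(1-t)^{K-m}$ on $[0,1]$, and the normalization $\int p=1$ to control the difference of the pmf coordinates, then pass through the finite sum defining $d_K$. The paper packages these steps via the $\ell^1$ norm of the vector $\Phi_K(p,\tilde p_n)-\Phi_K(p,\tilde p)$ and applies the reverse triangle inequality at the end, whereas you work coordinate-wise first and sum afterward, but the substance is identical.
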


\begin{proof} 
Recall that for any pair of densities \(p,\tilde p\) we define (in Section \ref{section: A rank-based / binomial mapping})
\[
\Phi_K(p,\tilde p)
=\bigl(\mathbb{Q}_K(0),\,\mathbb{Q}_K(1),\,\dots,\,\mathbb{Q}_K(K)\bigr)
\;\in\;\mathbb{R}^{K+1},
\]
and we write the \(m\)th component of $\Phi_{K}(p, \tilde{p})$ as
\[
[\Phi_K(p,\tilde p)]_m \;=\;\mathbb{Q}_K(m).
\]
Thus, if \(\tilde p_n\) has cdf \(\tilde F_n(y)\), then by definition
\[
[\Phi_K(p,\tilde p_n)]_m
=\int_{\mathbb{R}}
\binom{K}{m}\,
\bigl[\tilde F_n(y)\bigr]^m
\bigl[1-\tilde F_n(y)\bigr]^{K-m}
\,p(y)\,\mathrm{d}y.
\]
Hence,
\begin{align*}
\|\Phi_K(p,\tilde p_n) &- \Phi_K(p,\tilde p)\|_{\ell^1}\\
=&\sum_{m=0}^K
\left|
\int_{\mathbb{R}}
\binom{K}{m}\,
\Bigl(
[\tilde{F}_n(y)]^m [1 - \tilde{F}_n(y)]^{K-m}
\;-\;
[\tilde{F}(y)]^m [1 - \tilde{F}(y)]^{K-m}
\Bigr)
p(y)\,\mathrm{d}y
\right|
\end{align*}
and by the triangle inequality we have
\begin{align*}
\|\Phi_K(p,\tilde p_n) &- \Phi_K(p,\tilde p)\|_{\ell^1}
\,\,\\
\le&\sum_{m=0}^K \binom{K}{m}
\int_{\mathbb{R}}
\Bigl|
[\tilde{F}_n(y)]^m [1 - \tilde{F}_n(y)]^{K-m}
-
[\tilde{F}(y)]^m   [1 - \tilde{F}(y)]^{K-m}
\Bigr|
p(y)\,\mathrm{d}y.
\end{align*}
The function 
\begin{align*}
f(a) \;=\; a^m \,(1 - a)^{K-m},
\end{align*}
is Lipschitz on $[0,1]$ with a Lipschitz constant $C_{K,m}<\infty$. As a consequence,
\begin{align*}
\bigl|f(\tilde{F}_n(y)) - f(\tilde{F}(y))\bigr|
\le
C_K\,\bigl|\tilde{F}_n(y) - \tilde{F}(y)\bigr|,
\end{align*}
where $C_{K}=\sup_{0\leq m\leq K}C_{K,m}<\infty$, and therefore,
\begin{align}\label{eq1:thereomA1}
\|\Phi_K(p,\tilde p_n) &- \Phi_K(p,\tilde p)\|_{\ell^1}
\,\le\,
C'_K
\int_{\mathbb{R}}
\bigl|\tilde{F}_n(y) - \tilde{F}(y)\bigr|
\,p(y)\,\mathrm{d}y,
\end{align}
where the constant $C'_K = C_{K}\sum_{m=0}^{K}\binom{K}{m}<\infty$ depends only on $K$.

Note that
\begin{align*}
\tilde{F}_n(y) \;-\; \tilde{F}(y)
\;=\;
\int_{-\infty}^y \bigl(\tilde{p}_n(t) - \tilde{p}(t)\bigr)\,\mathrm{d}t,
\end{align*}
hence
\begin{align*}
\bigl|\tilde{F}_n(y) - \tilde{F}(y)\bigr|
\;\le\;
\int_{\mathbb{R}} \bigl|\tilde{p}_n(t) - \tilde{p}(t)\bigr|\,\mathrm{d}t
\;=\;
\|\tilde{p}_n - \tilde{p}\|_{L^1},
\end{align*}
and, as a consequence,
\begin{align}\label{eq2:thereomA1}
\int_{\mathbb{R}}
\bigl|\tilde{F}_n(y) - \tilde{F}(y)\bigr|
p(y)\,\mathrm{d}y
\;\le\;
\|\tilde{p}_n - \tilde{p}\|_{L^1}
\int_{\mathbb{R}} p(y)\,\mathrm{d}y
\;=\;
\|\tilde{p}_n - \tilde{p}\|_{L^1},
\end{align}
since $\int_{\mathbb{R}} p(y)\,\mathrm{d}y = 1$.

Combining Equation \ref{eq1:thereomA1} and \ref{eq2:thereomA1} we arrive at
\begin{align*}
\|\Phi_K(p,\tilde p_n) &- \Phi_K(p,\tilde p)\|_{\ell^1}
\;\le\;
C'_K \,\|\tilde{p}_n - \tilde{p}\|_{L^1}
\;\longrightarrow\;
0
\quad\text{as } n\to\infty.
\end{align*}

Finally, since
\[
d_K\bigl(p,\tilde{p}_n\bigr)
\;=\;
\frac{1}{K+1}
\;\Bigl\|\Phi_K(p,\tilde p_n) - \mathbb{U}_K\Bigr\|_{\ell^1},
\]
we have
\begin{align}\label{eq3:theoremA1}
\bigl|d_K\bigl(p,\tilde{p}_n\bigr) \;-\; d_K\bigl(p,\tilde{p}\bigr)\bigr|
\;=\;
\frac{1}{K+1}
\bigl|
\|\Phi_K(p,\tilde p_n) - \mathbb{U}_K\|_{\ell^1}
-
\|\Phi_K(p,\tilde p)  - \mathbb{U}_K\|_{\ell^1}
\bigr|.
\end{align}
By the triangle inequality for $\ell^1$,
\begin{align}\label{eq4:theoremA1}
\left|
\|\Phi_K(p,\tilde p_n) - \mathbb{U}_K\|_{\ell^1}
-
\|\Phi_K(p,\tilde p)   - \mathbb{U}_K\|_{\ell^1}
\right|
\;\le\;
\|\Phi_K(p,\tilde p_n) - \Phi_K(p,\tilde p)\|_{\ell^1},
\end{align}
hence taking \ref{eq3:theoremA1} and \ref{eq4:theoremA1} together yields
\[
\bigl|d_K\bigl(p,\tilde{p}_n\bigr) \;-\; d_K\bigl(p,\tilde{p}\bigr)\bigr|
\;\le\;
\frac{1}{K+1} \,\|\Phi_K(p,\tilde p_n) - \Phi_K(p,\tilde p)\|_{\ell^1}.
\]
Since $\|\Phi_K(p,\tilde p_n) - \Phi_K(p,\tilde p)\|_{\ell^1} \to 0$, it follows that
\[
\lim_{n\to\infty} d_K\bigl(p,\tilde{p}_n\bigr)
\;=\;
d_K\bigl(p,\tilde{p}\bigr),
\]
\end{proof}

\subsection{Proof of Theorem \ref{Theorem: Convexity in the First Argument}}
Finally we give the proof of the convexity of $d_{K}$ w.r.t. the first argument.

\begin{proof}[Proof of Theorem \ref{Theorem: Convexity in the First Argument}]
Let \(p_1,p_2\) be two densities and \(\lambda\in[0,1]\).  Define
\[
p(y)=\lambda p_1(y)+(1-\lambda)p_2(y).
\]
Since
\[
[\Phi_K(p,\tilde p)]_n
=\int_{\mathbb{R}}\binom{K}{n}\,[\tilde F(y)]^n[1-\tilde F(y)]^{K-n}\,p(y)\,\mathrm{d} y,
\]
linearity of the integral gives
\[
\Phi_K(p,\tilde p)
=\lambda\,\Phi_K(p_1,\tilde p)+(1-\lambda)\,\Phi_K(p_2,\tilde p).
\]
Hence, for each \(n\),
\begin{eqnarray*}
\left|[\Phi_K(p,\tilde p)]_n-\tfrac1{K+1}\right|
&=&\left|\lambda\bigl([\Phi_K(p_1,\tilde p)]_n-\tfrac1{K+1}\bigr)
+(1-\lambda)\bigl([\Phi_K(p_2,\tilde p)]_n-\tfrac1{K+1}\bigr)\right|\\
&\le&\lambda\,\left|[\Phi_K(p_1,\tilde p)]_n-\tfrac1{K+1}\right|
+(1-\lambda)\,\left|[\Phi_K(p_2,\tilde p)]_n-\tfrac1{K+1}\right|.
\end{eqnarray*}
Summing over \(n=0,\dots,K\) and dividing by \(K+1\) yields
\[
d_K(p,\tilde p)
=\frac1{K+1}\sum_{n=0}^K\bigl|\Phi_K(p,\tilde p)_n-\tfrac1{K+1}\bigr|
\le\lambda\,d_K(p_1,\tilde p)+(1-\lambda)\,d_K(p_2,\tilde p).
\] 
\end{proof}

\section{Proofs of Section \ref{section: A rank-based / binomial mapping}}\label{appendix:proofs section A rank-based / binomial mapping}

In this appendix we give complete proofs for the theorems and claims in Sections \ref{section: A rank-based / binomial mapping}.
\begin{itemize}
  \item Theorem \ref{thm:PhiK_Properties}. Characterizes the binomial mapping 
    \(\Phi_{K}\), showing it is well-defined, linear in its first argument, bounded, and continuous under mild regularity conditions.
  \item Theorem \ref{theorem:Riesz Representation}. Establishes the Riesz representation of \(\Phi_{K}\), expressing each probability mass $\mathbb{Q}_{K}(n)$ as an \(L^{2}\) inner product with a Bernstein basis function.
  \item Theorem \ref{lemma: uniform boudn for q_K convergence to 1}. Provides a uniform bound on the deviation \(\|q_{K}-1\|_{\infty}\) in terms of the discrepancy \(d_{K}(p,\tilde p)\).
  \item Theorem \ref{theorem:general_forward_pushforward_expanded}. Derives an explicit Bernstein-based representation for a push-forward density \(p_K\) via a continuously differentiable map.
\end{itemize}

\subsection{Properties of the map \(\Phi_{K}\)}
\begin{theorem}
\label{thm:PhiK_Properties}
Let $p$ and $\tilde{p}$ be pdfs on $\mathbb{R}$, with cdfs $F$ and $\tilde{F}$, respectively. For each integer $K \ge 1$, recall that 
\[
\Phi_K(p,\tilde p)
=\bigl(\mathbb{Q}_K(0),\,\mathbb{Q}_K(1),\,\dots,\,\mathbb{Q}_K(K)\bigr)
\;\in\;\mathbb{R}^{K+1},
\]
where
\begin{align}\label{eq1:Properties of the map}
\mathbb{Q}_K(n)
\;=\;
\int_{\mathbb{R}}
\binom{K}{n}\,
\bigl[\tilde{F}(y)\bigr]^n
\bigl[1-\tilde{F}(y)\bigr]^{K-n}
\,p(y)\,\mathrm{d}y,
\quad n=0,\ldots,K.
\end{align}
Then the following properties hold
\begin{enumerate}[label=(\roman*)]
    \item \emph{Well-definedness}.
    For each fixed pair \((p,\tilde{p})\) and integer \(K\), the integral on the right -hand side of Equation \ref{eq1:Properties of the map} uniquely determines a pmf \(\mathbb{Q}_{K}\).

    \item \emph{Non-surjectivity}.
    Let $\Delta^K$ be the set of all pmfs on $\{0,1,\ldots,K\}$. The image of $\Phi_{K}$ is strictly contained in $\Delta^K$.

    \item \emph{Continuity}.
    Assume that $\|p_n - p\|_{L^1(\mathbb{R})}\;\rightarrow\;0$
    and $\|\tilde F_n - \tilde F\|_{\infty}\;\rightarrow\;0$.
    Then
    \[
    \bigl\|\Phi_K\bigl(p_n,\tilde p_n\bigr)\;-\;\Phi_K\bigl(p,\tilde p\bigr)\bigr\|_{\ell^1}
    \;\longrightarrow\;0.
    \]

    \item \emph{Linearity}. The operator $\Phi_{K}$ is linear in its first argument. Thus, for any $\alpha \in [0,1]$,
    \begin{align*}
    \Phi_{K}\big(\alpha p_{1} + (1 - \alpha) p_{2}, \tilde{p}\big) = \alpha\, \Phi_{K}(p_{1}, \tilde{p}) + (1 - \alpha)\, \Phi_{K}(p_{2}, \tilde{p}).
    \end{align*}

    \item \emph{Bounded operator}. $\Phi_{K}$ is a bounded operator from the space of continuous pdfs \( p \) on \( \mathbb{R} \) to the space of pmfs \( \mathbb{Q}_K \) on \( \{0, 1, \ldots, K\} \). Specifically
    \begin{align*}
    \triplenorm{\Phi_{K}} = \sup_{\|p\|_{L^{1}}\leq 1}\|\Phi_{K}(p, \tilde{p})\|_{\text{TV}} = \|\mathbb{Q}_{K}\|_{TV}=1,
    \end{align*}
    where $\triplenorm{\cdot}$ denotes the operator norm and $\|\cdot\|_{\text{TV}}$ the total variation norm.
\end{enumerate}
\end{theorem}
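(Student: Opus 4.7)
The plan is to handle the five properties in order of increasing difficulty: (i), (iv), (v) follow mechanically from the integral formula together with the binomial identity, (iii) is a triangle-inequality argument parallel to Theorem~\ref{thm:continuity-second-arg}, and (ii) is the main obstacle since it requires exhibiting an explicit pmf outside the image.

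For (i) I would observe that the integrand $\binom{K}{n}\tilde F(y)^{n}(1-\tilde F(y))^{K-n}p(y)$ is dominated by $\binom{K}{n}p(y)\in L^{1}(\mathbb R)$ since $\tilde F\in[0,1]$, so every coordinate of $\Phi_{K}(p,\tilde p)$ is a finite nonnegative number; Fubini combined with the binomial identity $\sum_{n}b_{n,K}(\tilde F(y))=1$ then gives $\sum_{n}\mathbb Q_{K}(n)=\int p\,\mathrm{d}y=1$, so $\Phi_{K}(p,\tilde p)\in\Delta^{K}$. Linearity (iv) is immediate from linearity of the Lebesgue integral in the density argument. For (v) the same Fubini--binomial step yields
\begin{align*}
\|\Phi_{K}(p,\tilde p)\|_{\ell^{1}}\;=\;\sum_{n=0}^{K}\int_{\mathbb R}b_{n,K}(\tilde F(y))\,p(y)\,\mathrm{d}y\;\le\;\|p\|_{L^{1}},
\end{align*}
with equality whenever $p\ge 0$, so $\triplenorm{\Phi_{K}}=1$ and the bound is attained on every probability density.

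For continuity (iii) I follow the scheme used in the proof of Theorem~\ref{thm:continuity-second-arg}. Writing $h_{m,n}(y):=\binom{K}{m}\tilde F_{n}(y)^{m}(1-\tilde F_{n}(y))^{K-m}$ and $h_{m}$ analogously with $\tilde F$, I would add and subtract $\int h_{m,n}\,p\,\mathrm{d}y$ to split $\mathbb Q_{K}^{(n)}(m)-\mathbb Q_{K}(m)$ as $\int(h_{m,n}-h_{m})\,p\,\mathrm{d}y+\int h_{m,n}\,(p_{n}-p)\,\mathrm{d}y$. Since $t\mapsto\binom{K}{m}t^{m}(1-t)^{K-m}$ is Lipschitz on $[0,1]$ with some constant $L_{K,m}$, the first integral is bounded by $L_{K,m}\|\tilde F_{n}-\tilde F\|_{\infty}\|p\|_{L^{1}}\to 0$, while the second is bounded by $\|h_{m,n}\|_{\infty}\|p_{n}-p\|_{L^{1}}\le\|p_{n}-p\|_{L^{1}}\to 0$. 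Summing over $m\in\{0,\dots,K\}$ yields the claimed $\ell^{1}$ convergence.

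The real work lies in (ii). My plan is to apply the pushforward change of variables $t=\tilde F(y)$ (valid because $\tilde p>0$ and $\tilde F$ is strictly monotone) to rewrite
\begin{align*}
\mathbb Q_{K}(n)\;=\;\int_{0}^{1} b_{n,K}(t)\,f(t)\,\mathrm{d}t,\qquad f(t):=q(\tilde F^{-1}(t)),
\end{align*}
where $f$ is a probability density on $[0,1]$ (this is exactly the content of Theorem~\ref{theorem:Riesz Representation}). For any $1\le n\le K-1$, the Bernstein polynomial $b_{n,K}$ attains its maximum at $t=n/K$ with value
\begin{align*}
M_{n,K}\;=\;\binom{K}{n}\Bigl(\tfrac{n}{K}\Bigr)^{n}\Bigl(\tfrac{K-n}{K}\Bigr)^{K-n},
\end{align*}
which is precisely the modal probability of $\mathrm{Binomial}(K,n/K)$ and therefore satisfies $M_{n,K}<1$ whenever $K\ge 2$. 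Hence $\mathbb Q_{K}(n)\le M_{n,K}\int_{0}^{1}f\,\mathrm{d}t=M_{n,K}<1$, so for every $K\ge 2$ and every interior $n\in\{1,\dots,K-1\}$ the vertex pmf $\mathbf e_{n}\in\Delta^{K}$ (mass $1$ at coordinate $n$) cannot lie in the image of $\Phi_{K}(\cdot,\tilde p)$, witnessing the required strict inclusion. The main technical point is the strict inequality $M_{n,K}<1$; although standard for binomial mode probabilities, it is what blocks the obvious density-based argument and makes (ii) substantially less trivial than the other four assertions.
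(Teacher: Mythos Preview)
Your arguments for (i), (iv), (v) and (iii) match the paper's: the same Fubini--binomial identity gives well-definedness and the unit operator norm, and the continuity proof is the same Lipschitz-plus-triangle-inequality split (the paper inserts $\Phi_K(p_n,\tilde p)$ as the intermediate term rather than your $\Phi_K(p,\tilde p_n)$, which is immaterial).

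The genuine divergence is in (ii). The paper argues by contradiction for the single pmf $(0,1,0)\in\Delta^2$: if $\mathbb{Q}_2(0)=\mathbb{Q}_2(2)=0$ then $(1-\tilde F)^2=0$ and $\tilde F^2=0$ $p$-a.e., forcing $\tilde F\equiv1$ and $\tilde F\equiv0$ simultaneously on the support of $p$, a contradiction; it then asserts ``hence $\Phi_K$ for general $K$'' without writing out the extension. Your route via the pointwise bound $b_{n,K}(t)\le M_{n,K}<1$ is more general and more quantitative---it excludes every interior vertex $\mathbf e_n$ for every $K\ge2$ in one stroke and gives an explicit gap. One caveat: the change of variables $t=\tilde F(y)$ is a detour that imports the extraneous hypothesis $\tilde p>0$, which is not assumed in the theorem. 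You do not need it: the inequality
\[
\mathbb{Q}_K(n)=\int_{\mathbb{R}} b_{n,K}\bigl(\tilde F(y)\bigr)\,p(y)\,\mathrm{d}y
\;\le\; M_{n,K}\int_{\mathbb{R}} p(y)\,\mathrm{d}y = M_{n,K}
\]
follows directly from $\sup_{[0,1]}b_{n,K}=M_{n,K}$, so your bound holds for arbitrary $\tilde p$ once you drop the pushforward step.
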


\begin{proof}[Proof of Theorem \ref{thm:PhiK_Properties}]
  \smallskip           
  \noindent
\begin{enumerate}[label=(\roman*)]
    \item Fix \(K\ge0\) and recall that
        \[
        \mathbb{Q}_K(n)
        =\int_{\mathbb{R}}\binom{K}{n}\,[\tilde F(y)]^n\,[1-\tilde F(y)]^{K-n}\,p(y)\,\mathrm{d} y,
        \qquad n=0,\dots,K.
        \]
        
        \emph{Non‐negativity.}  Since \(p(y)\ge0\) and each Bernstein‐integrand 
        \(\binom{K}{n}[\tilde F(y)]^n[1-\tilde F(y)]^{K-n}\ge0\),
        it follows that \(\mathbb{Q}_K(n)\ge0\) for every \(n\).
        
        \emph{Normalization.}  By Fubini’s theorem we may interchange sum and integral, i.e.,
        \[
        \sum_{n=0}^K \mathbb{Q}_K(n)
        =\int_{\mathbb{R}}p(y)\sum_{n=0}^K\binom{K}{n}[\tilde F(y)]^n[1-\tilde F(y)]^{K-n}\,\mathrm{d} y.
        \]
        But the inner sum is \((\tilde F(y)+(1-\tilde F(y)))^K=1^K=1\) by the binomial theorem, hence
        \[
        \sum_{n=0}^K \mathbb{Q}_K(n)
        =\int_{\mathbb{R}}p(y)\,\mathrm{d} y
        =1.
        \]
        
        \emph{Uniqueness.}  Each \(\mathbb{Q}_K(n)\) is defined by a single integral depending only on \(p\) and \(\tilde F\).  Thus the mapping 
        \(\Phi_K:(p,\tilde p)\mapsto(\mathbb{Q}_K(0),\ldots,\mathbb{Q}_K(K))\)
        is well‐defined and unique for each choice of \((p,\tilde p)\) and \(K\).
    \item To see that \(\Phi_K\) is not surjective, it suffices to exhibit a pmf in \(\Delta^K\) that cannot arise from any \((p,\tilde p)\).  We do this for \(K=2\).

    Define
    \[
    \mathbb Q = (0,1,0)\in\Delta^2,
    \]
    so that \(\mathbb Q(0)=0\), \(\mathbb Q(1)=1\), and \(\mathbb Q(2)=0\).  Suppose, for the sake of contradiction, that there exist densities \(p,\tilde p\) on \(\mathbb{R}\) with cdf \(\tilde F\) such that
    \(\Phi_2(p,\tilde p)=\mathbb Q\).  Then by definition
    \begin{align*}
    \mathbb Q(0)
    &=\int_\mathbb{R} [1-\tilde F(y)]^2\,p(y)\,\mathrm{d} y = 0,\\
    \mathbb Q(1)
    &=2\int_\mathbb{R} \tilde F(y)\,[1-\tilde F(y)]\,p(y)\,\mathrm{d} y = 1,\\
    \mathbb Q(2)
    &=\int_\mathbb{R} [\tilde F(y)]^2\,p(y)\,\mathrm{d} y = 0.
    \end{align*}
    The first and third equations force
    \[
    [1-\tilde F(y)]^2=0
    \quad\text{and}\quad
    [\tilde F(y)]^2=0
    \quad
    \text{\(p\)-almost everywhere,}
    \]
    hence \(\tilde F(y)=1\) and \(\tilde F(y)=0\) \(p\)-a.e.  Since \(p\) is a probability density, its support has positive measure, so we cannot have \(\tilde F\equiv1\) and \(\tilde F\equiv0\) on that support.  This contradiction shows no \((p,\tilde p)\) can produce \(\mathbb Q=(0,1,0)\).  Therefore \(\Phi_2\), and hence \(\Phi_K\) for general \(K\), fails to be surjective.

    \item Let \(p_i\to p\) in \(L^1(\mathbb{R})\) and let \(\tilde p_j\) have cdfs \(\tilde F_j\to\tilde F\) uniformly on $\mathbb{R}$.  We show
    \[
    \bigl\|\Phi_K(p_n,\tilde p_n)-\Phi_K(p,\tilde p)\bigr\|_{1}
    \;\to\;0.
    \]
    By the triangle inequality,
    \begin{align}\label{eq1:proof continuity of phi}
    \nonumber\bigl\|\Phi_K(p_i,\tilde p_n)-\Phi_K(p,\tilde p)\bigr\|_{\ell^1}
    &\le 
    \bigl\|\Phi_K(p_i,\tilde p_n)-\Phi_K(p_i,\tilde p)\bigr\|_{\ell^1}\\
    &\quad 
    +\;\bigl\|\Phi_K(p_i,\tilde p)-\Phi_K(p,\tilde p)\bigr\|_{\ell^1},
    \end{align}
    and we handle each term separately.
    
    \medskip\noindent
    \textbf{Continuity in \(\tilde p\).}  Fix \(p_i\).  For each \(m=0,\dots,K\), set
    \[
    g_j(y)
    =[\tilde F_j(y)]^m\,[1-\tilde F_j(y)]^{K-m},
    \quad
    g(y)
    =[\tilde F(y)]^m\,[1-\tilde F(y)]^{K-m}.
    \]
    Uniform convergence \(\|\tilde F_j-\tilde F\|_\infty\to0\) implies \(\|g_j-g\|_\infty\to0\).  Hence for each coordinate $m$ we have
    \[
    \bigl|\Phi_K(p_i,\tilde p_n)_m-\Phi_K(p_i,\tilde p)_m\bigr|
    \le\binom Km\int_{\mathbb{R}}|g_i(y)-g(y)|\,p_i(y)\,\mathrm{d} y
    \le\binom Km\|g_n-g\|_\infty,
    \]
    and summing over \(m\) yields
    \begin{align}\label{eq2:proof continuity of phi}
    \bigl\|\Phi_K(p_i,\tilde p_j)-\Phi_K(p_i,\tilde p)\bigr\|_{\ell^1}
    \;\le\;
    \sum_{m=0}^K\binom Km\|g_j-g\|_\infty
    =2^K\,\|g_j-g\|_\infty
    \;\longrightarrow\;0.
    \end{align}
    
    \medskip\noindent
    \textbf{Continuity in \(p\).}  
    Fix \(\tilde p\) (and write \(\tilde F\) for its cdf).  For each \(m=0,\dots,K\),
    \[
    \bigl[\Phi_K(p_i,\tilde p)\bigr]_m
    -\bigl[\Phi_K(p,\tilde p)\bigr]_m
    =\binom Km
    \int_{\mathbb{R}}
    [\tilde F(y)]^m[1-\tilde F(y)]^{K-m}
    \bigl(p_i(y)-p(y)\bigr)\,\mathrm{d} y.
    \]
    Taking absolute values and using \(\int|p_i-p|=\|p_i-p\|_{L^1}\) yields
    \[
    \bigl|\Phi_K(p_i,\tilde p)_m-\Phi_K(p,\tilde p)_m\bigr|
    \;\le\;\binom Km\,
    \|p_i-p\|_{L^1}.
    \]
    Summing over \(m\) gives
    \begin{align}\label{eq3:proof continuity of phi}
    \nonumber \bigl\|\Phi_K(p_i,\tilde p)-\Phi_K(p,\tilde p)\bigr\|_{\ell^1}
    =&\sum_{m=0}^K\bigl|\Phi_K(p_i,\tilde p)_m-\Phi_K(p,\tilde p)_m\bigr|
    \\\;\le\;&
    \nonumber \Bigl(\sum_{m=0}^K\binom Km\Bigr)
    \|p_i-p\|_{L^1}
    \\\;=\;&2^K\,\|p_i-p\|_{L^1}\longrightarrow 0.
    \end{align}
    Since \(p_i\to p\) in \(L^1\), the right‐hand side tends to zero.  Hence \(\Phi_K(\cdot,\tilde p)\) is continuous in its first argument. 

    \medskip\noindent
    Combining in Equations \ref{eq1:proof continuity of phi}, \ref{eq2:proof continuity of phi} and \ref{eq3:proof continuity of phi} yields
    \[
    \bigl\|\Phi_K(p_i,\tilde p_j)-\Phi_K(p,\tilde p)\bigr\|_{\ell^1}
    \;\longrightarrow\;0,
    \]
    i.e.\ \(\Phi_K\) is jointly continuous in \((p,\tilde p)\).

    \item  
    \medskip\noindent
    \textbf{Linearity in \(p\).}  
    Let \(p_1,p_2\) be two probability densities on \(\mathbb{R}\) and \(\alpha\in[0,1]\).  Set
    \[
    p=\alpha\,p_1+(1-\alpha)\,p_2.
    \]
    Then for each \(n=0,\dots,K\),
    \[
    \Phi_K(p,\tilde p)_n
    =\binom{K}{n}
    \int_{\mathbb{R}}[\tilde F(y)]^n\,[1-\tilde F(y)]^{K-n}\,p(y)\,\mathrm{d} y.
    \]
    By linearity of the integral,
    \begin{align*}
    \Phi_K(p,\tilde p)_n
    &=\binom{K}{n}\!\int_{\mathbb{R}}[\tilde F]^n[1-\tilde F]^{K-n}
    \bigl(\alpha p_1+(1-\alpha)p_2\bigr)\,\mathrm{d} y\\
    &=\alpha\,\Phi_K(p_1,\tilde p)_n+(1-\alpha)\,\Phi_K(p_2,\tilde p)_n.
    \end{align*}
    Since this holds for every coordinate \(n\), we conclude
    \[
    \Phi_K\bigl(\alpha p_1+(1-\alpha)p_2,\;\tilde p\bigr)
    =\alpha\,\Phi_K(p_1,\tilde p)+(1-\alpha)\,\Phi_K(p_2,\tilde p),
    \]
    i.e.\ \(\Phi_K(\cdot,\tilde p)\) is linear.
    
    \item Boundedness and operator norm.  
    Recall the total‐variation norm on \(\mathbb{R}^{K+1}\) is just the \(\ell^1\)‐norm.  For any pair of densities \(p, \tilde{p}\),
    \[
    \|\Phi_K(p,\tilde p)\|_{\mathrm{TV}}
    =\sum_{n=0}^K[\Phi_K(p,\tilde p)]_n
    =\int_{\mathbb{R}}p(y)\sum_{n=0}^K\binom{K}{n}[\tilde F(y)]^n[1-\tilde F(y)]^{K-n}\,\mathrm{d} y.
    \]
    By the binomial theorem the inner sum is \(\bigl(\tilde F+(1-\tilde F)\bigr)^K=1\).  Hence
    \[
    \|\Phi_K(p,\tilde p)\|_{\mathrm{TV}}
    =\int_{\mathbb{R}}p(y)\,\mathrm{d} y
    =\|p\|_{L^1}.
    \]
    Taking the supremum over all \(p\) with \(\|p\|_{L^1}\le1\) shows
    \[
    \triplenorm{\Phi_K} := \sup_{\|p\|_{L^1}\le1}\|\Phi_K(p,\tilde p)\|_{\mathrm{TV}} = \sum_{n=0}^{K} \mathbb{Q}_{K}(n)
    =1.
    \]
    Thus \(\Phi_K\) is a bounded linear operator with \(\|\Phi_K\|=1\).
    \end{enumerate}
\end{proof}

\subsection{Proof of Theorem \ref{theorem:Riesz Representation}}
\begin{proof}[Proof of Theorem \ref{theorem:Riesz Representation}]
  Recall that, by Theorem \ref{thm:PhiK_Properties}, for each fixed $\tilde p$, the map
  \[
    \Phi_K(\cdot,\tilde p):C(\mathbb{R})\;\longrightarrow\;\mathbb{R}^{K+1}
  \]
  is linear.  To exhibit its Riesz representation, it suffices to find, for each $n=0,1,\dots,K$, a function $f_n(y)\in L^{2}(\mathbb{R})$ such that
  \[
    [\Phi_K(p,\tilde p)]_n
    \;=\;
    \mathbb{Q}_K(n)
    \;=\;
    \int_{\mathbb{R}} f_n(y)\,p(y)\,\mathrm{d}y
    \;=\;
    \bigl\langle f_n,\,p\bigr\rangle_{L^2(\mathbb{R})}.
  \]

  However, by definition of $\Phi_K$,
  \[
    \mathbb{Q}_K(n)
    \;=\;
    \binom{K}{n}
    \int_{\mathbb{R}}
      \bigl[\tilde F(y)\bigr]^n\,\bigl[1-\tilde F(y)\bigr]^{K-n}
      \,p(y)\,\mathrm{d}y=\int_{\mathbb{R}}\left(b_{n,K}\circ \tilde{F}\right)(y)\,p(y)\,\mathrm{d}y,
  \]
  which is exactly $\langle\,b_{n,K}\circ\tilde F,\;p\rangle$ for the Bernstein polynomial of degree $K$. Therefore
  \[
    \Phi_K(p,\tilde p)
    \;=\;
    \bigl(
      \langle f_0,p\rangle,\,
      \langle f_1,p\rangle,\,
      \dots,\,
      \langle f_K,p\rangle
    \bigr),
  \]
  and the Riesz theorem yields the claimed representation, with
  \(\{f_n\}_{n=0}^K\) playing the role of the dual elements.

  \medskip\noindent
  Now assume $\tilde p(x)>0$ for all $x$.  Then $\tilde F'(x)=\tilde p(x)>0$. So $\tilde F$ is continuous and strictly increasing.  Hence $\tilde F$ is a bijection and admits the inverse $\tilde F^{-1}$.

  Setting $q(x)=p(x)/\tilde p(x)$ and changing variables $t=\tilde F(x)$ (so $x=\tilde F^{-1}(t)$ and $\mathrm{d} t=\tilde p(\tilde F^{-1}(t))\,\mathrm{d} x$) yields
  \[
    \mathbb{Q}_K(n)
    = \int_{\mathbb{R}} b_{n,K}\bigl(\tilde F(x)\bigr)\,p(x)\,\mathrm{d} x
    = \int_{0}^{1} b_{n,K}(t)\,q\bigl(\tilde F^{-1}(t)\bigr)\,\mathrm{d} t
    = \bigl\langle b_{n,K},\,q\circ\tilde F^{-1}\bigr\rangle_{L^2([0,1])}.
  \]
  This completes the proof.
\end{proof}

\subsection{Proof of Theorem \ref{thm:bernstein_truncation_ratio}}

\begin{proof}[Proof of Theorem \ref{thm:bernstein_truncation_ratio}]

\medskip\noindent  
\textbf{Existence of the Bernstein expansion.}  
Since \(p,\tilde p\in C(\mathbb{R})\) and \(\tilde p(x)>0\) for all \(x\), the ratio $q(x)\;=\;\frac{p(x)}{\tilde p(x)}$ is well‐defined and continuous on \(\mathbb{R}\).  Moreover, the cdf $\tilde{F}$ is \(C^1\) with \(\tilde F'(x)=\tilde p(x)>0\); hence \(\tilde F\) is strictly increasing and continuous.  It follows that
\(\tilde F^{-1}\) exists and is continuous .  Therefore, setting
\[
\tilde q(t)\;=\;q\bigl(\tilde F^{-1}(t)\bigr),
\]
we obtain \(\tilde q\in C([0,1])\).

Since \(\{b_{n,K}\}_{n\ge0}\) is a Schauder basis of \(C([0,1])\) (see \cite[Theorem 1.1.1]{lorentz2012bernstein} and \cite[Chapter 4]{megginson2012introduction}), there are unique coefficients \(\{\alpha_n\}_{n\ge0}\) such that
\[
\tilde q(t)
=\sum_{n=0}^\infty \alpha_n\,b_{n,n}(t),
\qquad t\in[0,1].
\]
Truncating at degree \(K\) yields
\[
\tilde q_K(t)
:=\sum_{n=0}^K \alpha_n\,b_{n,K}(t).
\]

\medskip\noindent  
\textbf{Dual‐basis (projection) representation.} Let 
\[
V_K=\mathrm{span}\{b_{0,K},\dots,b_{K,K}\}\subset L^2([0,1]),
\]
and define the Gram matrix $G$ with entries
\[
G_{n,m}
=\bigl\langle b_{n,K},\,b_{m,K}\bigr\rangle
=\int_0^1 b_{n,K}(t)\,b_{m,K}(t)\,\mathrm{d}t, \qquad n,m\in \{0,\ldots,K\},
\]
which is nonsingular (positive-definite).  The dual (biorthogonal) basis \(\{\tilde b_{n,K}\}\subset V_K\) is given by (see \cite[Section 2]{jiittler1998dual})
\[
\tilde b_{n,K}(t)
=\sum_{m=0}^K (G^{-1})_{n,m}\,b_{m,K}(t),
\]
so that \(\langle\tilde b_{n,K},b_{m,K}\rangle=\delta_{n,m}\).

Let \(P_K:L^2([0,1])\to V_K\) the orthogonal projection constructed as \(P_K(f)\in V_K\) that satisfies \(f-P_K(f)\perp V_K\). We write
\[
P_K(f)=\sum_{n=0}^K c_n\,\tilde b_{n,K},
\]
and impose for each $m$,
\[
0
=\bigl\langle f-P_K(f),\,b_{m,K}\bigr\rangle =\bigl\langle f,\,b_{m,K}\bigr\rangle
          - \sum_{n=0}^K c_n\,\bigl\langle \tilde b_{n,K},\,b_{m,K}\bigr\rangle
=\bigl\langle f,\,b_{m,K}\bigr\rangle 
- c_m,
\]
where \(c_m=\langle f,b_{m,K}\rangle\).  Thus
\[
P_K(f)
=\sum_{n=0}^K \bigl\langle f,b_{n,K}\bigr\rangle_{L^2([0,1])}\,\tilde b_{n,K}.
\]
Applying this to \(f=\tilde q\) gives
\[
\tilde q_K
=P_K(\tilde q)
=\sum_{n=0}^K 
\bigl\langle \tilde q,\,b_{n,K}\bigr\rangle_{L^2([0,1])}
\,\tilde b_{n,K}(t).
\]

\medskip\noindent  
\textbf{Identification of the coefficients.}  
Since \(\tilde q(t)=q(\tilde F^{-1}(t))\) and \(q(x)=p(x)/\tilde p(x)\), the change of variables \(t=\tilde F(x)\) yields
\begin{align*}
\bigl\langle \tilde q,\,b_{n,K}\bigr\rangle
=\int_0^1 b_{n,K}(t)\,\tilde q(t)\,\mathrm{d}t
&=\int_\mathbb{R} b_{n,K}\bigl(\tilde F(x)\bigr)\,q(x)\,\tilde p(x)\,\mathrm{d}x
\\&=\int_\mathbb{R} b_{n,K}\bigl(\tilde F(x)\bigr)\,p(x)\,\mathrm{d}x
=\mathbb{Q}_K(n).
\end{align*}
Hence
\[
\tilde q_K(t)
=\sum_{n=0}^K \mathbb{Q}_K(n)\,\tilde b_{n,K}(t).
\]
\end{proof}

\subsection{Proof Theorem \ref{lemma: uniform boudn for q_K convergence to 1}}

In this section we derive a uniform bound on the Bernstein–truncated ratio \(q_K\).  The main tool is the dual‐basis expansion of \(\tilde q_K\) and the fact (Lemma \ref{lemma:dual_basis_sum}) that the dual Bernstein functions sum to \(K+1\).

\begin{lemma}
\label{lemma:dual_basis_sum}
The dual Bernstein basis functions \(\tilde{b}_{m,K}(x)\) satisfy
\begin{align*}
\sum_{m=0}^{K}\tilde{b}_{m,K}(x)=K+1.
\end{align*}
\end{lemma}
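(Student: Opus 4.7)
The plan is to apply the dual-basis expansion derived in Theorem~\ref{thm:bernstein_truncation_ratio} to the constant function $f(x)\equiv K+1$ on $[0,1]$. Since the Bernstein polynomials $\{b_{n,K}\}_{n=0}^{K}$ span the space $V_K$ of polynomials of degree at most $K$, and constants clearly lie in $V_K$, the function $f$ admits a unique representation in the dual basis.

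First, recall from the proof of Theorem~\ref{thm:bernstein_truncation_ratio} that any $g\in V_K$ can be expanded in the dual basis as $g=\sum_{n=0}^{K}\langle g,b_{n,K}\rangle_{L^2([0,1])}\,\tilde b_{n,K}$. Taking $g=f\equiv K+1$ gives
\[
K+1
=\sum_{n=0}^{K}\Bigl\langle K+1,\,b_{n,K}\Bigr\rangle_{L^2([0,1])}\,\tilde b_{n,K}(x)
=(K+1)\sum_{n=0}^{K}\Bigl(\int_{0}^{1}b_{n,K}(t)\,\mathrm{d}t\Bigr)\tilde b_{n,K}(x).
\]
So it remains to compute $\int_0^1 b_{n,K}(t)\,\mathrm{d}t$.

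The key computational step is the classical Beta-integral identity
\[
\int_{0}^{1}b_{n,K}(t)\,\mathrm{d}t
=\binom{K}{n}\int_{0}^{1}t^{n}(1-t)^{K-n}\,\mathrm{d}t
=\binom{K}{n}\,B(n+1,K-n+1)
=\frac{1}{K+1},
\]
valid for every $n=0,\dots,K$. Substituting back yields $\sum_{n=0}^{K}\tilde b_{n,K}(x)=K+1$, as claimed.

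The only mild subtlety is confirming that the expansion formula applies to $f\equiv K+1$; this is immediate since constants belong to $V_K$, and thus the projection $P_K$ from Theorem~\ref{thm:bernstein_truncation_ratio} acts as the identity on them. No real obstacle arises—the argument is essentially a one-line application of biorthogonality combined with the standard Bernstein integral formula.
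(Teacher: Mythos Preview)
Your proof is correct. Both your argument and the paper's rely on the same two ingredients---the Beta-integral identity $\int_0^1 b_{n,K}=\tfrac{1}{K+1}$ and the partition of unity $\sum_n b_{n,K}=1$---but they package them differently. The paper works at the level of the Gram matrix: it shows that the all-ones vector $e$ satisfies $Ge=\tfrac{1}{K+1}e$, inverts to get $G^{-1}e=(K+1)e$, and then plugs this into the explicit formula $\tilde b_{m,K}=\sum_n(G^{-1})_{nm}b_{n,K}$. You instead apply the projection identity $g=\sum_n\langle g,b_{n,K}\rangle\,\tilde b_{n,K}$ (already established in Theorem~\ref{thm:bernstein_truncation_ratio}) directly to the constant $g\equiv K+1$, which lies in $V_K$. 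Your route is shorter and avoids re-deriving the Gram-matrix machinery, at the cost of depending on the earlier theorem; the paper's proof is more self-contained and makes the linear-algebraic structure (eigenvector of $G$) explicit, which may be useful elsewhere.
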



\begin{proof}
We begin by showing that the constant vector 
\[
e = (1,1,\dots,1)^\top
\]
is an eigenvector of the Gram matrix $G$ with eigenvalue $\lambda = 1/(K+1)$.  Writing out the $n$th component of $G\,e$ yields
\[
(G\,e)_n
= \sum_{m=0}^K G_{n m}\,e_m
= \sum_{m=0}^K \int_0^1 b_{n,K}(x)\,b_{m,K}(x)\,\mathrm{d}x
= \int_0^1 b_{n,K}(x)\Bigl(\sum_{m=0}^K b_{m,K}(x)\Bigr)\mathrm{d}x.
\]
Since the Bernstein polynomials form a partition of unity, $\sum_{m=0}^K b_{m,K}(x)=1$ (by the Binomial theorem), and each integrates to $1/(K+1)$ (indeed, 
$\int_{0}^{1}b_{n,K}(x)\,\mathrm{d}x =\binom{K}{n}B(n+1,K-n+1) =\frac{1}{K+1}\!$ by the Beta‐function identity), we obtain
\[
(G\,e)_n = \int_0^1 b_{n,K}(x)\,\mathrm{d}x = \frac1{K+1}.
\]
Hence
\begin{align}\label{eq1:lemma truncation}
G\,e = \frac1{K+1}\,e,
\end{align}
i.e.\ $e$ is an eigenvector with eigenvalue $1/(K+1)$.

Left-multiplying by $G^{-1}$ (positive-definite matrix) on both sides of Equation \ref{eq1:lemma truncation} yields
\[
G^{-1}e = (K+1)\,e.
\]
Which implies that
\[
(G^{-1}e)_{n}=\sum_{m=0}^K (G^{-1})_{n,m} = K+1
\quad\text{for each }n.
\]

Finally, by the definition of the dual basis,
\begin{eqnarray*}
\sum_{m=0}^K \tilde b_{m,K}(x)
&=& \sum_{m=0}^K \sum_{n=0}^K (G^{-1})_{n,m}\,b_{n,K}(x)\\
&=& \sum_{n=0}^K b_{n,K}(x)\,\Bigl(\sum_{m=0}^K (G^{-1})_{n m}\Bigr)\\
&=& (K+1)\sum_{n=0}^K b_{n,K}(x)\\
&=& K+1.
\end{eqnarray*}
where we again are used the partition‐of‐unity property $\sum_{n=0}^K b_{n,K}(x)=1$.
\end{proof}

\begin{proof}[Proof of Theorem \ref{lemma: uniform boudn for q_K convergence to 1}]
We have
\begin{align*}
\tilde{q}_K(t)=\sum_{m=0}^{K}\mathbb{Q}_K(m)\tilde{b}_{m,K}(x)=\sum_{m=0}^{K}\left[\frac{1}{K+1}-\left(\frac{1}{K+1}-\mathbb{Q}_K(m)\right)\right]\tilde{b}_{m,K}(t).
\end{align*}
Since \(\sum_{m=0}^{K}\tilde{b}_{m,K}(x)=K+1\) (see Lemma \ref{lemma:dual_basis_sum}), we rewrite this explicitly as
\begin{align*}
\tilde{q}_K(t)=1-\sum_{m=0}^{K}\left(\frac{1}{K+1}-\mathbb{Q}_K(m)\right)\tilde{b}_{m,K}(t).
\end{align*}
By the triangle inequality, we have
\begin{align*}
|\tilde{q}_K(t)-1|\leq \sum_{m=0}^{K}\left|\frac{1}{K+1}-\mathbb{Q}_K(m)\right|\;|\tilde{b}_{m,K}(x)| \leq (K+1)\sum_{m=0}^{K}\left|\frac{1}{K+1}-\mathbb{Q}_K(m)\right|,
\end{align*}
where the inequality \(|\tilde{b}_{m,K}(x)|\leq K+1\) follows from Lemma \ref{lemma:dual_basis_sum}. Recognizing the definition of \(d_{K}(p,\tilde{p})\), we have
\begin{align*}
|\tilde{q}_K(t)-1|\leq (K+1)^{2}d_{K}(p,\tilde{p}).
\end{align*}
Since \(q_K(x)=\tilde q_K\bigl(\tilde F(x)\bigr)\), the supremum over \(x\in\mathbb{R}\) coincides with the supremum over \(t\in[0,1]\).  Hence
\begin{align*}
    \|q_{K}(x)-1\|_{\infty} \leq (K+1)^{2}d_{K}(p,\tilde{p}).
\end{align*}
\end{proof}

\subsection{Proof of Theorem \ref{theorem:general_forward_pushforward_expanded}}

\begin{proof}[Proof of Theorem \ref{theorem:general_forward_pushforward_expanded}] 
Recall from Theorem \ref{thm:bernstein_truncation_ratio} that
\begin{align}\label{eq1:theorem truncation}
\tilde q_K(t)
=\sum_{m=0}^K\mathbb Q_K(m)\,\tilde b_{m,K}(t)
\end{align}
is precisely the degree-\(K\) truncation of the Bernstein expansion of \(\tilde q\).  Since \(\{b_{n,K}\}_{n\ge0}\) is a Schauder basis of \(\bigl(C([0,1]),\|\cdot\|_\infty\bigr)\), these truncations satisfy
\[
\|\tilde q_K - \tilde q\|_\infty
=\sup_{t\in[0,1]}|\tilde q_K(t)-\tilde q(t)|
\;\longrightarrow\;0
\qquad \text{when }\;K\to\infty ,
\]
i.e.\ \(\tilde q_K\to\tilde q\) uniformly on \([0,1]\) (see \cite[Section 1.1]{lorentz2012bernstein}). From Equation. \eqref{eq:pK definition} and \ref{eq1:theorem truncation},
\[
p_K(x)
=\tilde p(x)\,\tilde q_K\bigl(\tilde F(x)\bigr).
\]
Since \(\tilde p\) and \(\tilde F\) are continuous, for each fixed \(x\) the argument 
\(t_x=\tilde F(x)\in[0,1]\) is constant, and uniform convergence of \(\tilde q_K\) gives
\[
\lim_{K\to\infty}\tilde q_K\bigl(t_x\bigr)
=\tilde q\bigl(t_x\bigr).
\]
Hence
\[
\lim_{K\to\infty}p_K(x)
=\tilde p(x)\,\lim_{K\to\infty}\tilde q_K(\tilde F(x))
=\tilde p(x)\,\tilde q\bigl(\tilde F(x)\bigr)
=\tilde p(x)\,\frac{p(x)}{\tilde p(x)}
= p(x).
\]
\end{proof}

\section{Proofs of Section \ref{section:Sliced Multivariate ISL via Bernstein Polynomial Approximation}} \label{section: Multidimensional Extension of the Projection Schema}


In this subsection we prove Theorem \ref{thm:qs_convergence}, which shows that for each projection direction \(s\), the Bernstein‐projected density ratio \(q^s\) converges uniformly to 1 at the rate controlled by \(d_K(p,\tilde p)\) and the Hessian of \(q\).    

\begin{proof}[Proof of Theorem \ref{thm:qs_convergence}] 

    Integrating over $\mathbb{S}^{d}$ on both the left-hand and right hand sides of expression \eqref{eq:convergence of qK to 1} yields
    \begin{align*}
    \int_{\mathbb{S}^{d}} \|q^s - 1\|_\infty\, ds \le (K+1)^2 \int_{\mathbb{S}^{d}} d_K(s\#p, s\#\tilde{p})\, ds + \frac{1}{8K}\int_{\mathbb{S}^{d}} \|(q^s)''\|_\infty\, ds.
    \end{align*}
    Since $\tilde{p}(x)>0$ for any $x\in\mathbb{R}$ then \(q\in C^2(\mathbb{R}^d)\). Note that for any unit vector \(v\), the second directional derivative is \(D^2_{v,v}q(x) = v^\top \nabla^2q(x)v\), and hence \(|D^2_{v,v}q(x)| \le \|\nabla^2q(x)\|\). Taking the supremum over \(x\) yields the uniform bound \(\|\nabla^2q\|_\infty\).  Therefore, we can further write
    \begin{align*}
    \int_{\mathbb{S}^{d}} \|q^s - 1\|_\infty\, \mathrm{d}s \le (K+1)^2\, d_K^{\mathbb{S}^{d}}(p,\tilde{p}) + \frac{\|\nabla^2 q\|_\infty}{8K}\, \mathcal{L}(\mathbb{S}^{d}),
    \end{align*}
    where $\mathcal{L}(\mathbb{S}^{d})$ denotes the Lebesgue measure of $\mathbb{S}^{d}$ and $\|\nabla^2 q\|_\infty$ denotes the supremum over $[0,1]^d$ of the operator norm of the Hessian of $q$.
    Similarly, we obtain
    \begin{align*}
    \sup_{s\in \mathbb{S}^{d}} \left\|q^{s}-1\right\|_{\infty} \le (K+1)^{2} \, \sup_{s\in \mathbb{S}^{d}} d_{K}(s\#p, s\#\tilde{p}) + \frac{\|\nabla^{2} q\|_{\infty}}{8K}.
    \end{align*}

\end{proof}

\newpage

\section{Supplementary experiments} \label{Supplementary experiments}

\subsection{Evaluating dual-ISL on 1D Target Distributions} \label{appendix: 1D experiments}

Following the evaluation setup of \cite{zaheer2017gan,de2024training}, we draw \(N=1000\) i.i.d.\ samples from each of six one‐dimensional benchmark targets. The first three targets are classical pdf (e.g.\ \(\mathcal{N}(0,1)\), Cauchy, and Pareto), while the remaining three are equally‐weighted mixtures:
\begin{itemize}
  \item \textbf{Model\textsubscript{1}:} \(\tfrac12\mathcal{N}(5,2) + \tfrac12\mathcal{N}(-1,1)\).
  \item \textbf{Model\textsubscript{2}:} \(\tfrac13\mathcal{N}(5,2) + \tfrac13\mathcal{N}(-1,1) + \tfrac13\mathcal{N}(-10,3)\).
  \item \textbf{Model\textsubscript{3}:} \(\tfrac12\mathcal{N}(-5,2) + \tfrac12\mathrm{Pareto}(5,1)\).
\end{itemize}

All non-diffusion methods (Dual-ISL, ISL, GAN, WGAN and MMD-GAN) use the same generator architecture: a four-layer MLP with ELU activations and layer widths $[7,\,13,\,7,\,1]$. Each is trained for $10^4$ epochs using Adam with a fixed learning rate of $10^{-2}$. By contrast, the DDPM baseline employs a four-layer ELU-MLP score network with identical widths, augmented by a 16-dimensional sinusoidal time embedding. It is also trained for $10^4$ epochs with Adam (lr = $10^{-2}$) across $T=200$ diffusion steps, where the noise schedule $\{\beta_t\}$ is linearly spaced from $10^{-4}$ to $2\times10^{-2}$. Table \ref{paper Learning1D_table} summarizes the quantitative results, and the corresponding visualizations are shown in Figure \ref{fig:1d-distributions}.

\begin{figure}[!htbp]
  \centering
  \setlength{\tabcolsep}{4pt}
  \renewcommand{\arraystretch}{1.05}
  \begin{tabular}{@{}r|cccccc@{}}
    \toprule
    \rowcolor{gray!30}
    & \textbf{dual-ISL}
    & \textbf{ISL}
    & \textbf{WGAN}
    & \textbf{MMD-GAN}
    & \textbf{Diffusion}
    \\
    \midrule
    \rotatebox{90}{$\normdist{4}{2}$}
      & \includegraphics[width=2.2cm,height=2.2cm]{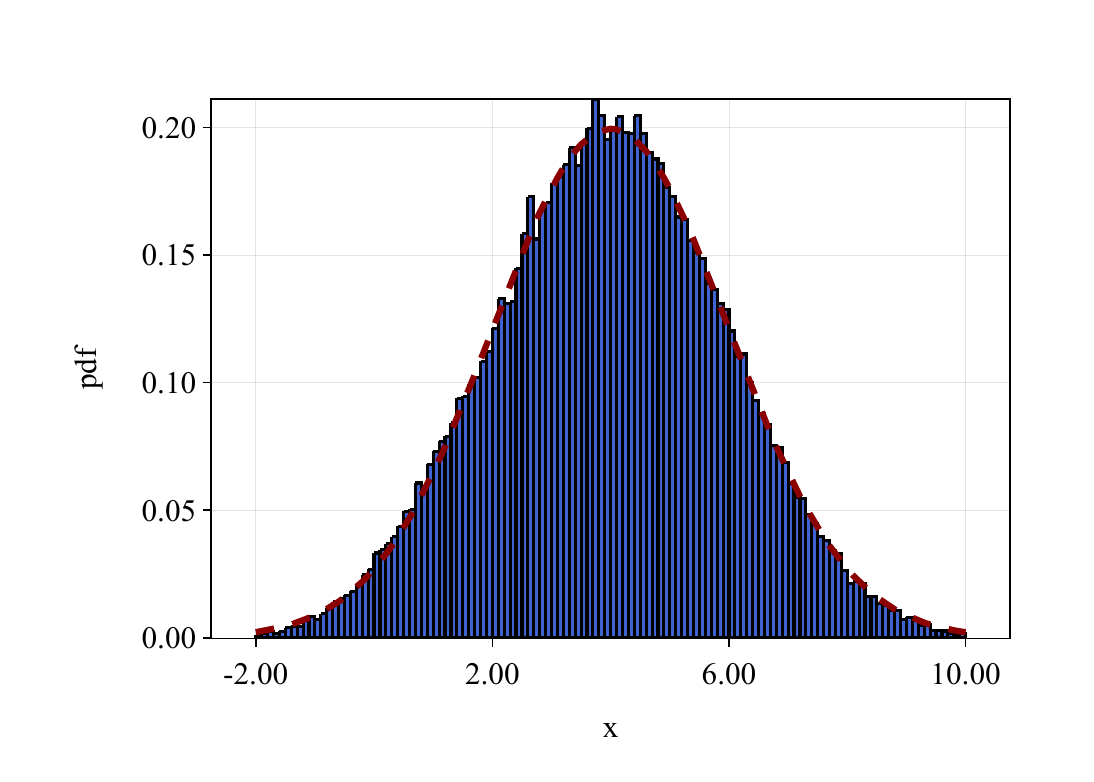}
      & \includegraphics[width=2.2cm,height=2.2cm]{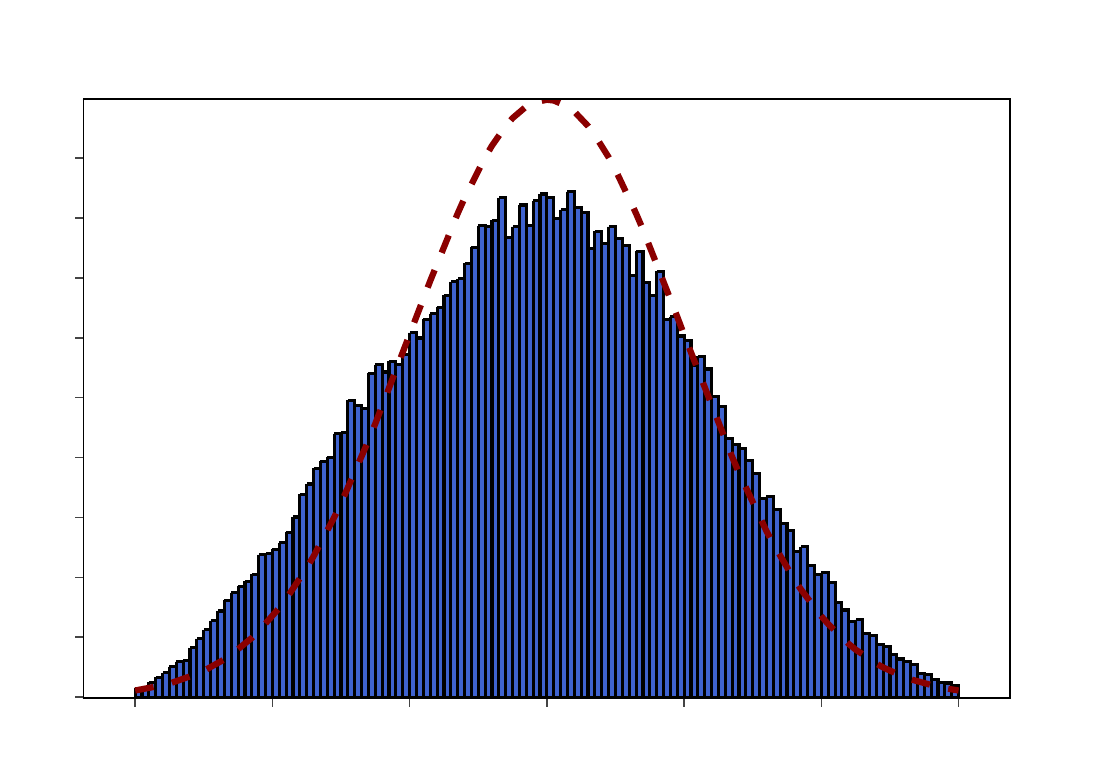}
      & \includegraphics[width=2.2cm,height=2.2cm]{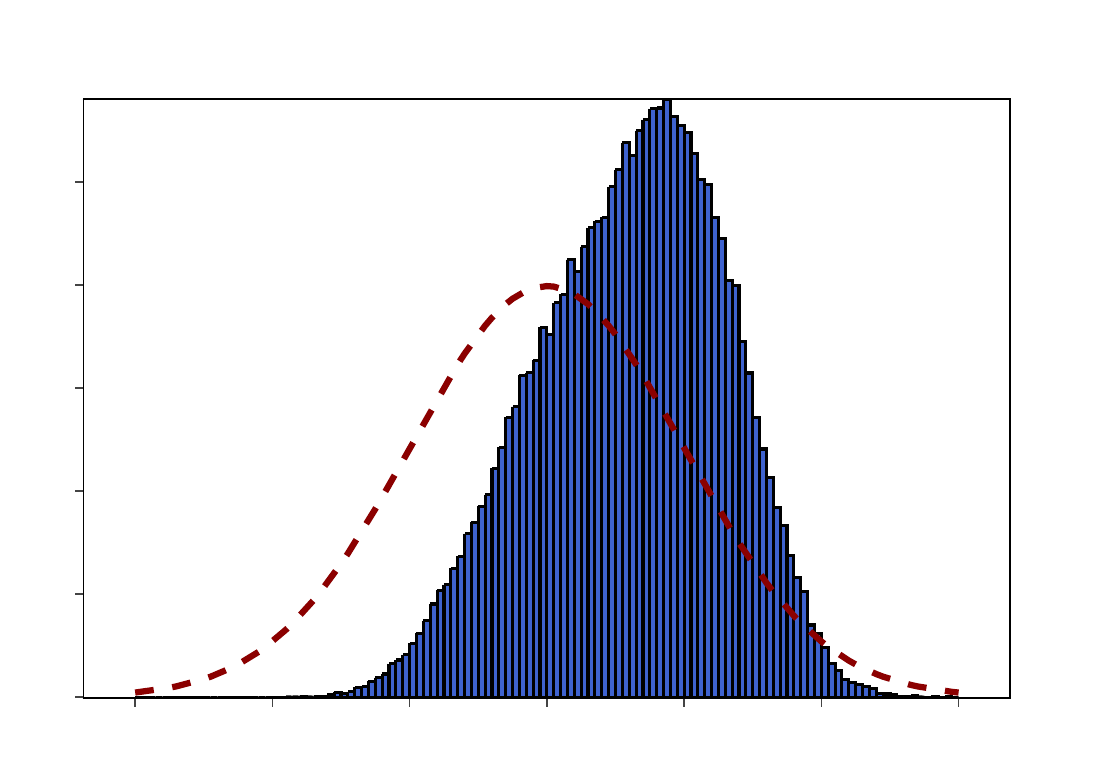}
      & \includegraphics[width=2.2cm,height=2.2cm]{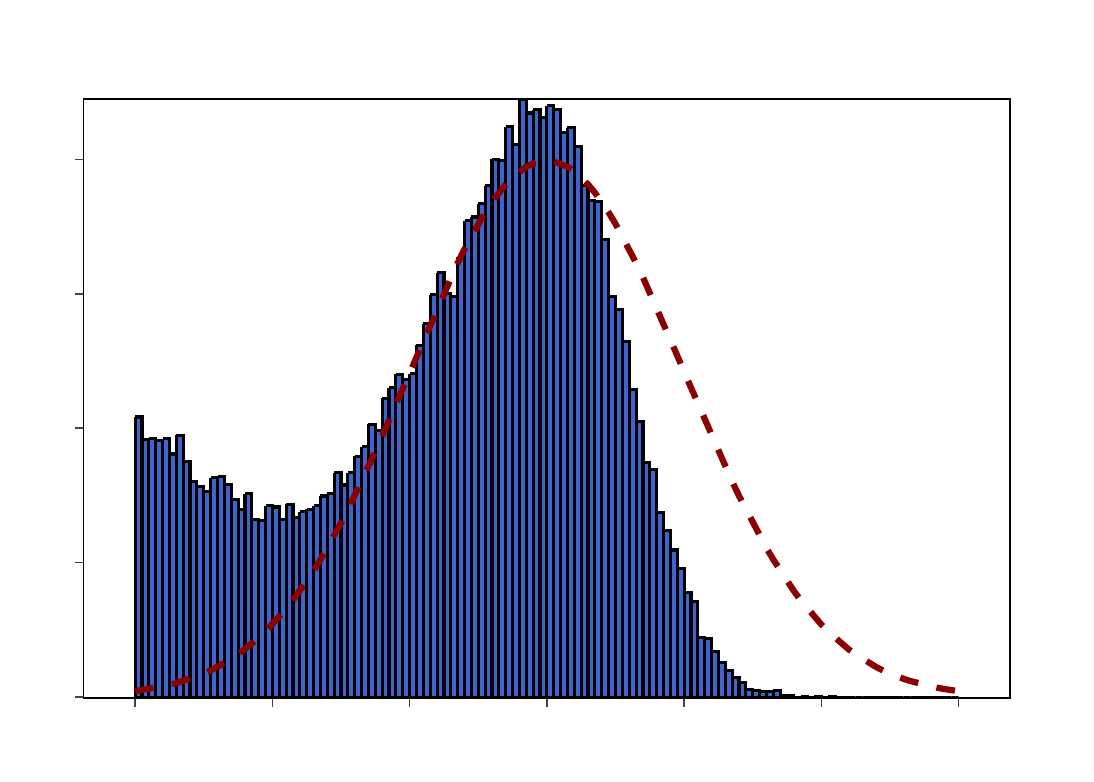}
      & \includegraphics[width=2.2cm,height=2.2cm]{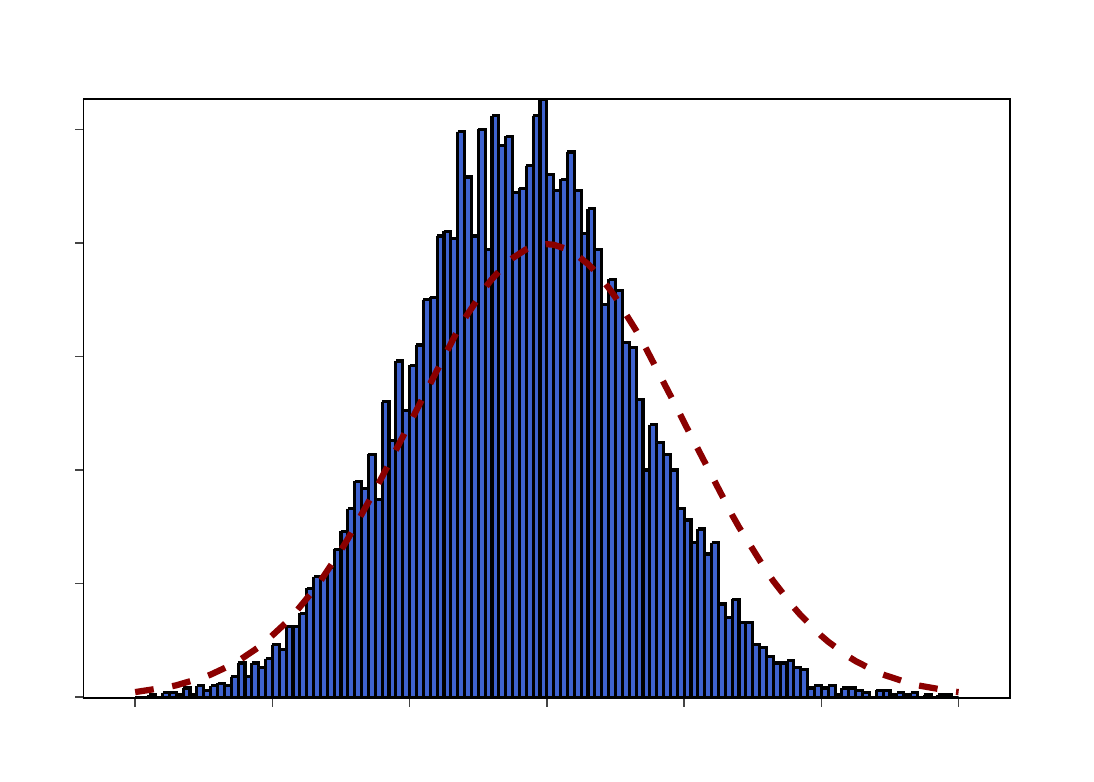}
    \\[2pt]
    \rotatebox{90}{$\text{Cauchy}(1,2)$}
      & \includegraphics[width=2.2cm,height=2.2cm]{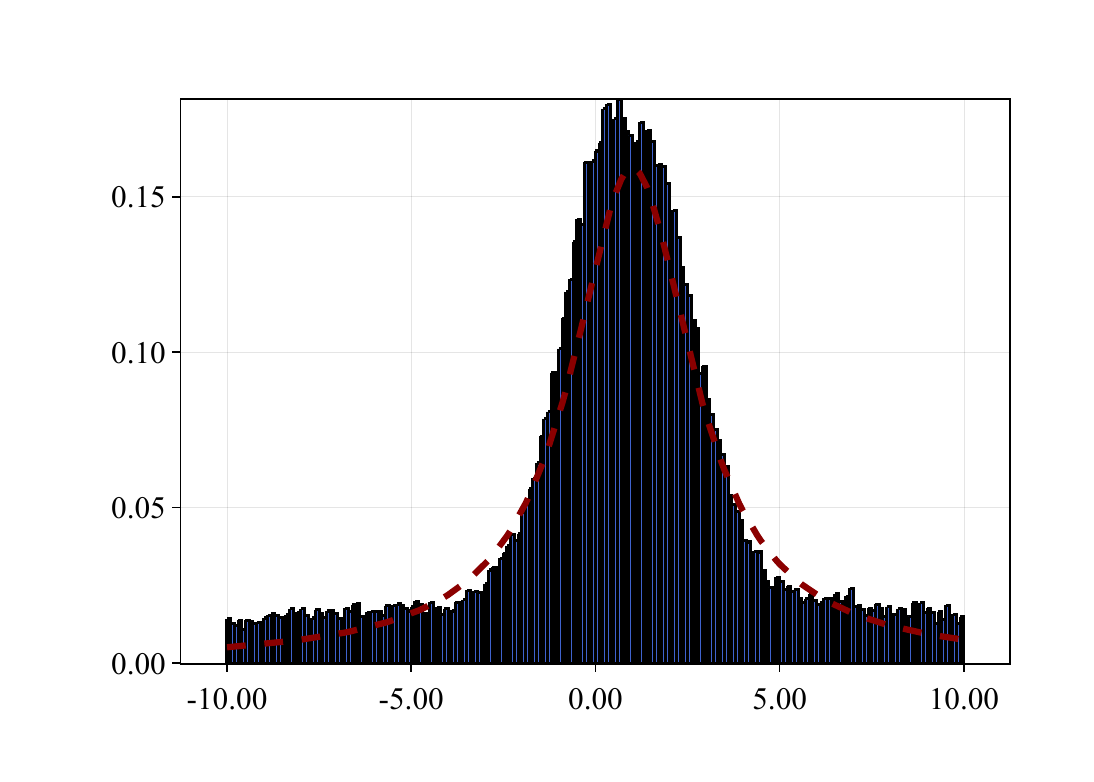}
      & \includegraphics[width=2.2cm,height=2.2cm]{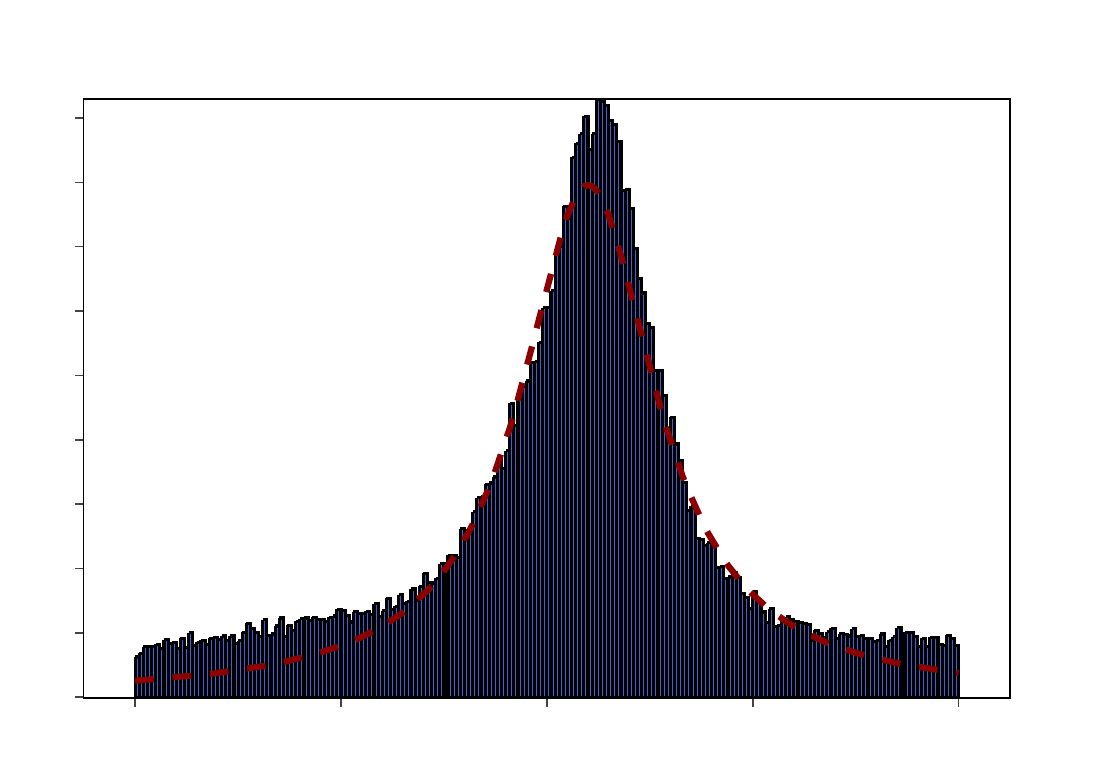}
      & \includegraphics[width=2.2cm,height=2.2cm]{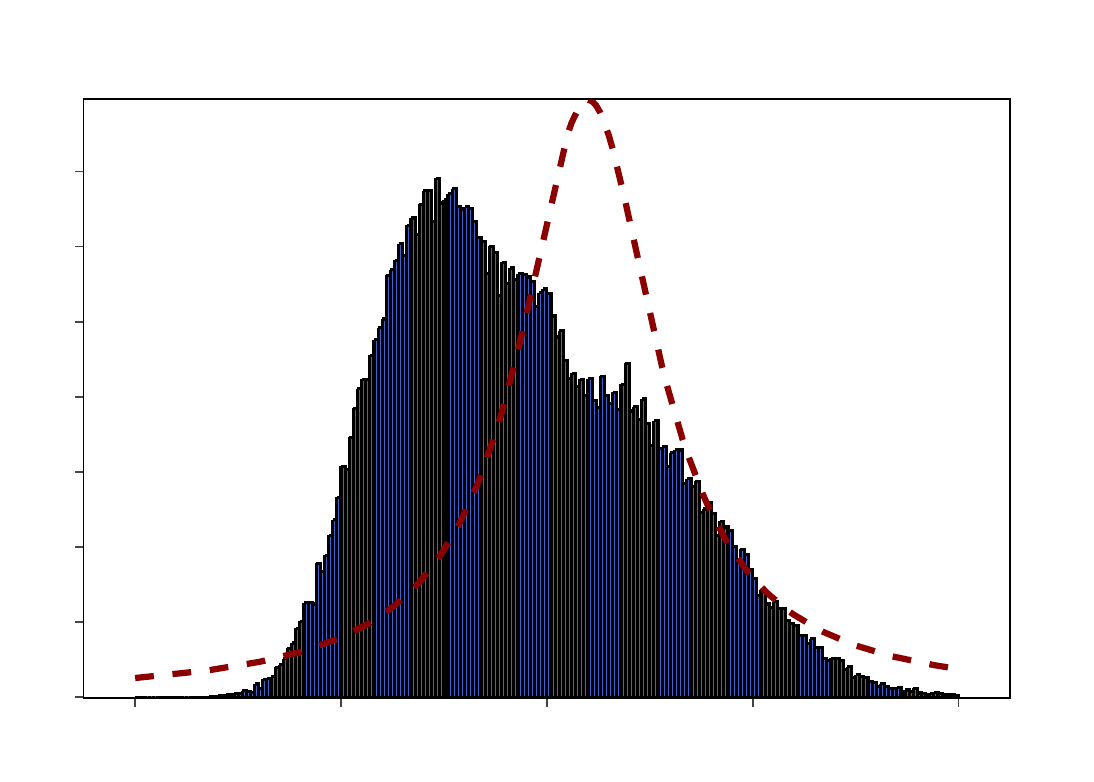}
      & \includegraphics[width=2.2cm,height=2.2cm]{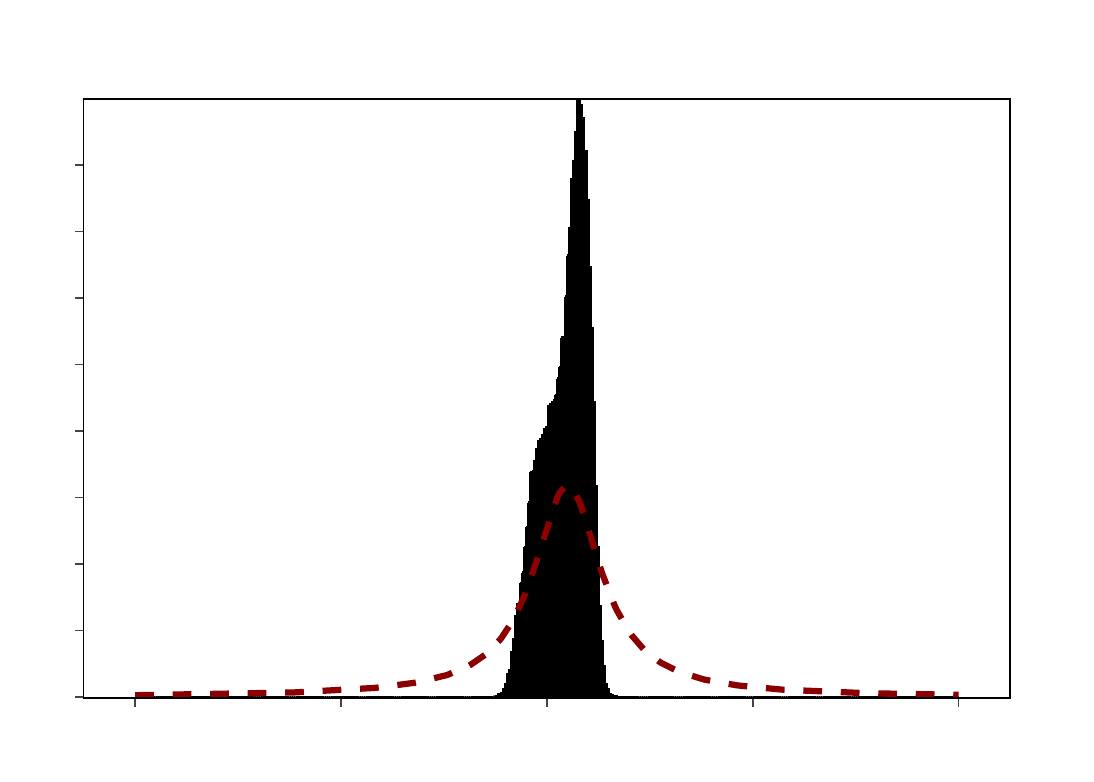}
      & \includegraphics[width=2.2cm,height=2.2cm]{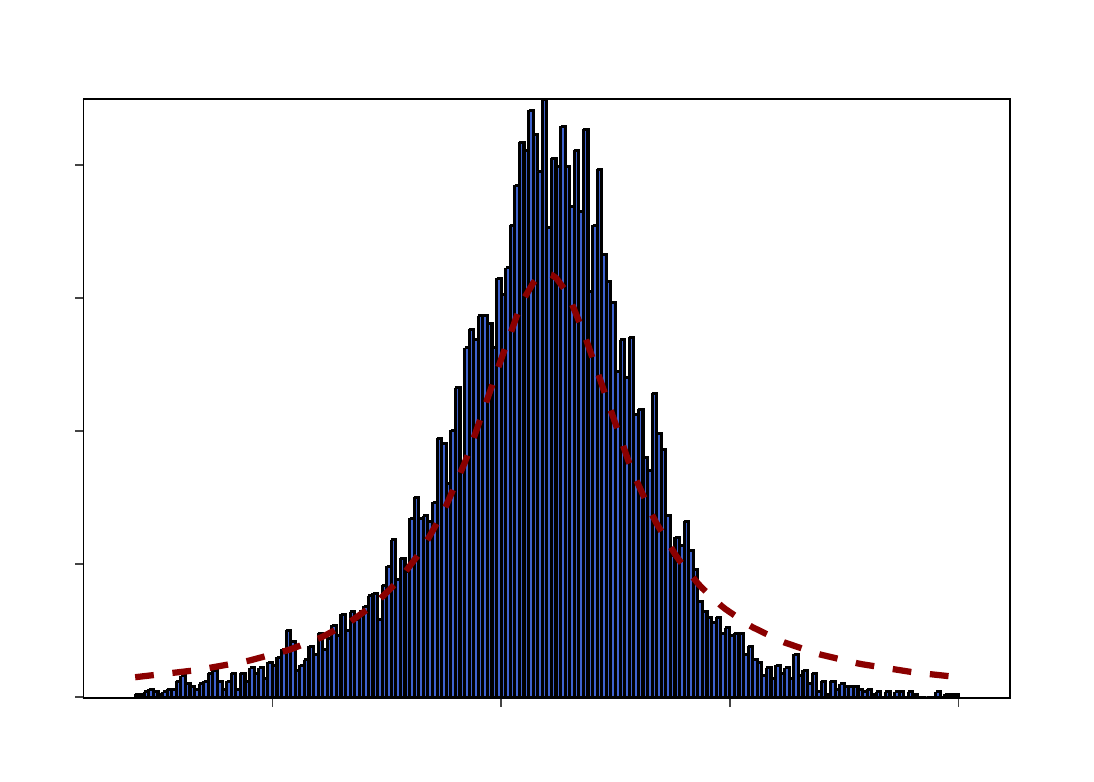}
    \\[2pt]
    \rotatebox{90}{$\text{Pareto}(1,1)$}
      & \includegraphics[width=2.2cm,height=2.2cm]{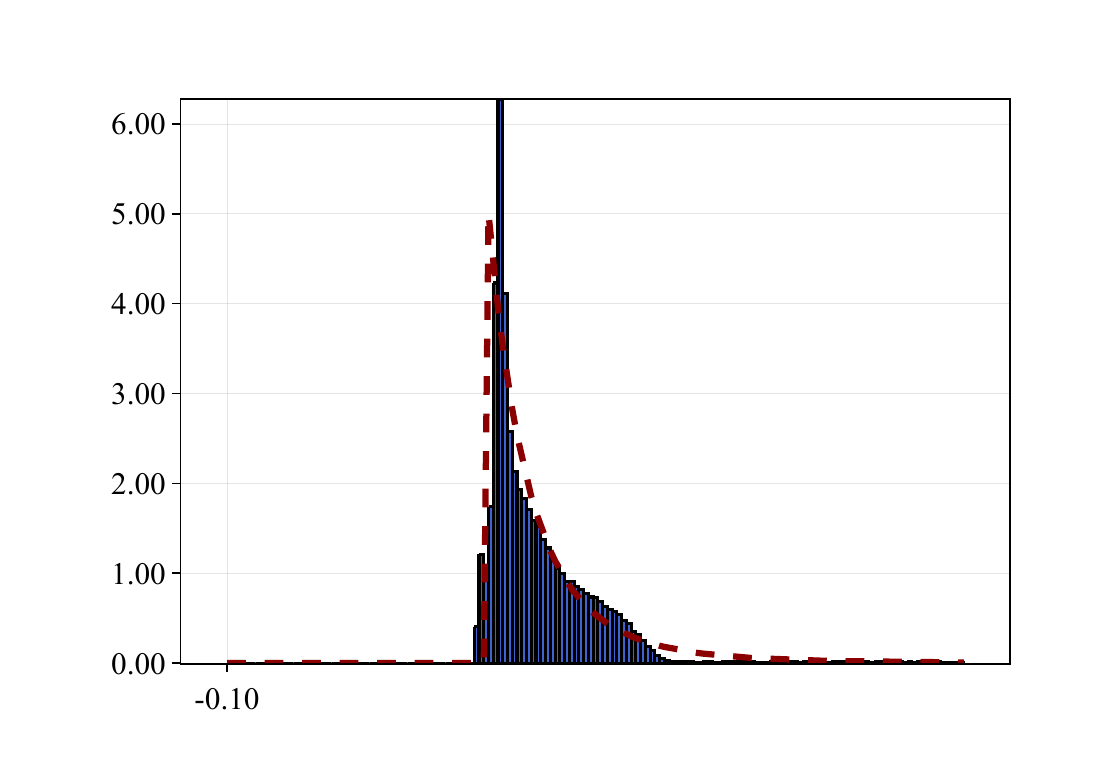}
      & \includegraphics[width=2.2cm,height=2.2cm]{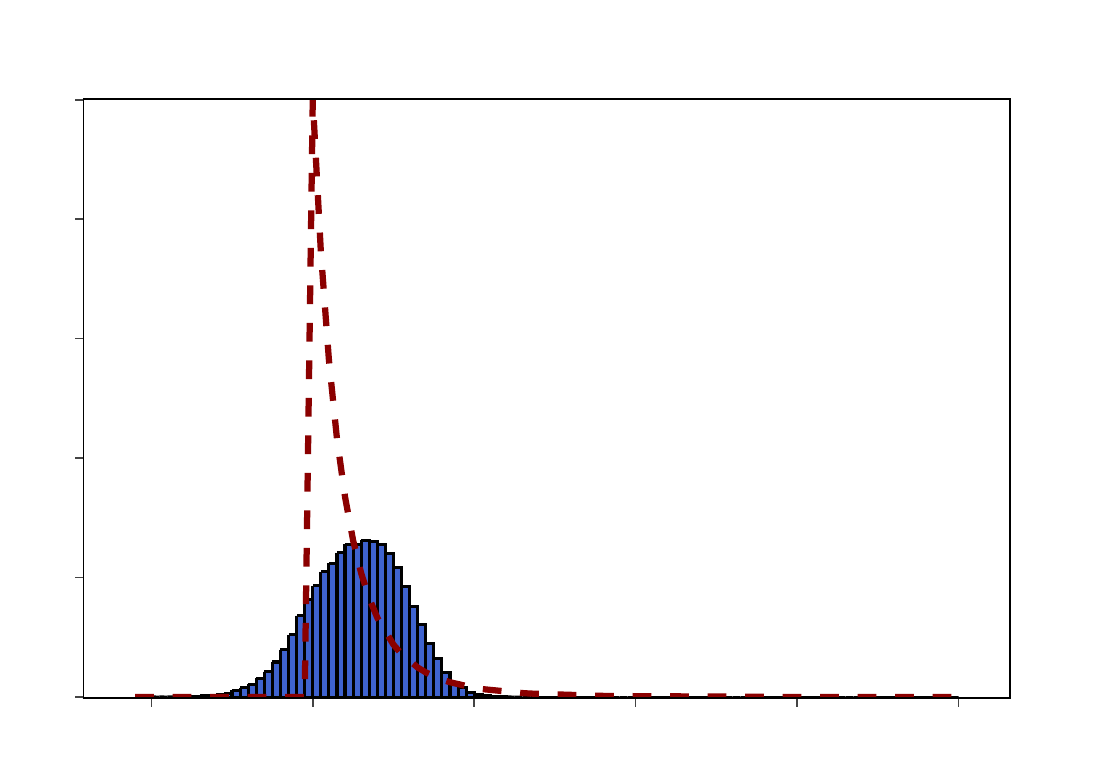}
      & \includegraphics[width=2.2cm,height=2.2cm]{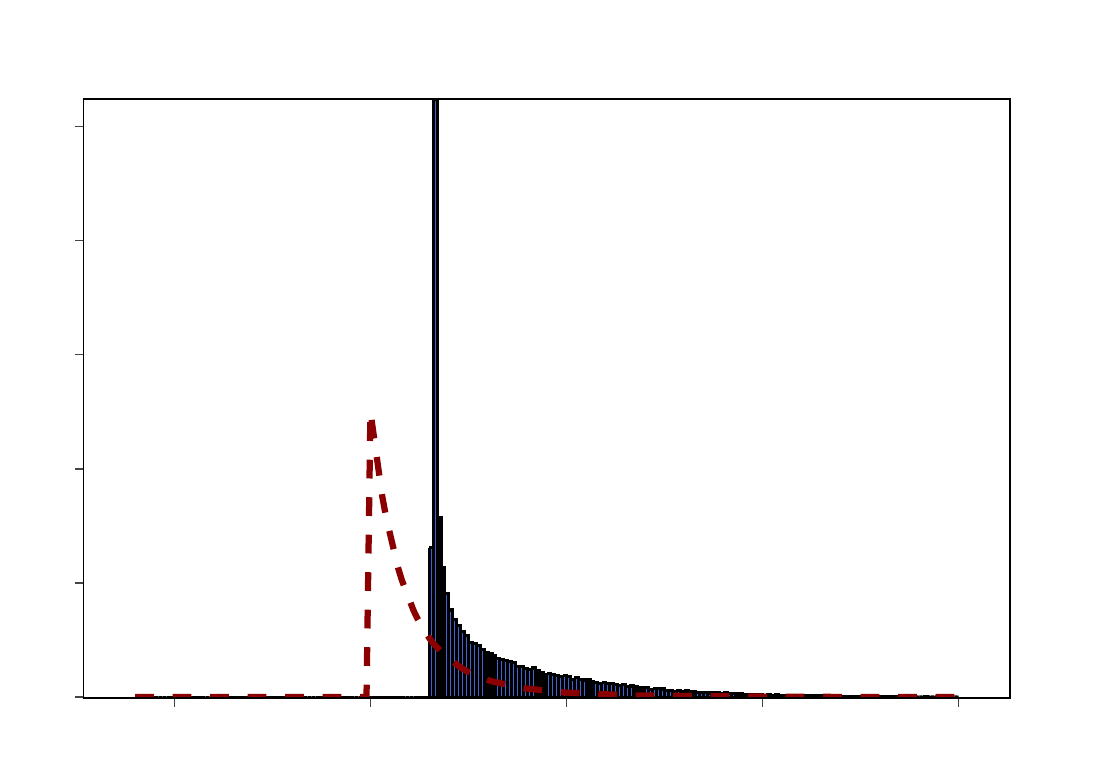}
      & \includegraphics[width=2.2cm,height=2.2cm]{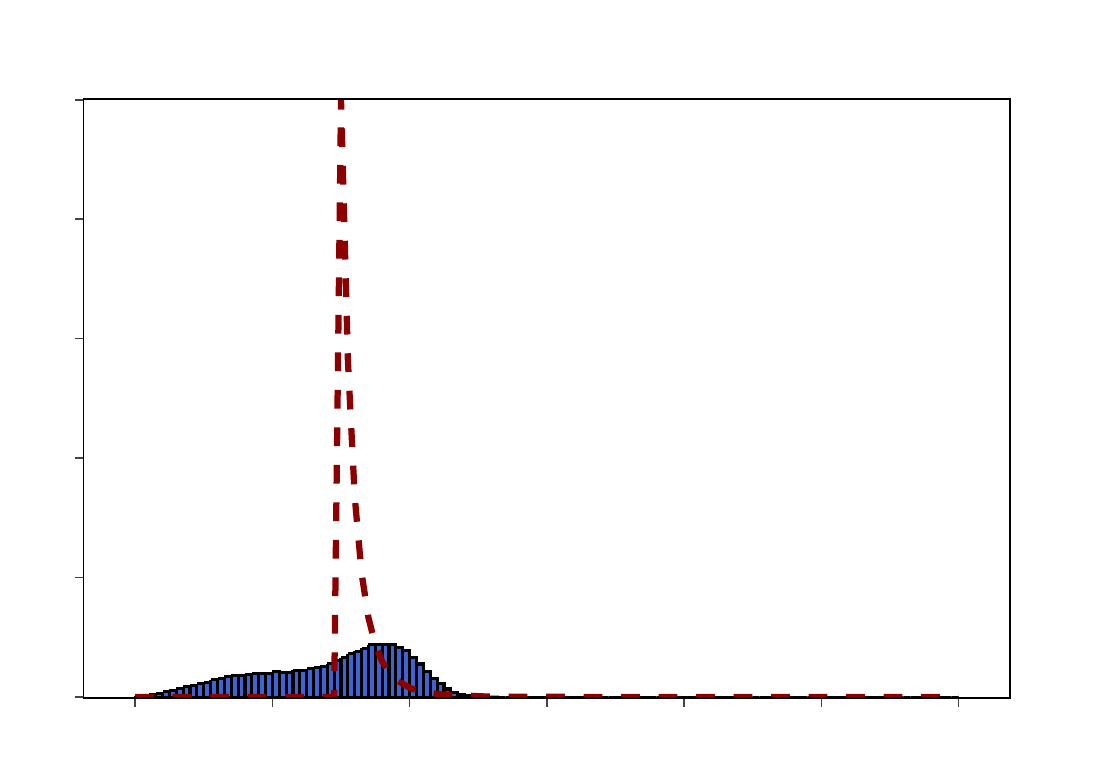}
      & \includegraphics[width=2.2cm,height=2.2cm]{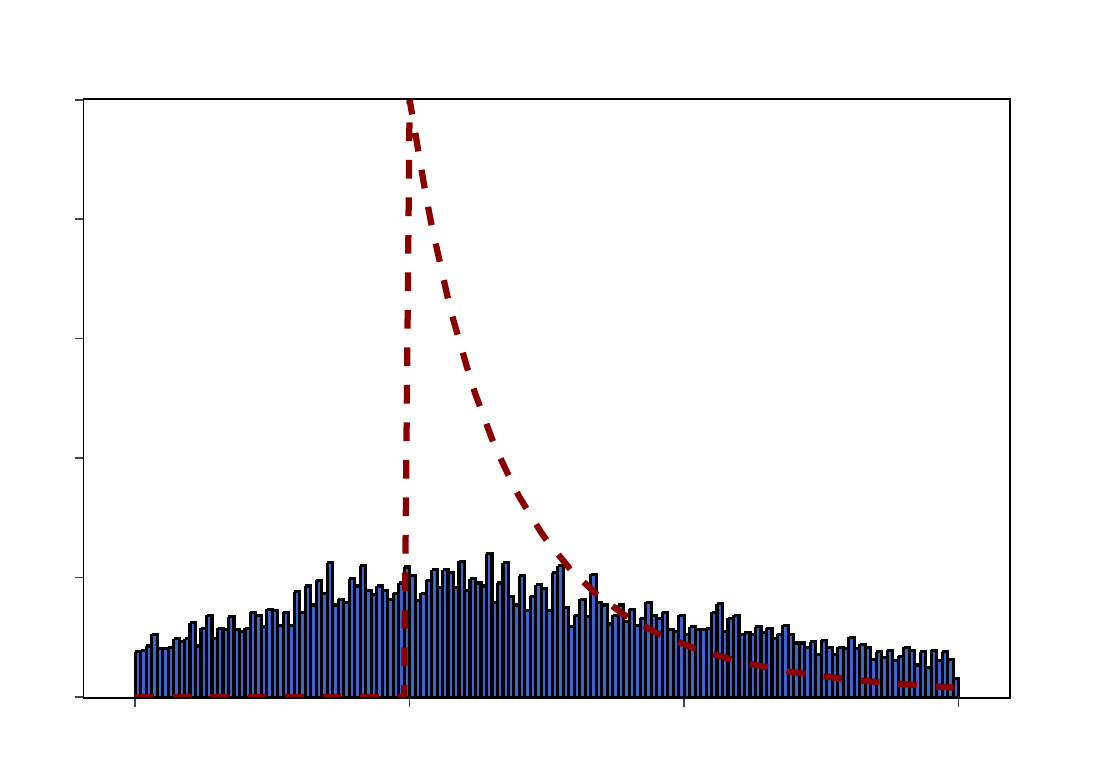}
    \\[2pt]
        \rotatebox{90}{$\text{Model}_1$}
      & \includegraphics[width=2.2cm,height=2.2cm]{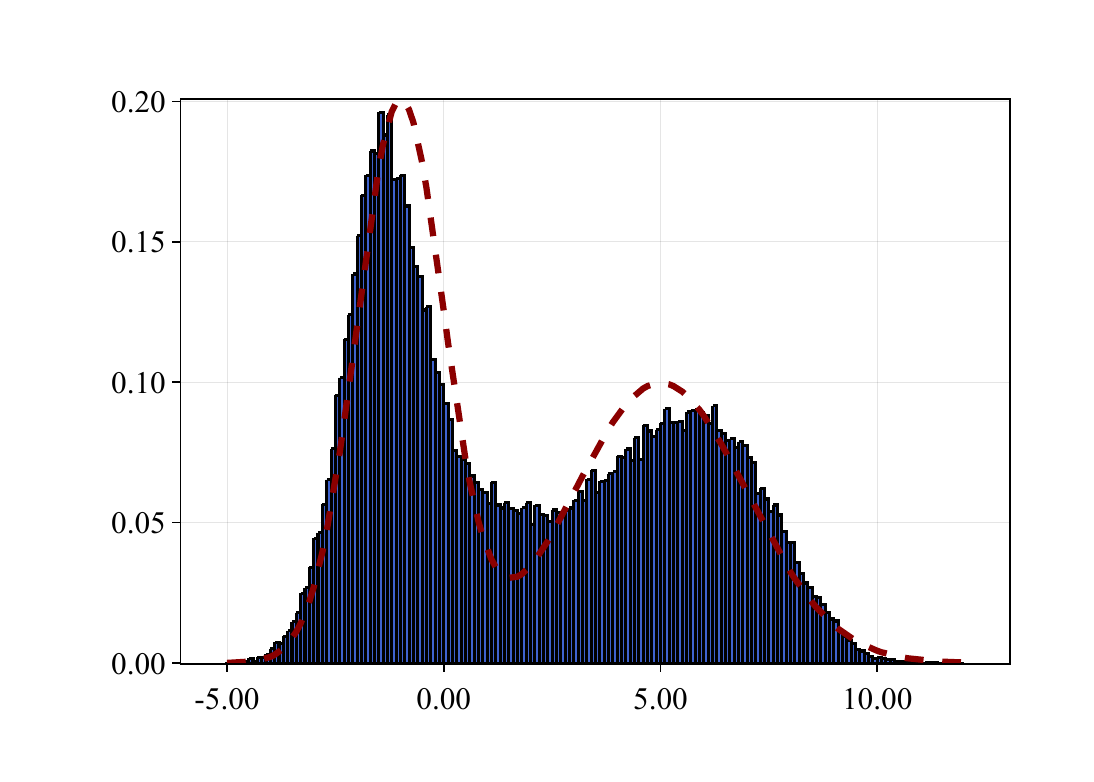}
      & \includegraphics[width=2.2cm,height=2.2cm]{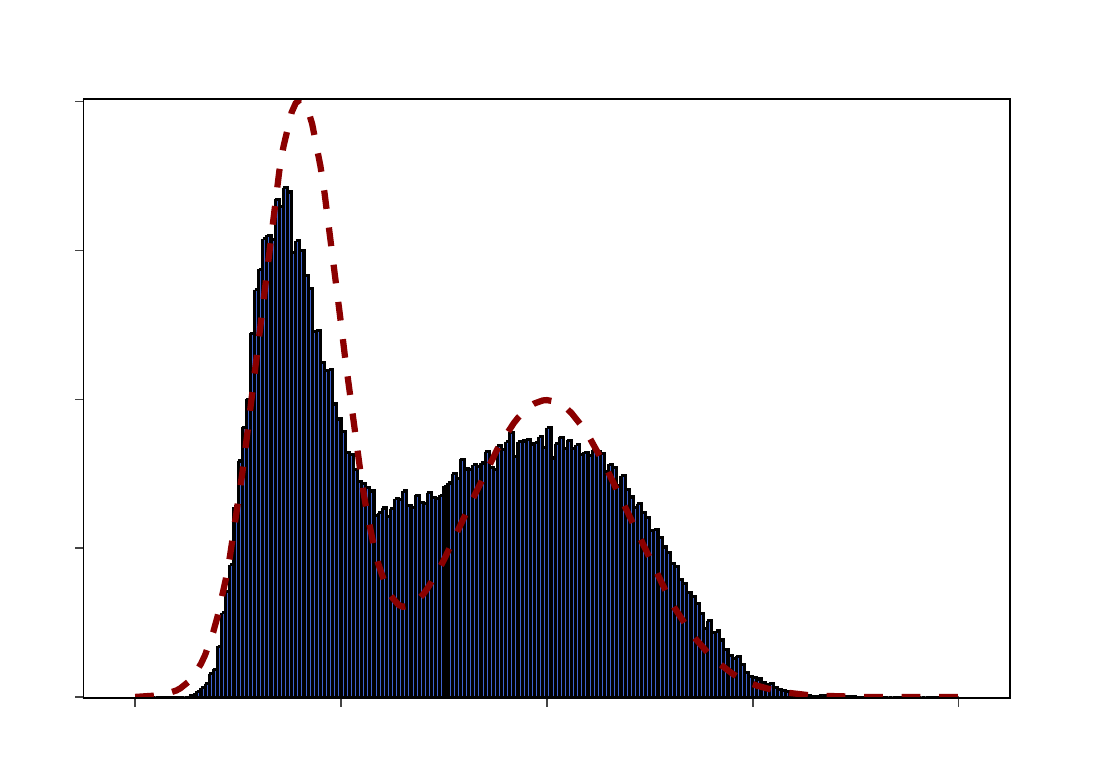}
      & \includegraphics[width=2.2cm,height=2.2cm]{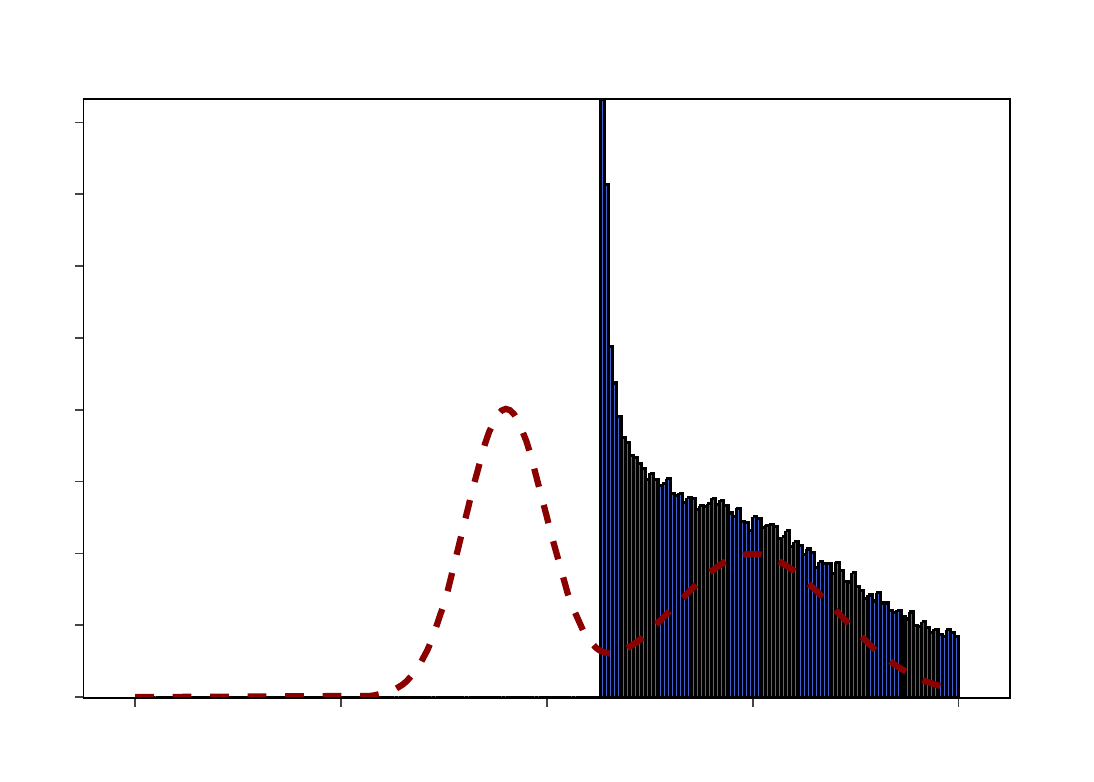}
      & \includegraphics[width=2.2cm,height=2.2cm]{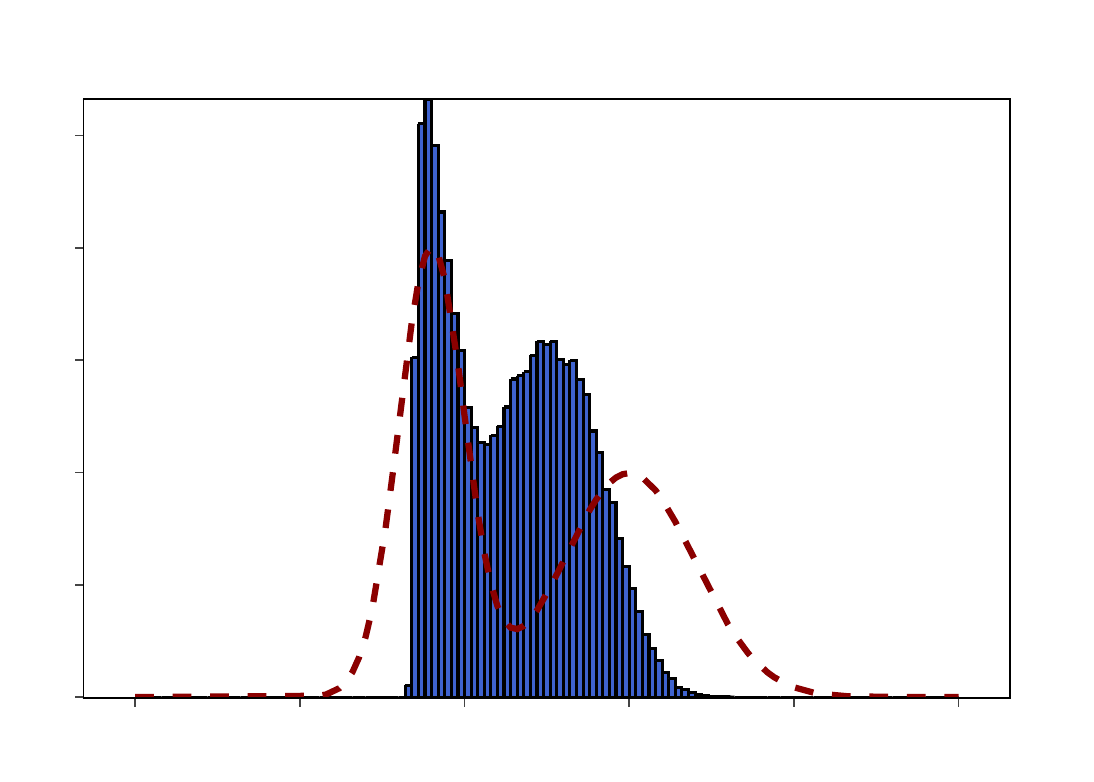}
      & \includegraphics[width=2.2cm,height=2.2cm]{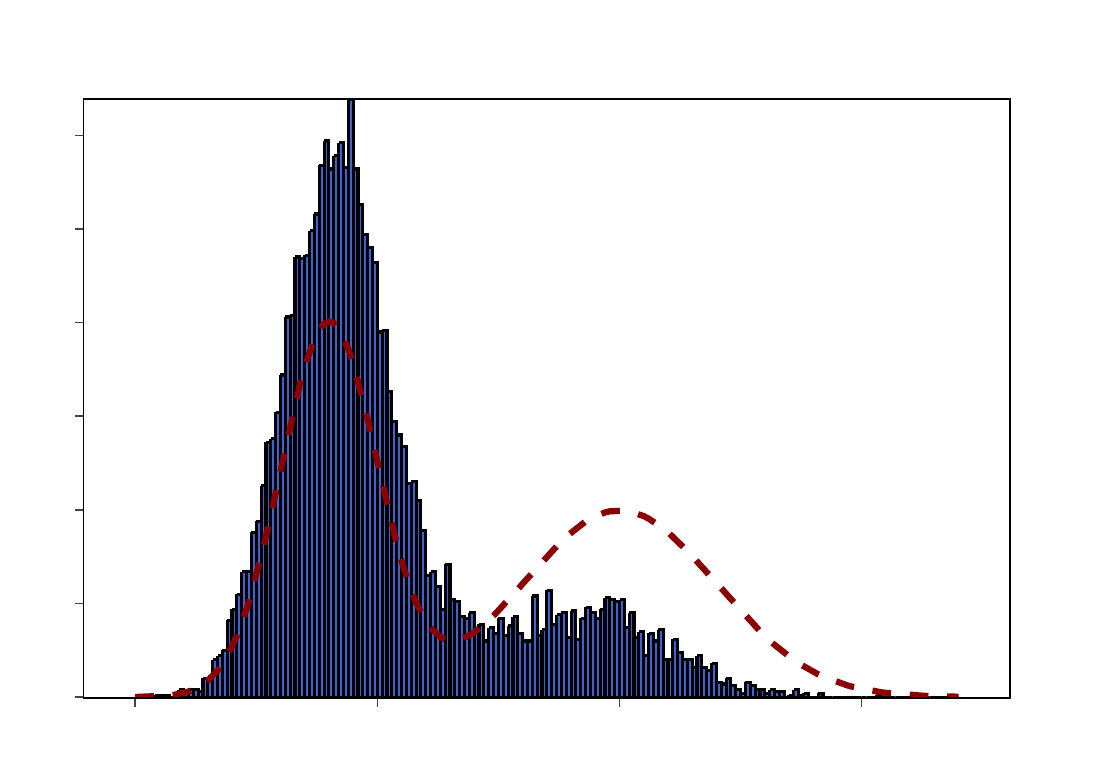}
      \\[2pt]
        \rotatebox{90}{$\text{Model}_2$}
      & \includegraphics[width=2.2cm,height=2.2cm]{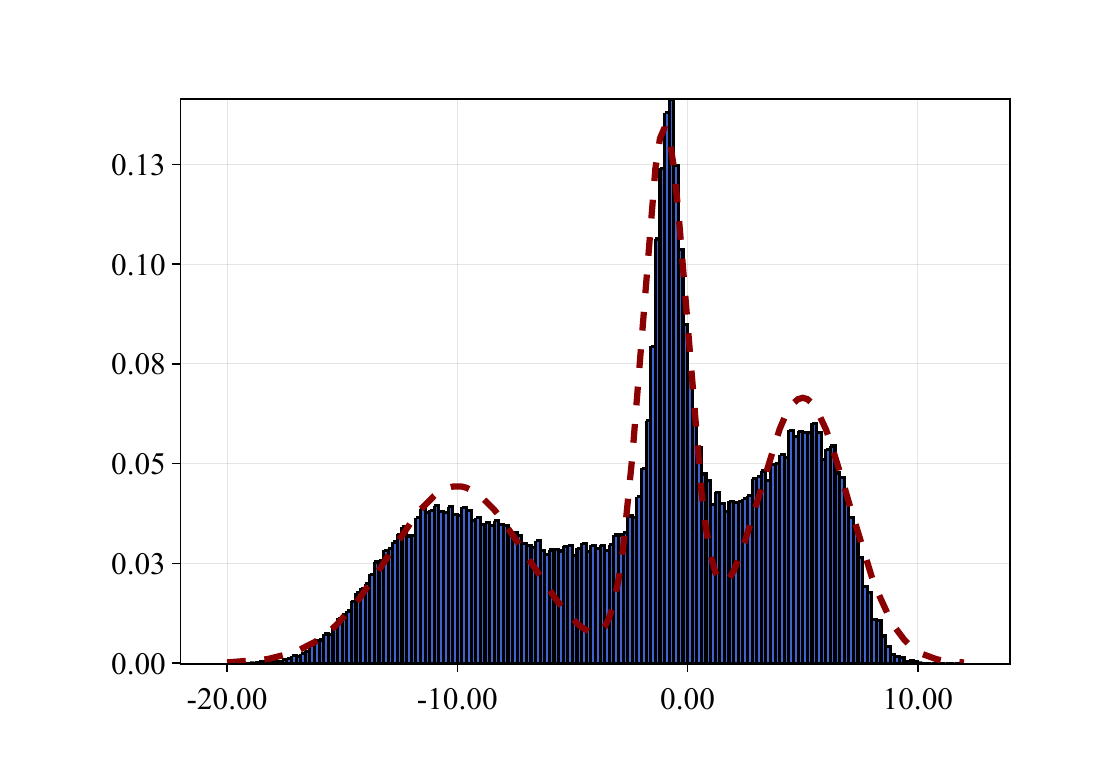}
      & \includegraphics[width=2.2cm,height=2.2cm]{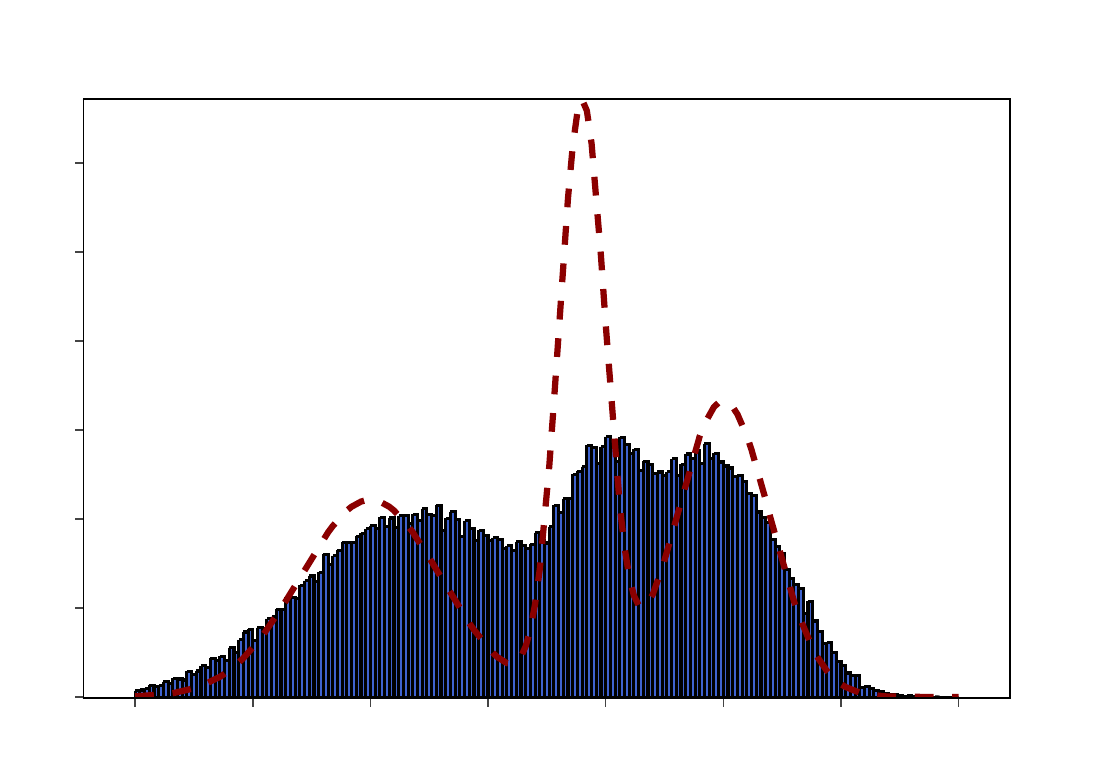}
      & \includegraphics[width=2.2cm,height=2.2cm]{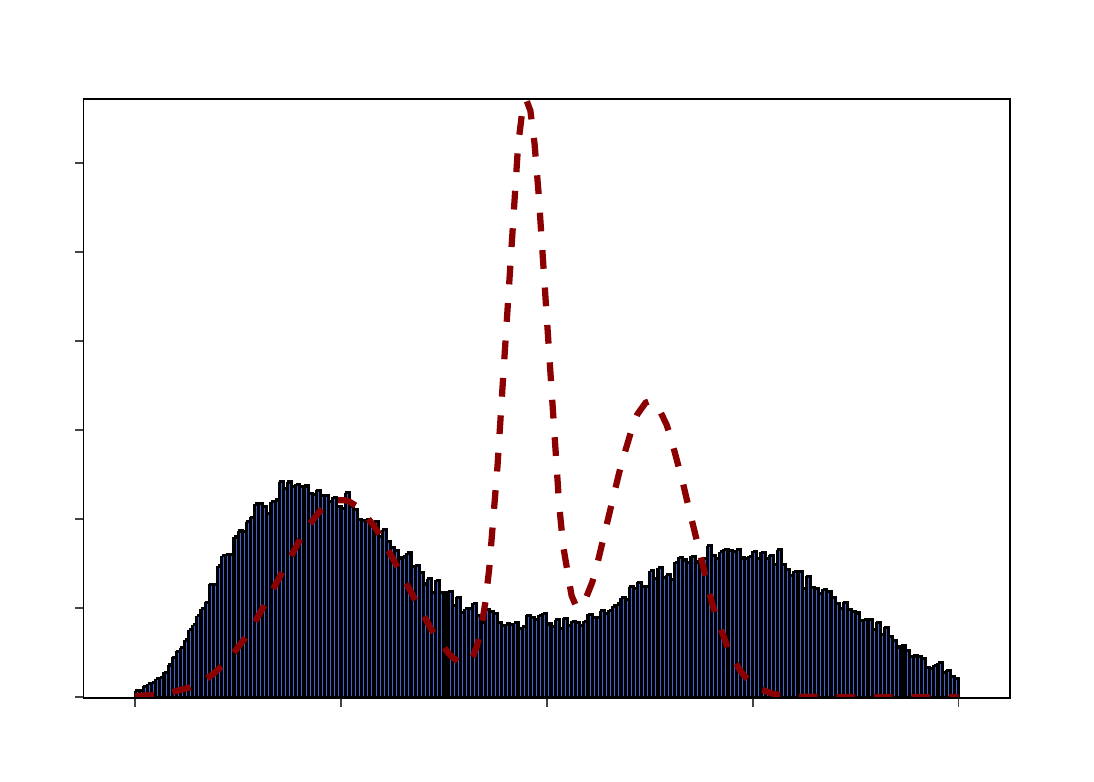}
      & \includegraphics[width=2.2cm,height=2.2cm]{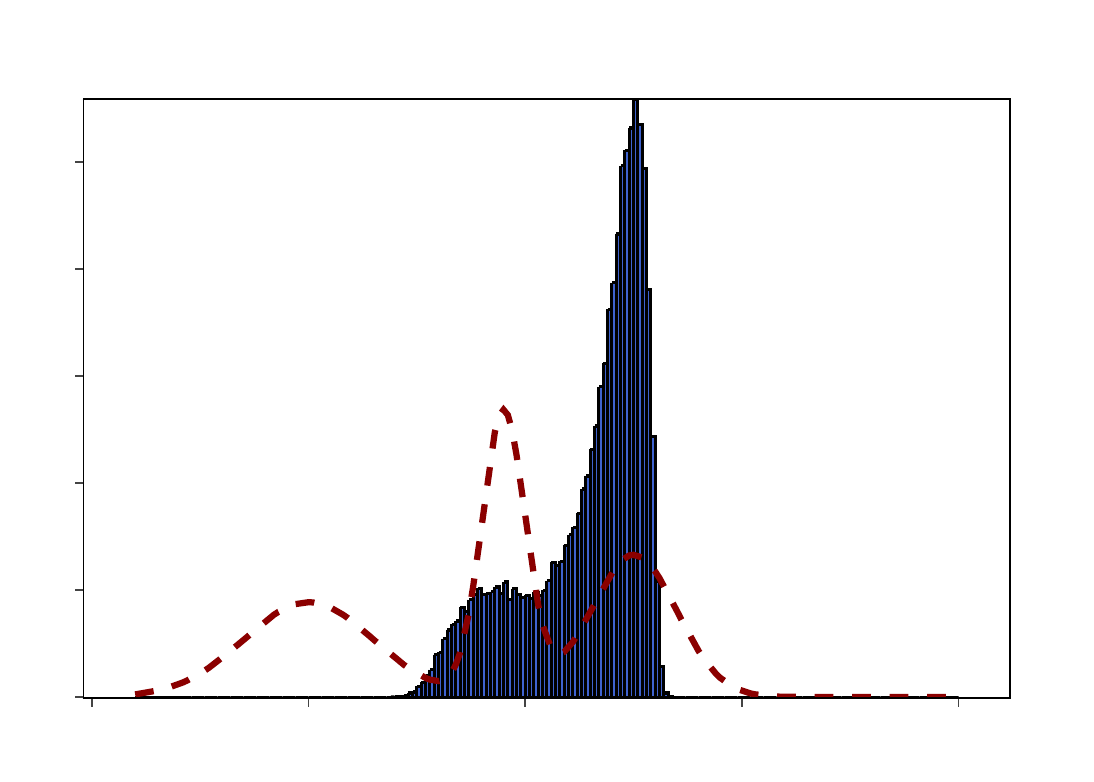}
      & \includegraphics[width=2.2cm,height=2.2cm]{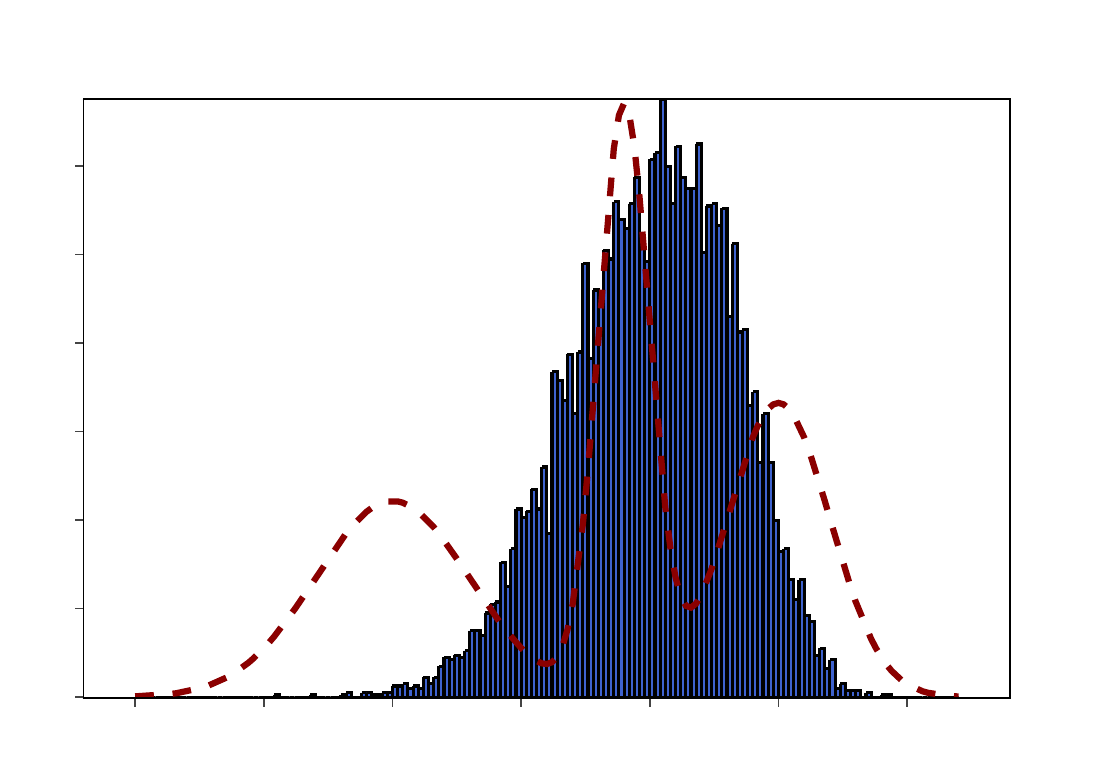}
      \\[2pt]
        \rotatebox{90}{$\text{Model}_3$}
      & \includegraphics[width=2.2cm,height=2.2cm]{img/1d_experiments/dualIsl_model3.pdf}
      & \includegraphics[width=2.2cm,height=2.2cm]{img/1d_experiments/isl_model3.pdf}
      & \includegraphics[width=2.2cm,height=2.2cm]{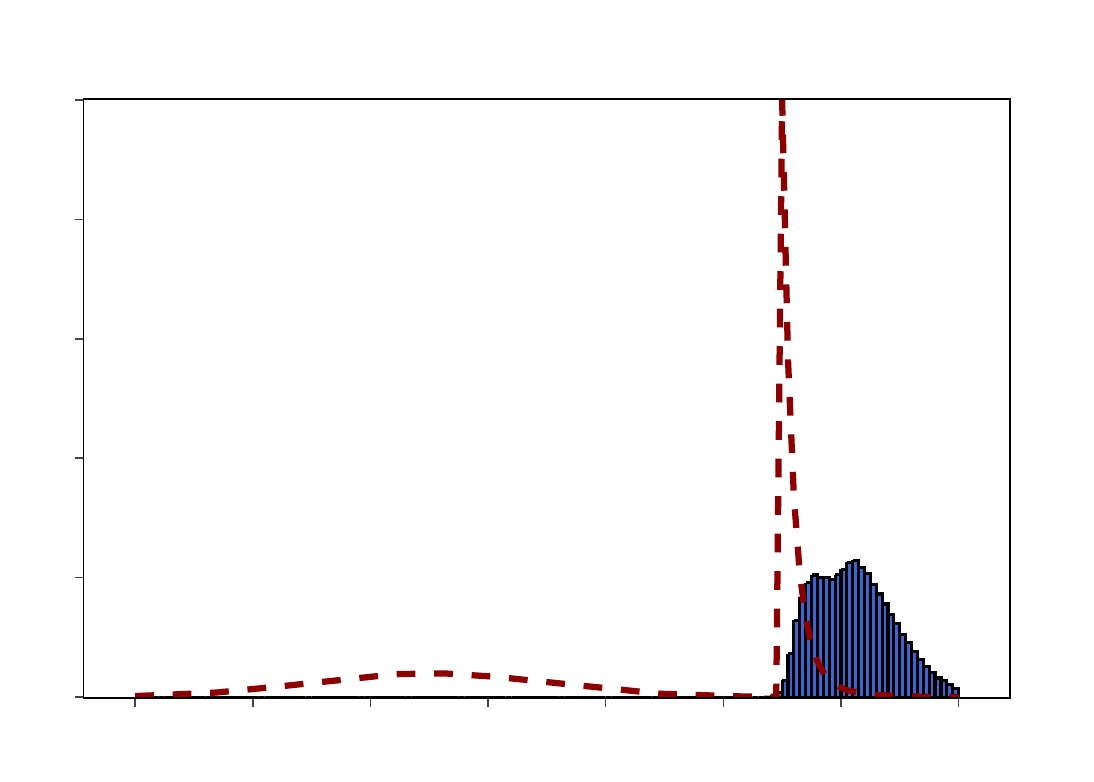}
      & \includegraphics[width=2.2cm,height=2.2cm]{img/1d_experiments/mmdgan_model3.pdf}
      & \includegraphics[width=2.2cm,height=2.2cm]{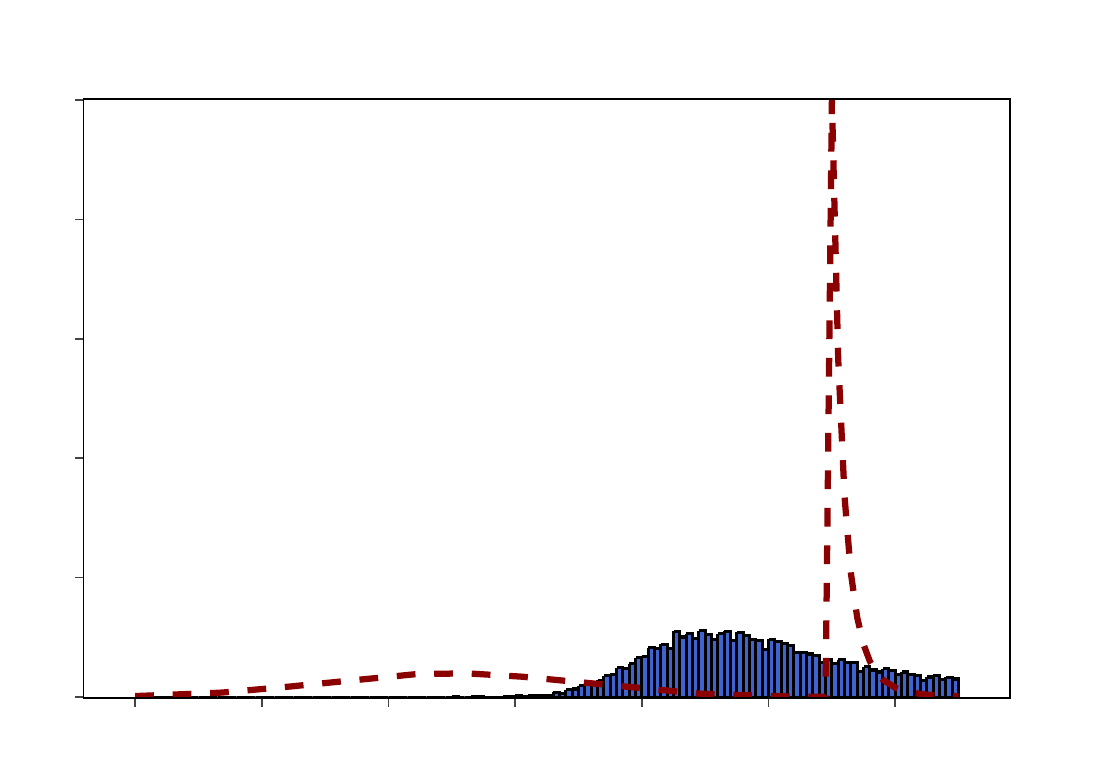}
      \\
    \bottomrule
  \end{tabular}
  \caption{\small One‐dimensional density estimation across six benchmark targets.  Each row corresponds to a different true distribution (top to bottom: \(\mathcal{N}(4,2)\), Cauchy\((1,2)\), Pareto\((1,1)\), Model$_1$, Model$_2$, Model$_3$).  In each subplot, the red curve shows the ground‐truth density and the blue curve shows the model’s estimated density.  Columns (left to right) compare dual‐ISL, classical ISL, WGAN, MMD‐GAN, and a DDPM diffusion baseline, respectively.  Dual‐ISL more accurately captures multi‐modal and heavy‐tailed shapes, with reduced mode‐collapse and smoother estimates.}
  \label{fig:1d-distributions}
\end{figure}

Referring to \cite[Theorem 1]{zaheer2017gan}, in the one‐dimensional case any generator that perfectly pushes forward a simple base distribution \(p_z\) (e.g.\ uniform or Gaussian) to a target distribution \(p\) must implement one of at most two continuous maps.  Concretely, if $F_{z}$ and $F$ are the cdfs of $p_z$ and $p$, then the two solutions are
\begin{align*}
f_{+}(z) \;=\; F^{-1}\bigl(F_{z}(z)\bigr)
\qquad\text{and}\qquad
f_{-}(z) \;=\; F^{-1}\bigl(1 - F_{z}(z)\bigr).
\end{align*}
In practice, a learning algorithm that truly captures the underlying structure of a multimodal or heavy‐tailed distribution should recover one of these two “probability‐integral‐transform” maps.

In Figure \ref{figure:dual-isl vs isl vs mmd-gan mixture normal pareto:appendix}, we therefore plot, for each method, the learned generator \(f_{\theta}(z)\) against the theoretical target map \(f_{+}(z)\) for the challenging mixture \(\tfrac12\mathcal{N}(-5,2) + \tfrac12\mathrm{Pareto}(5,1)\) (“Model\(_3\)”).  A perfect fit would lie exactly on the diagonal.  As seen, dual-ISL (Subfigure \ref{figure:dual-isl vs isl vs mmd-gan mixture normal pareto:dual-ISL:appendix}) closely tracks the true transformation across the full support of \(z\), whereas classical ISL (Subfigure \ref{figure:dual-isl vs isl vs mmd-gan mixture normal pareto:isl:appendix}) suffers from local distortions around the Pareto tail, and MMD-GAN (Subfigure \ref{figure:dual-isl vs isl vs mmd-gan mixture normal pareto:mmd-gan:appendix}) exhibits even larger deviations—especially where the two modes meet.  This visualization makes explicit how dual-ISL more faithfully learns the correct mapping, rather than merely matching moments or densities.

\begin{figure}[htbp] 
   \centering
   \begin{subfigure}{0.48\textwidth}
     \centering
     \includegraphics[width=\linewidth]{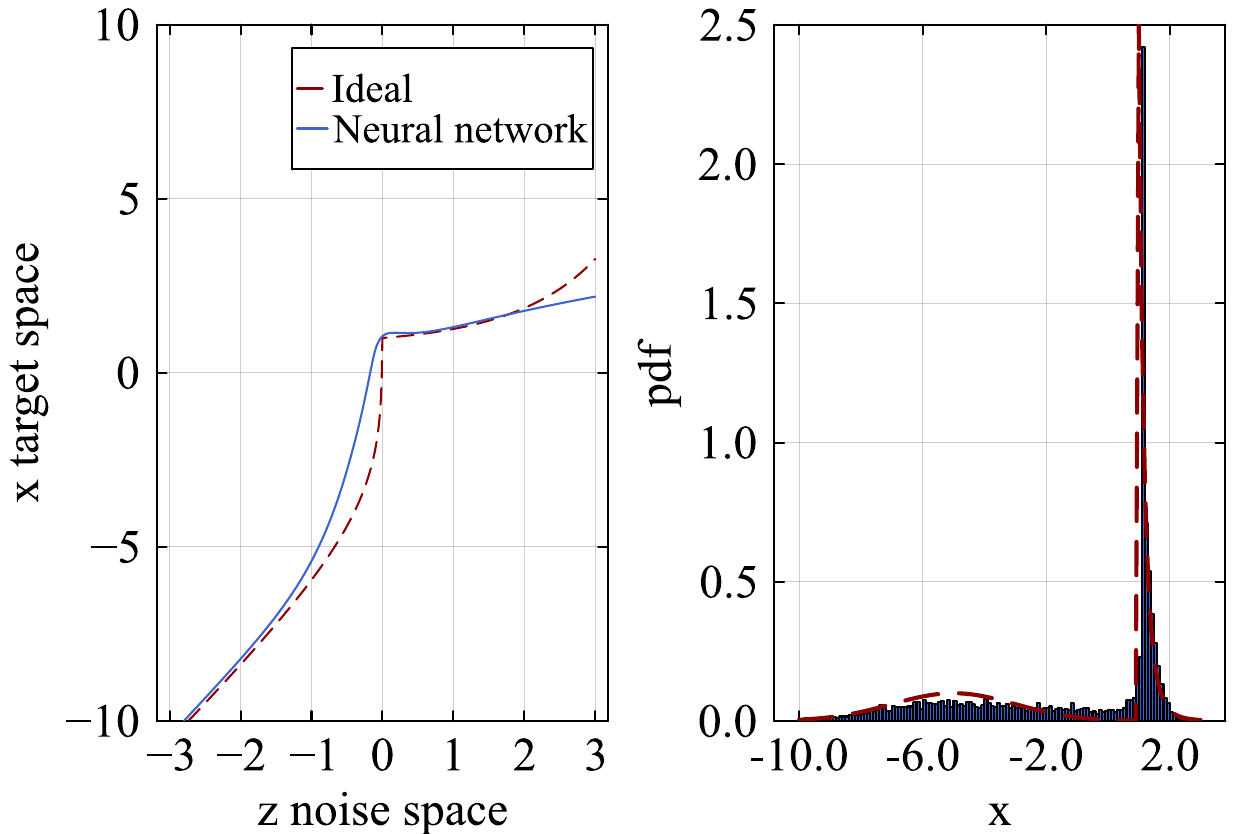}
     \caption{dual-ISL}
     \label{figure:dual-isl vs isl vs mmd-gan mixture normal pareto:dual-ISL:appendix}
   \end{subfigure}
   \hfill
   \begin{subfigure}{0.48\textwidth}
     \centering
     \includegraphics[width=\linewidth]{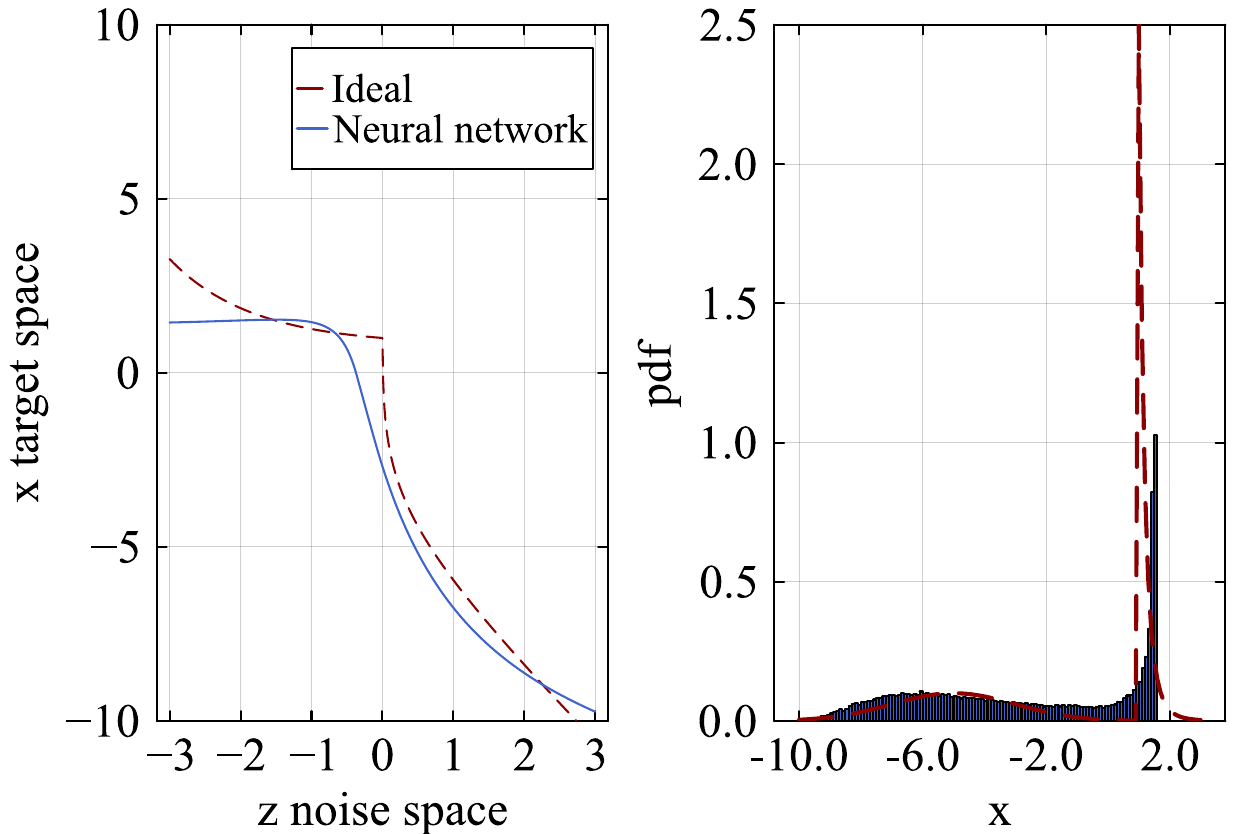}
     \caption{ISL}
     \label{figure:dual-isl vs isl vs mmd-gan mixture normal pareto:isl:appendix}
   \end{subfigure}
   \hfill
   \begin{subfigure}{0.48\textwidth}
     \centering
     \includegraphics[width=\linewidth]{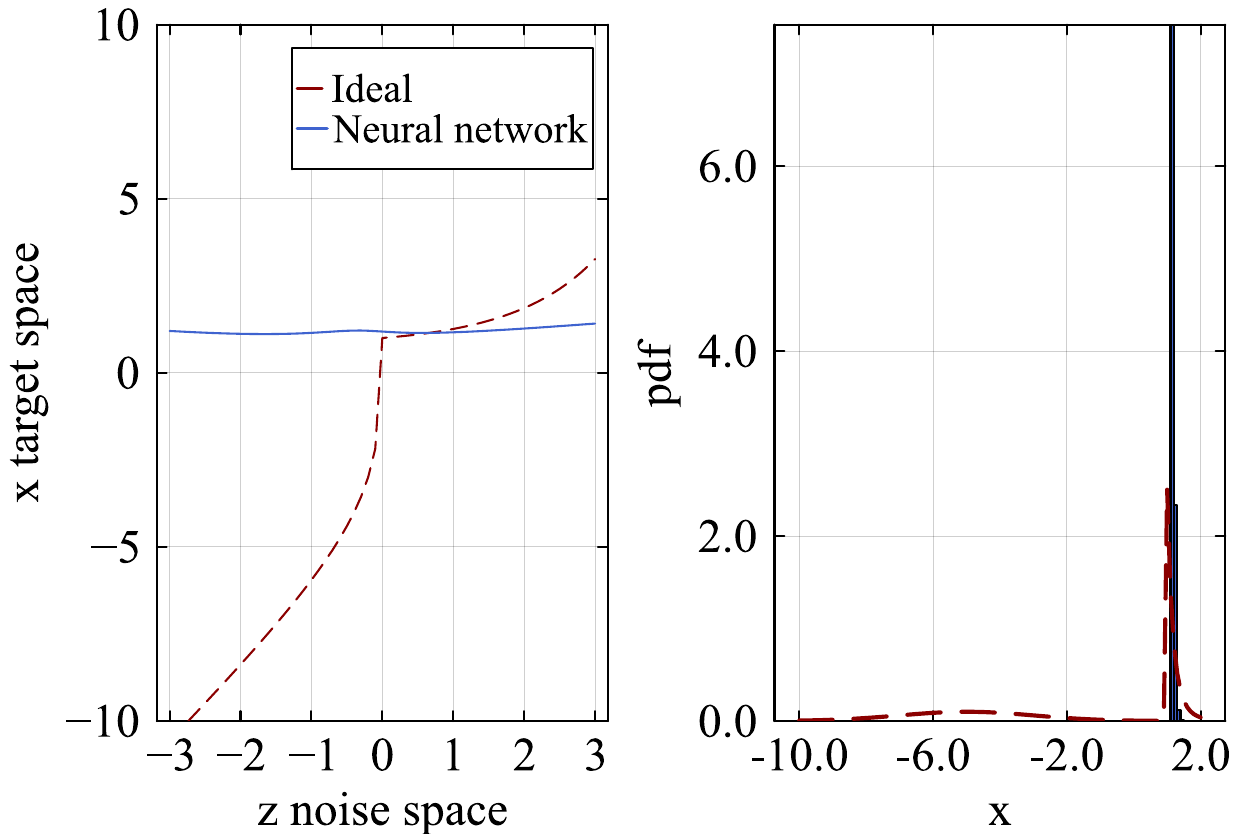}
     \caption{MMD-GAN}
     \label{figure:dual-isl vs isl vs mmd-gan mixture normal pareto:mmd-gan:appendix}
   \end{subfigure}
   \caption{\small Comparison of learned generator mappings \(f_{\theta}(z)\) against the true probability‐integral‐transform \(f_{+}(z)\) or \(f_{-}(z)\) for Model\(_3\). Dual-ISL closely follows the ideal map even in the heavy‐tailed region, while ISL and MMD-GAN display growing errors, particularly near the mode boundaries and in the Pareto tail.}
   \label{figure:dual-isl vs isl vs mmd-gan mixture normal pareto:appendix}
\end{figure}

\subsection{Computational Benchmarking of Dual-ISL vs (Classical) ISL}

Next, we measure execution time using dedicated benchmarking tools \cite{chen2016robust}. These tools first warm up and calibrate the code to determine the optimal number of iterations per measurement, then execute the code in bundled loops to collect multiple independent samples. We compute statistics—including the minimum, median, mean, and standard deviation—while tracking garbage collection time separately. Table \ref{tab:benchmarks dual-isl vs isl} summarizes the results for various target distributions at a fixed \(K\). Figure~\ref{figure:benchmarks dual-isl vs isl} combines two perspectives:  
(a) total runtime as a function of \(K\), and  
(b) estimation accuracy versus runtime at a fixed \(K\).  
These findings show that Dual‐ISL not only runs faster than classical ISL, but also achieves a superior accuracy–runtime balance—and this advantage grows even larger as $K$ increases.


\begin{table}[H]
\centering
\small
\rowcolors{2}{gray!10}{white}
\begin{tabular}{l 
                S[table-format=3.3] S[table-format=3.3(3)] S[table-format=3.2] 
                S[table-format=3.3] S[table-format=3.3(3)] S[table-format=3.2] }
\toprule
\textbf{Target} 
  & \multicolumn{3}{c}{\textbf{ISL}} 
  & \multicolumn{3}{c}{\textbf{Dual‐ISL}} \\
\cmidrule(lr){2-4} \cmidrule(lr){5-7}
  & \textbf{Median} & \textbf{Mean \(\pm\)\ \(\sigma\)} & \textbf{Memory} 
  & \textbf{Median} & \textbf{Mean \(\pm\)\ \(\sigma\)} & \textbf{Memory} \\
\midrule
\(\mathcal{N}(4,2)\)  
  & 239.374 & 241.280 \pm 0.434 & 17.42\,GiB  
  & 22.281  & 22.502  \pm 0.118 & 9.27\,GiB \\

\(\mathrm{Pareto}(1,1)\)  
  & 238.043 & 239.007 \pm 1.607 & 17.42\,GiB  
  & 22.207  & 22.109 \pm 0.518 & 9.27\,GiB \\

\(\mathrm{Model}_1\)  
  & 241.683 & 241.905 \pm 1.722 & 17.42\,GiB  
  & 21.685  & 21.709 \pm 0.044 & 9.27\,GiB \\

\(\mathrm{Model}_2\)  
  & 237.825 & 239.805 \pm  1.832 & 17.42\,GiB    
  & 22.921  & 22.886 \pm 0.453 & 9.27\,GiB \\
\bottomrule
\end{tabular}
\caption{\small Runtime (in seconds) and memory benchmarks for ISL vs.\ Dual‐ISL at fixed \(K=10\).  
Each cell reports median runtime, mean \(\pm\) standard deviation, and peak memory usage.}
\label{tab:benchmarks dual-isl vs isl}
\end{table}

\begin{figure}[htbp] 
  \centering
  \begin{subfigure}{0.45\textwidth}
    \centering
    \includegraphics[width=\linewidth]{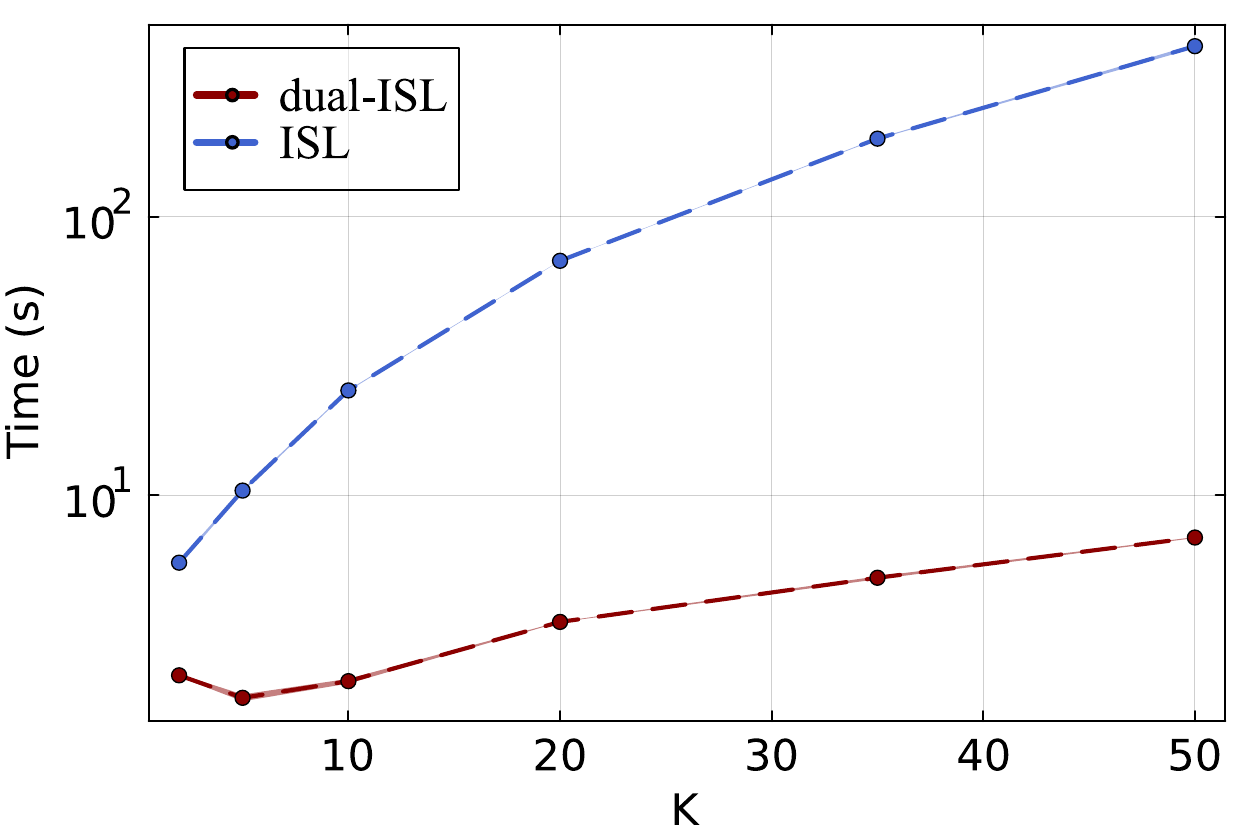}
    \caption{\small Target density Model\textsubscript{2}}
    \label{fig:image1}
  \end{subfigure}
  \hfill
  \begin{subfigure}{0.45\textwidth}
    \centering
    \includegraphics[width=\linewidth]{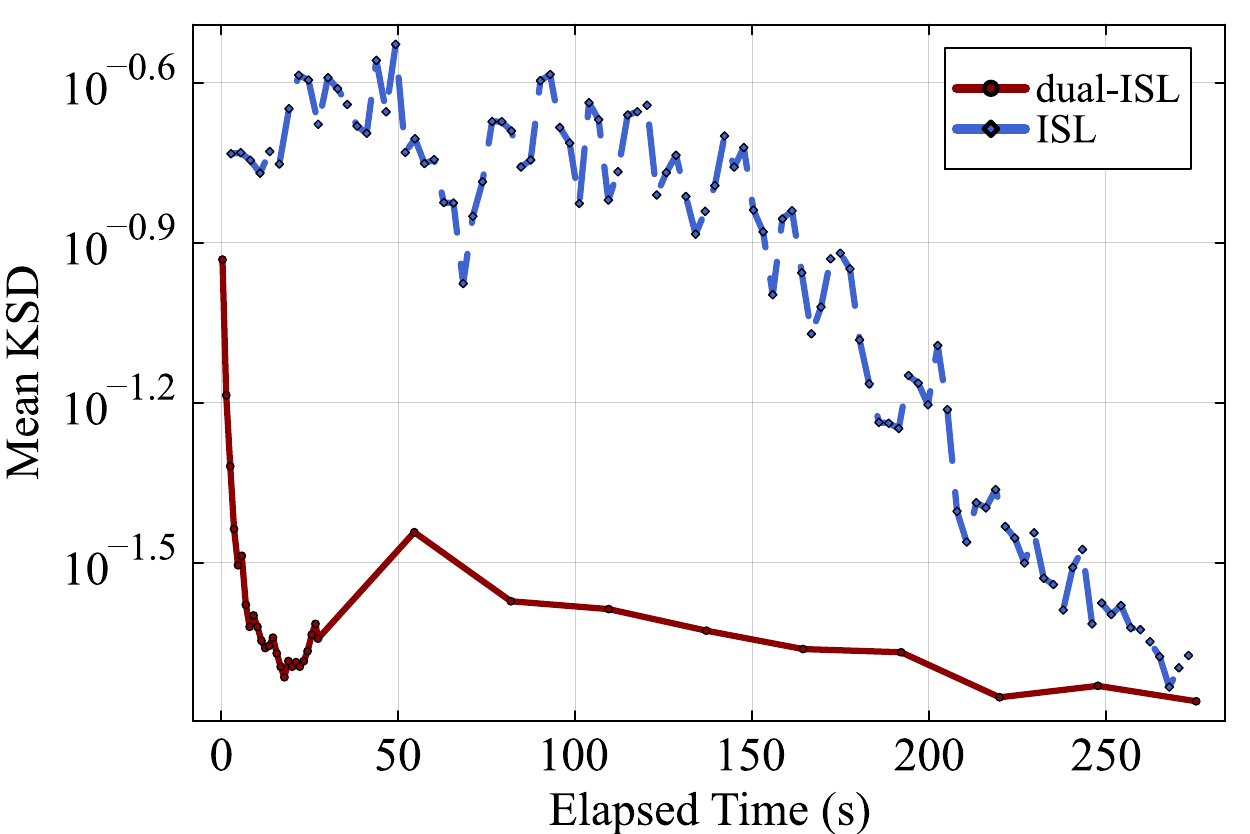}
    \caption{\small Target density $\text{Model}_2$}
    \label{fig:image2}
  \end{subfigure}
  \caption{\small Computation time and accuracy trade‐off for classical ISL versus Dual‐ISL as \(K\) increases.  
(a) Total runtime (in seconds) for 1000 training epochs with batch size \(N=1000\) on the \(\mathrm{Model}_2\) target.  
(b) Runtime versus mean KSD for both methods with $K=10$ and batch size \(N=1000\), illustrating how Dual‐ISL maintains lower runtimes improving also accuracy.  
Dual‐ISL consistently outperforms classical ISL in speed, with the gap widening at larger \(K\).}
  \label{figure:benchmarks dual-isl vs isl}
\end{figure}

\begin{figure}[htbp] 
   \centering
   \begin{subfigure}{0.45\textwidth}
     \centering
     \includegraphics[width=\linewidth]{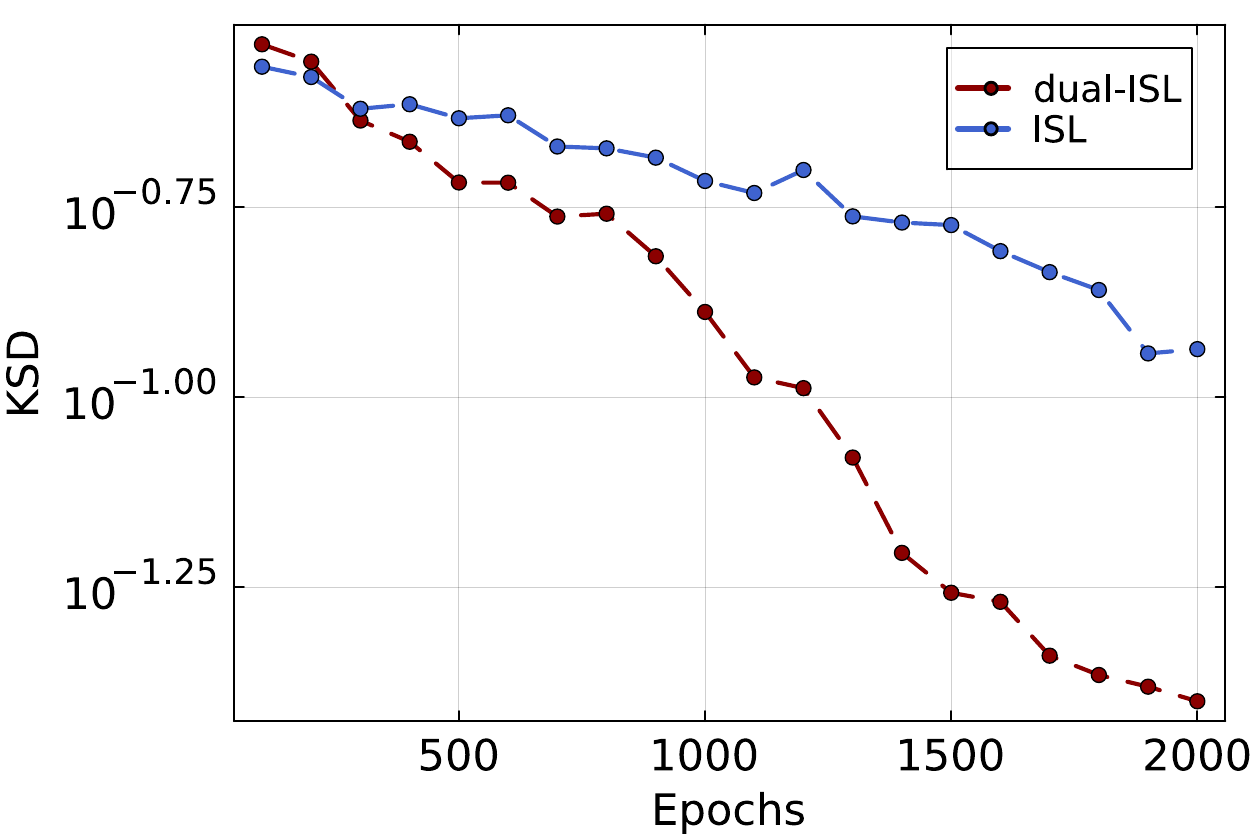}
     \caption{\small $\mathcal{N}(4,2)$}
     \label{figure: convergence rate in epochs dual vs isl:normal}
   \end{subfigure}
   \hfill
   \begin{subfigure}{0.45\textwidth}
     \centering
     \includegraphics[width=\linewidth]{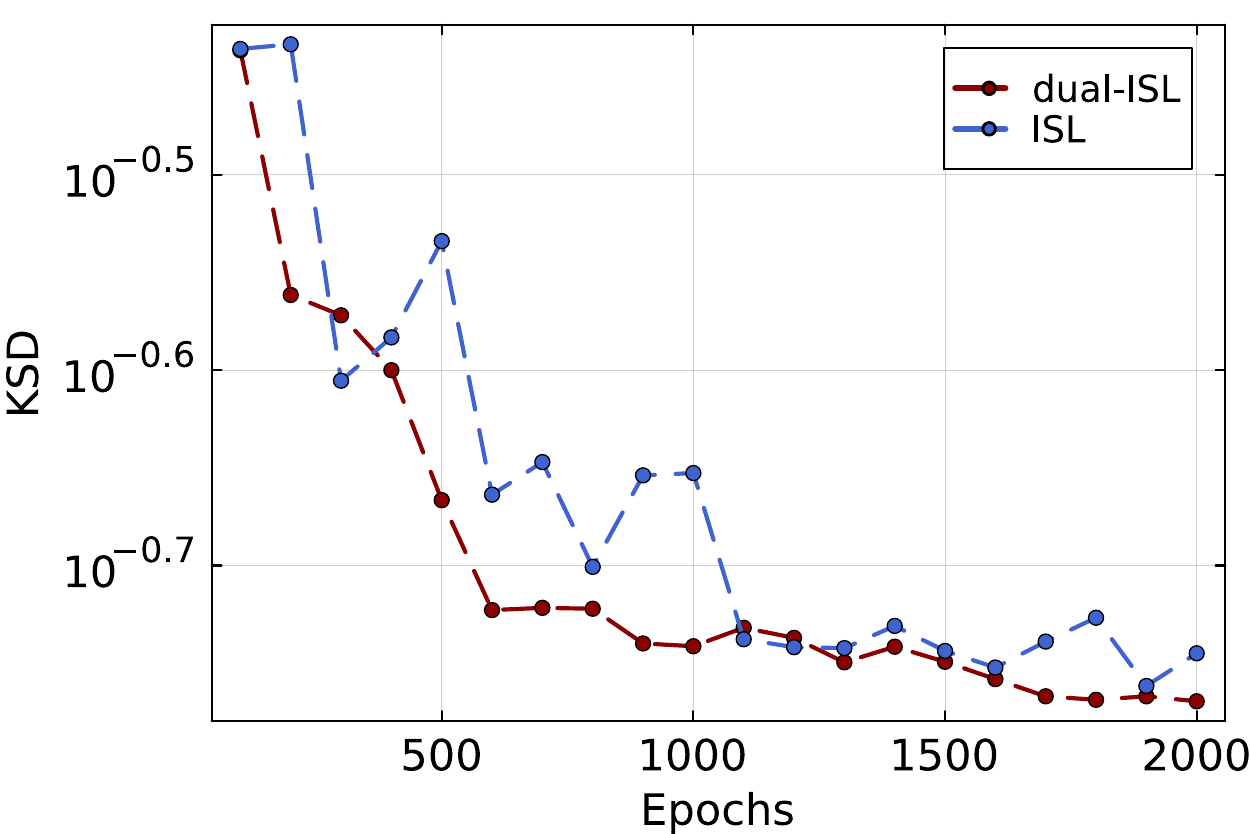}
     \caption{\small $\text{Model}_3$}
     \label{figure: convergence rate in epochs dual vs isl:mixture}
   \end{subfigure}
   \caption{\small Training curves for dual‐ISL versus classical ISL (dashed lines indicate mean over 10 runs).  Left: target \(\mathcal{N}(4,2)\). Right: target mixture Model\textsubscript{3} (mixture Pareto–Normal).}
   \label{figure: convergence rate in epochs dual vs isl}
\end{figure}

\subsection{Moment-Agnostic Optimal Transport via Monotonicity-Penalized ISL}\label{Moment-Agnostic Optimal Transport via Monotonicity-Penalized ISL}

Our ISL framework applies to \emph{any} probability law, including heavy‐tailed distributions lacking finite higher‐order moments.  Indeed, the rank statistic in Eq.~\ref{eq: rank statistic} is always well‐defined, whereas the classical \(p\)–Wasserstein distance is only finite when both distributions possess finite \(p\)-th order moments.

\paragraph{Unique transport in one dimension.}
In one dimension, any continuous map that pushes a simple base law \(p_z\) onto a target \(p\) must coincide with one of two inverses of the base cdf.  If \(F_z\) and \(F\) denote the cdfs of \(p_z\) and \(p\), then
\[
f_{+}(z) = F^{-1}\bigl(F_{z}(z)\bigr),
\quad
f_{-}(z) = F^{-1}\bigl(1 - F_{z}(z)\bigr).
\]
The monotone map \(f_{+}\) is in fact the unique optimal transport in \(\mathbb{R}\).  To recover this map, we augment our ISL loss with a \emph{monotonicity penalty}.

\paragraph{Monotonicity‐constrained training.}
Given a batch of inputs \(\{x_i\}_{i=1}^N\) sorted as
\[
x_{(1)} \le x_{(2)} \le \cdots \le x_{(N)},
\qquad
f_{\theta}(x_{(i)}) \;=\;\text{model output at }x_{(i)},
\]
we define
\[
\mathrm{Penalty}
\;=\;
\frac{1}{N}\sum_{i=1}^{N-1}
\max\bigl\{0,\;f_{\theta}(x_{(i)}) - f_{\theta}(x_{(i+1)})\bigr\},
\]
which is zero if and only if \(f_{\theta}\) is non‐decreasing.  The overall training objective becomes
\[
\mathcal{L}(p,\tilde p)
\;=\;
d_{K}(p,\tilde p)
\;+\;
\lambda\,
\frac{1}{N}\sum_{i=1}^{N-1}
\max\bigl\{0,\;f_{\theta}(x_{(i)}) - f_{\theta}(x_{(i+1)})\bigr\},
\]
where \(d_K(p,\tilde p)\) is our rank‐based discrepancy and \(\lambda>0\) weights the monotonicity constraint.  As \(\lambda\to\infty\), any violation of monotonicity incurs infinite cost, forcing \(f_{\theta}\) to converge to the unique optimal transport map \(f_{+}\).

This loss not only recovers the optimal transport in one dimension under minimal smoothness, but also extends beyond the Wasserstein framework to handle distributions with heavy tails. 

We evaluate three training objectives—Dual-ISL with monotonicity penalty, 1-Wasserstein, and 2-Wasserstein—on a suite of heavy-tailed target distributions using a five-layer MLP with ELU activations and layer widths \([16,16,32,32,16,1]\).  Each model is trained for \(10^4\) epochs via vanilla gradient descent with a fixed learning rate of \(10^{-2}\); we avoid adaptive optimizers to ensure that observed differences stem solely from the loss functions.

Performance is measured by two complementary metrics.
\begin{enumerate}
  \item \textbf{Kolmogorov--Smirnov distance (KSD):} the maximum absolute deviation between the empirical cdfs of the real pdf \(p\) and that of the generated distribution \(\tilde p\).
  \item \textbf{Tail-fit error} \(A_{\mathrm{CCDF}}\): the area between the log–log complementary cdfs of real and generated samples, defined for \(n\) data points by
  \[
    A_{\mathrm{CCDF}}
    = \sum_{i=1}^{n}
      \Bigl[
        \log\bigl(F_{p}^{-1}(i/n)\bigr)
        - \log\bigl(\tilde F_{\tilde p}^{-1}(i/n)\bigr)
      \Bigr]
      \,\log\!\bigl(\tfrac{i+1}{i}\bigr),
  \]
  where \(F_{p}^{-1}\) and \(\tilde F_{\tilde p}^{-1}\) are the inverse empirical ccdfs of \(p\) and \(\tilde p\), respectively.
\end{enumerate}

Figure~\ref{fig:loss-comparison} compares the Dual‐ISL loss (with monotonicity penalty) against the 1‐Wasserstein loss on a Pareto–Normal mixture.  Figure (a) shows that Dual‐ISL accurately recovers the true transport map despite the heavy Pareto tails.  Figure (b) demonstrates that the 1‐Wasserstein loss fails to learn the correct mapping under heavy‐tailed behavior.  Figure (c) plots both losses over 10 000 training epochs: the Wasserstein loss oscillates and does not converge, whereas the Dual‐ISL loss decreases smoothly and reliably, highlighting its stability and robustness.

\begin{figure}[htbp]
  \centering
  \captionsetup[subfigure]{justification=centering, font=small, labelfont=bf}
  \begin{subfigure}[b]{0.32\textwidth}
    \includegraphics[width=\linewidth]{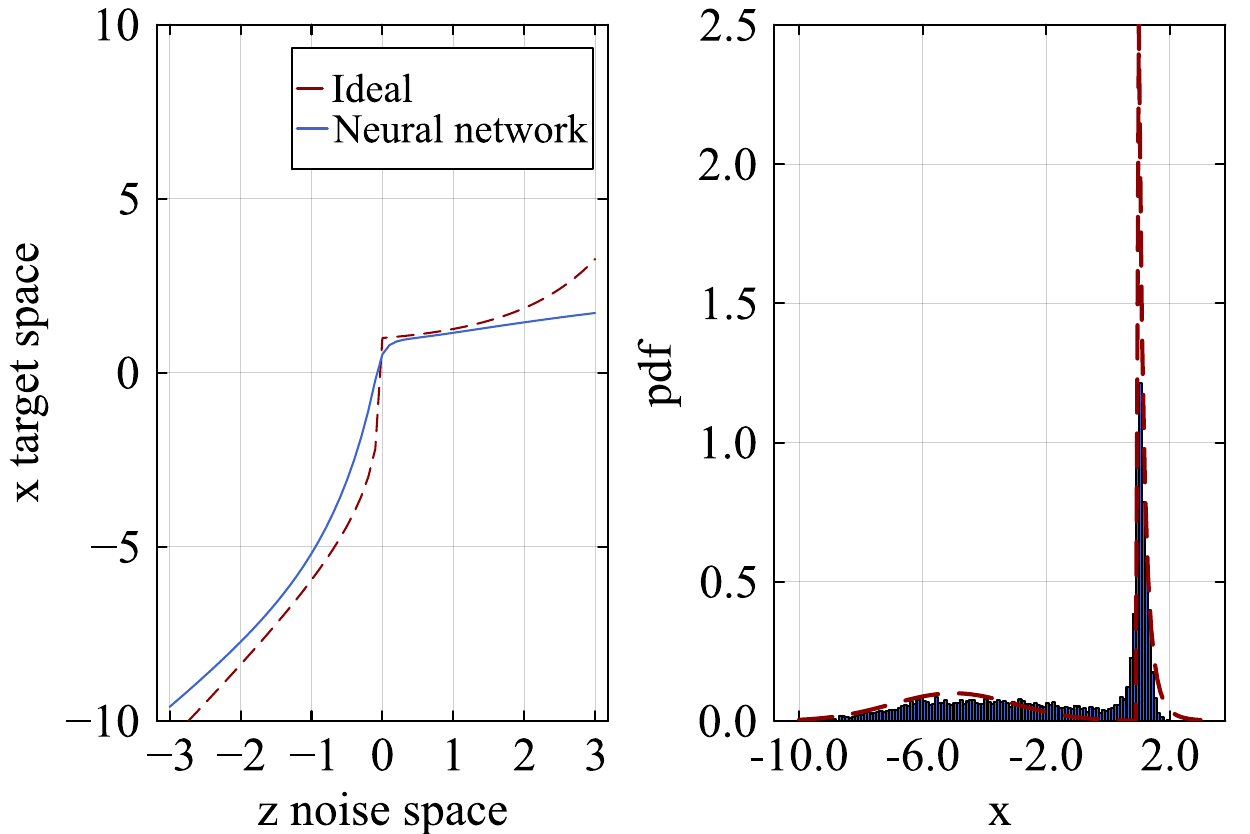}
    \caption{Dual‐ISL}
    \label{fig:dual-isl}
  \end{subfigure}
  \hfill
  \begin{subfigure}[b]{0.32\textwidth}
    \includegraphics[width=\linewidth]{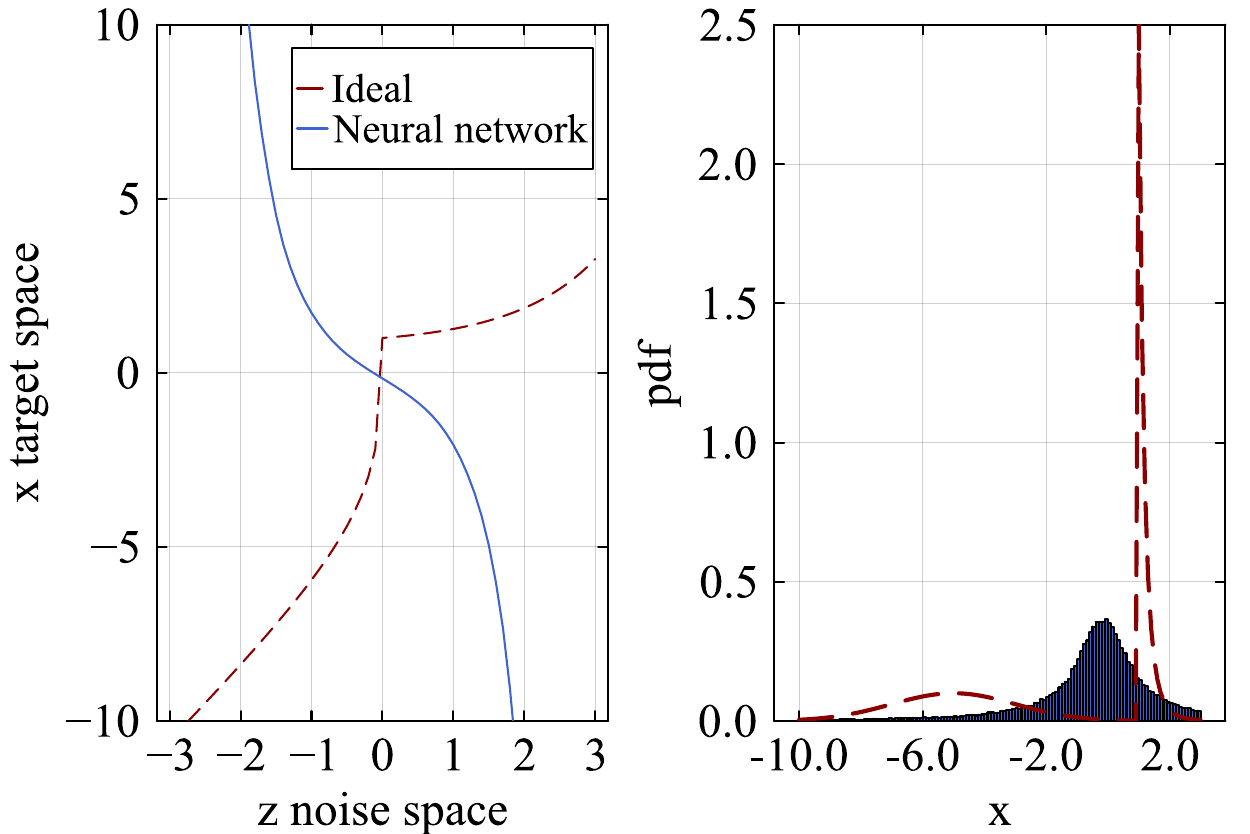}
    \caption{Wasserstein}
    \label{fig:wasserstein-loss}
  \end{subfigure}
  \hfill
  \begin{subfigure}[b]{0.32\textwidth}
    \includegraphics[width=\linewidth]{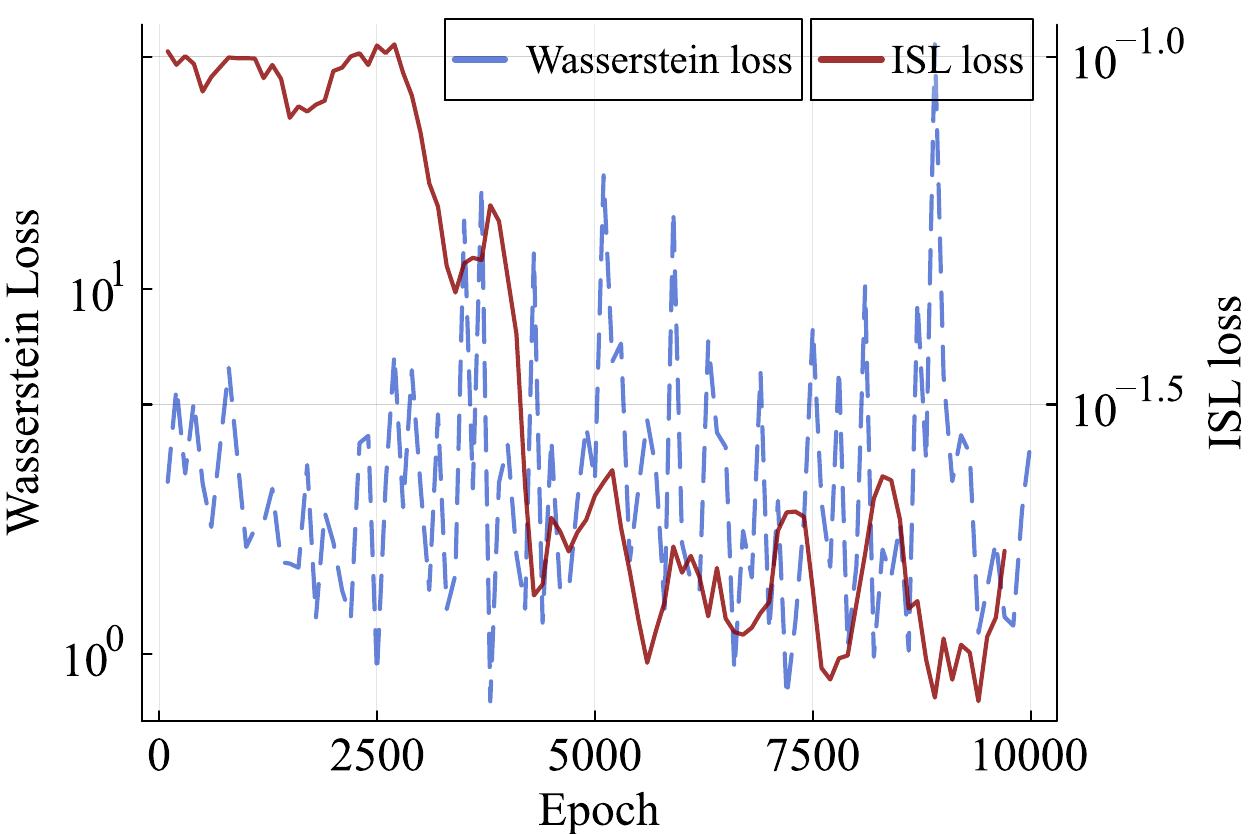}
    \caption{Optimal‐Transport}
    \label{fig:ot-loss}
  \end{subfigure}
  \vspace{3pt}
  \caption[Loss‐function comparison]{\small Comparison of transport objectives on a Pareto–Normal mixture: (a) Dual‐ISL with monotonicity penalty, (b) 1‐Wasserstein, and (c) Training dynamics over 10000 epochs, plotting Dual‐ISL loss (solid) and 1‐Wasserstein loss (dashed). The Dual‐ISL curve decreases smoothly and converges reliably, whereas the Wasserstein loss oscillates and does not settle.}
  \label{fig:loss-comparison}
\end{figure}

Table~\ref{tab:benchmarks_dual-isl_vs_wasserstein} shows that Dual-ISL (with monotonicity penalty) not only matches but frequently outperforms the classical OT baselines (1-Wasserstein and 2-Wasserstein) for \(K=10\).  For the moderately heavy-tailed \(\mathrm{Cauchy}(5,10)\), 1-Wasserstein attains the lowest KSD, yet Dual-ISL achieves a smaller \(A_{\mathrm{CCDF}}\), indicating superior tail alignment.  As the tail heaviness grows in \(\mathrm{Cauchy}(5,20)\) and \(\mathrm{Pareto}(1,1)\), Dual-ISL outperforms both OT metrics on both measures—note that 2-Wasserstein is undefined for Pareto due to its infinite second moment—underscoring Dual-ISL’s robustness where classical OT either diverges or loses precision.  Finally, on the multimodal \(\mathrm{Model}_3\), Dual-ISL yields the lowest KSD and \(A_{\mathrm{CCDF}}\), demonstrating that its rank-statistic formulation reliably recovers the unique monotone optimal-transport map in one dimension while avoiding the instability and non-differentiability of conventional OT losses.

\begin{table}[ht]
  \centering
  \makebox[\textwidth][c]{%
    \resizebox{1.0\textwidth}{!}{%
      \begin{tabular}{@{\extracolsep{\fill}}
        >{\columncolor{blue!10}}l 
        cc  cc  cc
      }
        \toprule
        \rowcolor{blue!5}
        \textbf{Target} 
          & \multicolumn{2}{c}{\textbf{Dual-ISL}}
          & \multicolumn{2}{c}{\textbf{1-Wasserstein}}
          & \multicolumn{2}{c}{\textbf{2-Wasserstein}} \\
        \cmidrule(lr){2-3}\cmidrule(lr){4-5}\cmidrule(lr){6-7}
        \rowcolor{gray!5}
          & \textbf{KSD} & \textbf{ACCDF}
          & \textbf{KSD} & \textbf{ACCDF}
          & \textbf{KSD} & \textbf{ACCDF} \\
        \midrule
        \(\mathrm{Cauchy}(5,10)\)  
          & \(0.069 \pm 0.059\) & \cellcolor{green!5}\(\mathbf{19.401 \pm 0.539}\)  
          & \cellcolor{green!5}\(\mathbf{0.037 \pm 0.024}\) & \(21.437 \pm 0.429\)  
          & \(0.504 \pm 0.267\) & \(28.678 \pm 7.018\) \\

        \(\mathrm{Cauchy}(5,20)\)  
          & \cellcolor{green!5}\(\mathbf{0.045 \pm 0.010}\) & \cellcolor{green!5}\(\mathbf{17.201 \pm 0.423}\)  
          & \(0.046 \pm 0.014\) & \(21.123 \pm 1.290\)  
          & \(0.668 \pm 0.168\) & \(42.398 \pm 11.578\) \\

        \(\mathrm{Pareto}(1,1)\)  
          & \cellcolor{green!5}\(\mathbf{0.120 \pm 0.053}\) & \cellcolor{green!5}\(\mathbf{21.769 \pm 0.072}\)  
          & \(0.240 \pm 0.064\) & \(23.676 \pm 0.072\)  
          & \(0.950 \pm 0.158\) & \cellcolor{yellow!5}{--} \\

        \(\mathrm{Model}_3\)      
          & \cellcolor{green!5}\(\mathbf{0.131 \pm 0.017}\) 
          & \cellcolor{green!5}\(\mathbf{19.406 \pm 1.303}\)  
          & \(0.148 \pm 0.097\) & \(31.561 \pm 2.769\)  
          & \(0.219 \pm 0.046\) & \(22.031 \pm 1.387\) \\
        \bottomrule
      \end{tabular}
    }%
  }
  \caption{\small KSD and ACCDF (mean \(\pm\) std) for Dual-ISL (with monotonicity penalty) vs.\ Wasserstein baselines at \(K=10\).}
  \label{tab:benchmarks_dual-isl_vs_wasserstein}
\end{table}

\subsection{Empirical proof of the convergence rate}

Our aim is to empirically validate Equation \ref{eq:convergence of qK to 1}. To do this, we train a neural network using the same architecture as in our previous experiment. The network is fed an input pdf \(\normdist{0}{1}\) and is tasked with approximating a target pdf defined as a mixture of Cauchy distributions. We estimate \(\tilde{p}\) via a kernel density estimator and compute the second derivative of \(q\) using central finite differences with a sixth-order expansion. Each experiment is repeated $10$ times, and the mean results are shown in Figures \ref{fig:empirical-convergence-rate_appendix}.


\begin{figure}[htbp]
  \centering
  \captionsetup[subfigure]{justification=centering, font=small, labelfont=bf}
  \captionsetup{font=small, labelfont=bf, skip=4pt}
  \begin{subfigure}[t]{0.48\linewidth}
    \centering
    \includegraphics[width=\linewidth,keepaspectratio]{img/Convergence_rate_q_two_normals_K_10.pdf}
    \caption{\small Fixed \(K=10\). NN trained with \(N=1000\) samples, lr \(10^{-3}\).}
    \label{fig:conv-rate-fixed-K}
  \end{subfigure}
  \hfill
  \begin{subfigure}[t]{0.48\linewidth}
    \centering
    \includegraphics[width=\linewidth,keepaspectratio]{img/Convergence_rate_q_two_normals-3.pdf}
    \caption{\small Varying \(K\). Each run uses 1000 epochs, \(N=1000\), lr \(10^{-3}\).}
    \label{fig:conv-rate-vary-K}
  \end{subfigure}

  \vspace{2pt} 
  \caption{\small Empirical convergence of dual-ISL’s Bernstein approximation (cf.\ Eq.~\ref{eq:convergence of qK to 1}).  
    The solid blue curve shows the mean theoretical upper bound \(\|q_K - 1\|_\infty\le (K+1)^3d_K\), and the dashed red curve shows the observed \(\|q - 1\|_\infty\).}
  \label{fig:empirical-convergence-rate_appendix}
\end{figure}

\subsection{Density estimation} \label{appendix:Density estimation}

\subsubsection{1D density estimation}

We employ the same fully‐connected NN and training hyperparameters as in Appendix~\ref{appendix: 1D experiments}.  Once training converges, we approximate the implicit density with Equation \ref{eq:pK estimation}. We restate the latter here for convenience
\begin{align*}
  \label{eq:pK-estimation}
  p_K(x)
  &= \widehat{\tilde p}(x)
     \sum_{m=0}^K \mathbb{Q}_K(m)\,\tilde b_{m,K}\bigl(\widehat{\tilde F}(x)\bigr)\,,
\end{align*}
where
\[
  \mathbb{Q}_K(m) \;=\; \Pr\bigl(A_K = m\bigr), 
  \quad
  \tilde b_{m,K}(u)
  = \binom{K}{m} u^m (1 - u)^{\,K-m}\,.
\]

The computation proceeds in three steps

\begin{enumerate}
  \item \textbf{Monte Carlo estimation of weights} \\
    For each \(m = 0,\dots,K\), estimate
    \[
      \mathbb{Q}_K(m)
      = \Pr\bigl(\#\{\,x_i \le x\}=m\bigr)
    \]
    by sampling \(K\) independent latent vectors \(z_i\sim\mathcal{N}(0,I)\), computing \(x_i = f_{\theta}(z_i)\), and counting how many satisfy \(x_i \le x\).  Repeat \(M\) times and take empirical frequencies:
    \[
      \widehat{\mathbb{Q}}_K(m)
      = \frac{1}{M}\sum_{j=1}^M \mathbf{1}\Bigl(\#\{x_i^{(j)} \le x\}=m\Bigr).
    \]

  \item \textbf{Empirical cdf estimation} \\
    Draw \(N\) samples \(\{x_i\}_{i=1}^N\) from the trained generator,
    sort them in ascending order, and form the empirical cdf
    \[
      \widehat{\tilde F}(x)
      = \frac{1}{N} \sum_{i=1}^N \mathbf{1}\{x_i \le x\}.
    \]

  \item \textbf{Finite‐difference density} \\
    Approximate the density of the push‐forward distribution by a first‐order finite difference:
    \[
      \widehat{\tilde p}(x)
      \approx \frac{\widehat{\tilde F}(x + \Delta) - \widehat{\tilde F}(x)}{\Delta},
      \qquad \Delta \ll 1.
    \]
\end{enumerate}

We evaluate Dual‐ISL on six univariate target distributions.  For each target, we estimate the mixture weights \(\widehat{\mathbb{Q}}_{K}\) via \(10^{5}\) Monte Carlo trials, compute the empirical cdf \(\widehat{\tilde F}(x)\) from \(10^{5}\) samples drawn from the trained generator, and form the density estimate using a first‐order finite difference with \(\Delta = 0.1\). Table \ref{Learning1D_table_densities_estimation} reports the average Kolmogorov–Smirnov distance (over 10 independent runs) for the estimated density using Equation \ref{eq:pK estimation} with $K\in\{2,5,10\}$ versus a Gaussian kernel density estimator using the same number of samples and Silverman’s rule for bandwidth selection. Figure \ref{fig:density-estimation-1d-distributions} illustrates density estimates for three representative targets—true density (solid red), ISL estimate (dashed blue), and KDE (dotted green)—all plotted on common axes to facilitate direct comparison of bias and tail behavior. As shown, Dual-ISL achieves the best results for every target except the Gaussian case, in which KDE with a Gaussian kernel performs marginally better.

\begin{table}[h]
  \centering
  \small                           
  \setlength{\tabcolsep}{3pt}      
  \rowcolors{2}{gray!10}{white}
    \begin{tabular}{lcccc}
      \toprule
      \textbf{Target} 
        & \textbf{Dual‐ISL (K=2)}
        & \textbf{Dual‐ISL (K=5)}
        & \textbf{Dual‐ISL (K=10)} 
        & \textbf{KDE} \\
      \midrule
      \(\mathcal{N}(4,2)\)  
        & \(0.0202\)
        & \(0.0178\)
        & \(0.0167\)
        & \(\mathbf{0.0110}\)\\
      Cauchy\((1,2)\)    
        & \(0.0237\) 
        & \(0.0253\)
        & \(\mathbf{0.0184}\)
        & \(0.2013\)\\
      Pareto\((1,1)\)    
        & \(0.0302\) 
        & \(\mathbf{0.0203}\)
        & \(0.0252\)
        & \(0.3872\)\\
      Mixture\textsubscript{1} 
        & \(0.0395 \)
        & \(\mathbf{0.0095}\)
        & \(0.0120 \)
        & \(0.0156\)\\
      Mixture\textsubscript{2}     
        & \(0.0171\) 
        & \(0.0167\) 
        & \(\mathbf{0.0070}\)
        & \(0.0145\)\\
      Mixture\textsubscript{3}        
        & \(0.1853\) 
        & \(0.1786\)
        & \(\mathbf{0.0741}\)
        & \(0.1644\)\\
      \bottomrule
    \end{tabular}
  \caption{\small Mean Kolmogorov–Smirnov distance (over 10 runs) for Dual‐ISL versus Gaussian KDE.  We used \(\widehat{\mathbb{Q}}_{K}\) estimated with \(10{,}000\) trials, the empirical CDF \(\widehat{\tilde F}\) from \(10{,}000\) samples, and a finite‐difference step \(\Delta=0.1\).}
  \label{Learning1D_table_densities_estimation}
\end{table}

\begin{figure}[!htbp]
  \captionsetup[subfigure]{justification=centering, font=small, labelfont=bf, skip=1pt}
  \centering
  \begin{adjustbox}{width=\textwidth}
    \begin{tabular}{@{} c c c @{}}
      \subcaptionbox{Cauchy}{%
        \includegraphics[width=0.32\textwidth]{img/density_estimation/density_estimation_kde_isl_cauchy.pdf}%
      }
      &
      \subcaptionbox{Model$_2$}{%
        \includegraphics[width=0.32\textwidth]{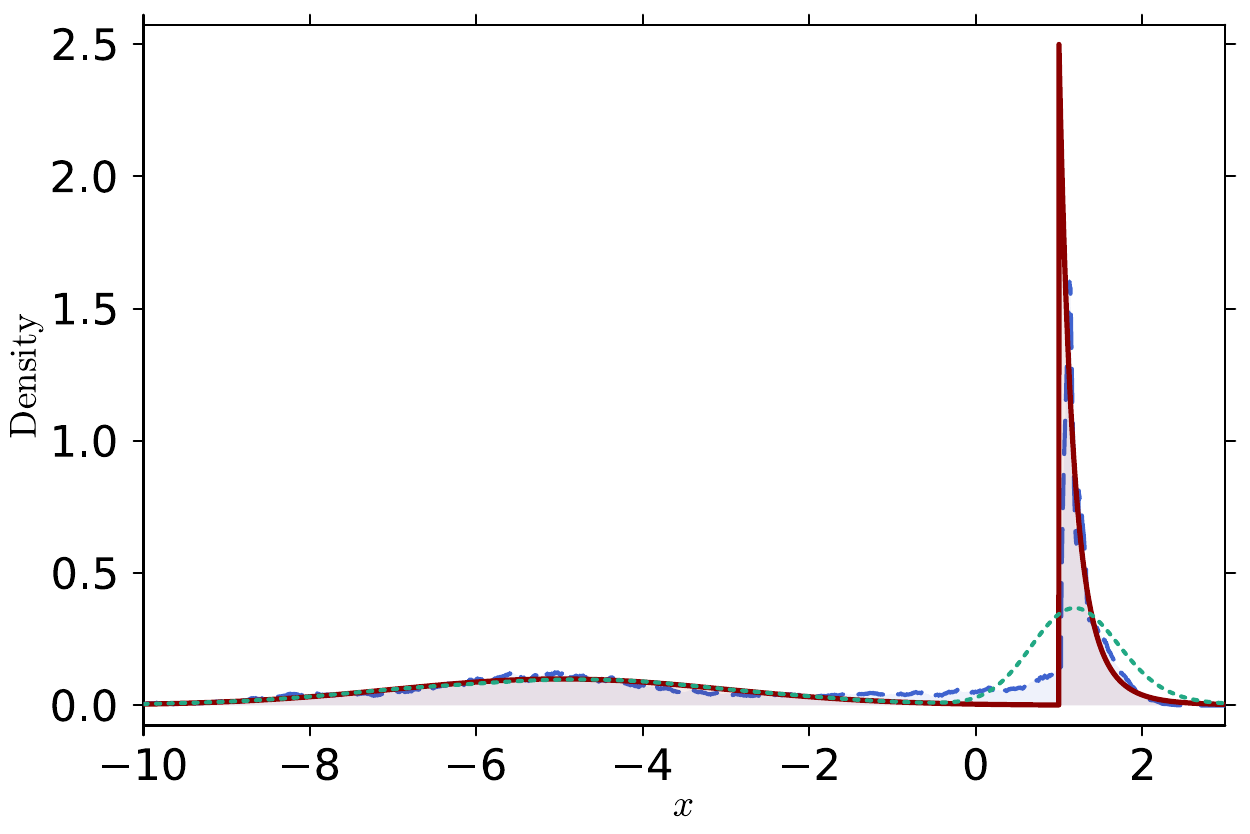}%
      }
      &
      \subcaptionbox{Model$_3$}{%
        \includegraphics[width=0.32\textwidth]{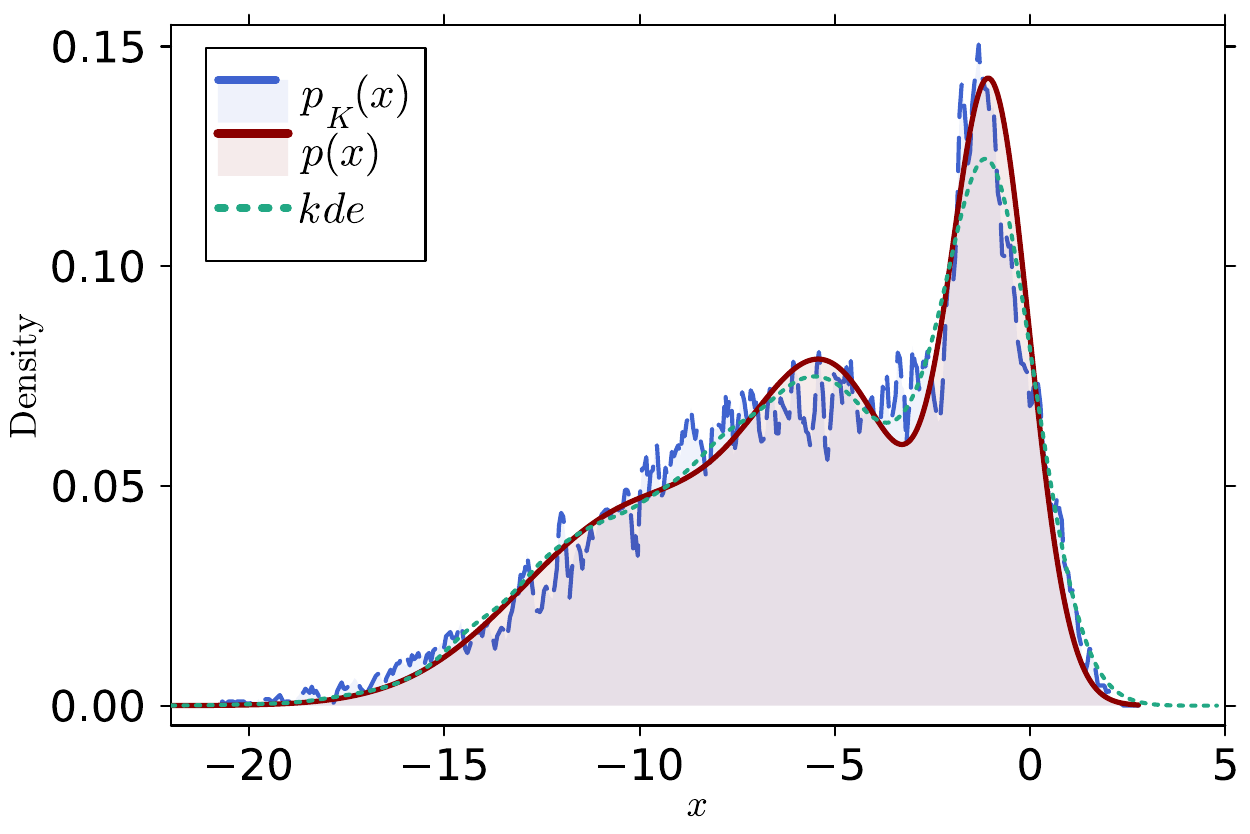}%
      }
    \end{tabular}
  \end{adjustbox}
\caption{Density estimates for three univariate distributions using the ISL estimator (blue dashed line) and Gaussian kernel density estimator (KDE; green dotted line). Figure~(a) shows the true Cauchy target density, while Figures~(b) and~(c) display the corresponding ISL and KDE estimates for Model$_2$ and Model$_3$, respectively. All panels share identical axes to facilitate direct comparison of estimator bias and tail behavior.}
\label{fig:density-estimation-1d-distributions}
\end{figure}

\subsubsection{2D density estimation}

To generalize our implicit estimator to data in \(\mathbb{R}^d\), let \(\{s_\ell\}_{\ell=1}^m\subset\mathbb S^d\) be random unit vectors.  Denote by 
\[
(s\#\hat p)(y)
\;=\;
\text{ISL-estimate of the 1D pdf at }y=s^\top x
\]
the push-forward density along \(s\).  Then for any query point \(x\in\mathbb R^d\),
\begin{align*}
\hat p(x)
&=\frac{1}{m}\sum_{\ell=1}^m
\bigl(s_\ell\#\hat p\bigr)\bigl(s_\ell^\top x\bigr).
\end{align*}
This Monte Carlo slicing—averaging one-dimensional ISL estimates—yields a consistent multivariate density approximation without ever constructing a full \(d\)-dimensional kernel.

In Figure~\ref{fig:2d-comparison density estimation}, we compare the two‐dimensional density estimates produced by our sliced Dual‐ISL method against a Gaussian kernel density estimator with Silverman’s rule for bandwidth selection.  Across all experiments, the generator is a four‐layer MLP that maps 2D standard normal noise to the data space, with each hidden layer comprising 32 \(\tanh\) units.  We train for 100 epochs using 1000 samples per epoch, and approximate the sliced discrepancy by averaging over \(m=10\) random projections with \(K=10\).  For the KDE baseline, we use 10000 samples to construct each density estimate.

\begin{figure}[!htbp]
  \centering
  \setlength{\tabcolsep}{2pt}
  \renewcommand{\arraystretch}{1.05}
  \begin{tabular}{@{}rc|ccc@{}}
    \toprule
    \rowcolor{gray!30}
    & \textbf{ISL}
    & \textbf{KDE}
    \\
    \midrule
    \rotatebox{90}{\textbf{Dual Moon}}
      & \includegraphics[width=2.8cm,height=2.8cm]{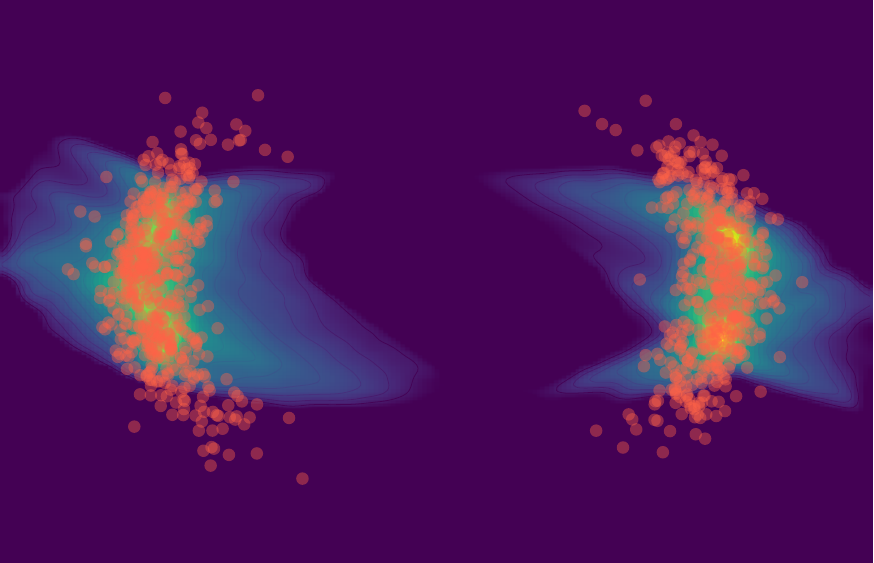}
      & \includegraphics[width=2.8cm,height=2.8cm]{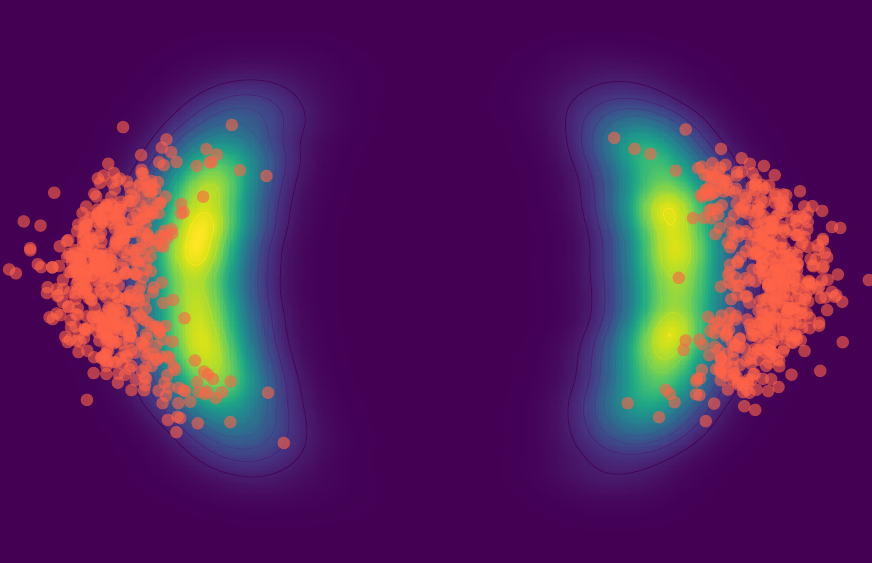}
      \\[2pt]
    \bottomrule
  \end{tabular}
  \caption{\small Two‐dimensional density estimates on synthetic Dual Moon dataset.  Left column: sliced Dual‐ISL (averaged over $m=10$ random projections with $K=10$); right column: Gaussian KDE using Silverman’s bandwidth rule.  The generator is a four‐layer MLP with 32 units per hidden layer, trained for 100 epochs with 1000 samples per epoch; the KDE baseline uses 10000 samples.}
  \label{fig:2d-comparison density estimation}
\end{figure}




\subsection{Experiments on 2D distributions}

We begin by evaluating three synthetic 2D benchmarks, each with a distinct topology:
\begin{enumerate}
  \item \textbf{Dual Moon}: A bimodal Gaussian mixture (two disconnected modes).
  \item \textbf{Circle Gaussian}: An eight-component Gaussian mixture arranged in a circle.
  \item \textbf{Two Ring}: A “double-ring” distribution consisting of two concentric circular supports.
\end{enumerate}
Our aim is to test whether the sliced dual-ISL method can fully recover these supports, including disconnected or non-convex regions.  We compare against both normalizing flows and GANs, measuring performance by
\begin{itemize}
  \item \textbf{KL divergence} between the learned and true densities, and
  \item \textbf{Visual coverage} of the true support via sample plots.
\end{itemize}

In all experiments—GAN, WGAN, and dual-ISL- the generator was a four‐layer MLP that maps two‐dimensional standard normal noise into data‐space samples, with each hidden layer comprising 32 units and tanh activations.  For the GAN and WGAN variants, the discriminator (or critic) adopted a similar four‐layer MLP but with 128‐unit hidden layers using ReLU activations and a final sigmoid output.  Every model was trained for 1 000 epochs with a batch size of 1000 under the Adam optimizer.  In the adversarial setups, we swept the critic-to-generator update ratio from 1:1 to 5:1 and chose the learning rate from $\{10^{-2},10^{-3},10^{-4},10^{-5}\}$.  For ISL, we fix $K=10$, drew $N=1000$ samples per projection, averaged over $L=10$ random projections, and used a constant learning rate of $10^{-3}$.

For the normalizing flow baseline we adopted the RealNVP architecture of \cite{dinh2016density}, consisting of four affine‐coupling layers whose scale and translation nets are two-hidden-layer MLPs with 32 ReLU units each.  RealNVP was trained under the same 1 000-epoch, batch-size-1 000 protocol, but with a fixed learning rate of $5\times10^{-5}$ as in \cite{Stimper2023}.

Figure \ref{fig:2d-comparison} exposes the fundamental limitations of popular generative models on complex, multimodal data.  GANs frequently collapse to a subset of modes, omitting entire regions of the true support.  Normalizing flows, by enforcing invertibility, preserve the topology of the base distribution but struggle to represent disconnected clusters, instead “bridging” them with thin density filaments.  In contrast, sliced Dual-ISL accurately recovers each connected component of the support—exemplified by the clean separation of the two moons—while still covering the full data manifold.  On the Circle-of-Gaussians task, however, a low projection order $K$ can allow leakage between rings; raising $K$ sharpens the estimate and eliminates this spillover.  A hybrid strategy (“Dual-ISL + GAN”), in which a model is first trained with Dual-ISL for 100 epochs and then fine-tuned adversarially, combines the stability and full-support coverage of Dual-ISL with the precision of a GAN, yielding the most faithful reconstructions.  Table \ref{table:2d_experiments} reports KL divergences between each model and the true distribution, confirming that both Dual-ISL and especially the Dual-ISL + GAN variant outperform all baselines.

\begin{figure}[!htbp]
  \centering
  \setlength{\tabcolsep}{2pt}
  \renewcommand{\arraystretch}{1.05}
  \begin{tabular}{@{}rc|ccccc@{}}
    \toprule
    \rowcolor{gray!30}
    & \textbf{Target}
    & \textbf{Real NVP}
    & \textbf{WGAN}
    & \textbf{ISL}
    & \textbf{dual-ISL}
    & \textbf{dual-ISL+GAN}
    \\
    \midrule
    \rotatebox{90}{\textbf{Dual Moon}}
      & \includegraphics[width=1.8cm,height=1.8cm]{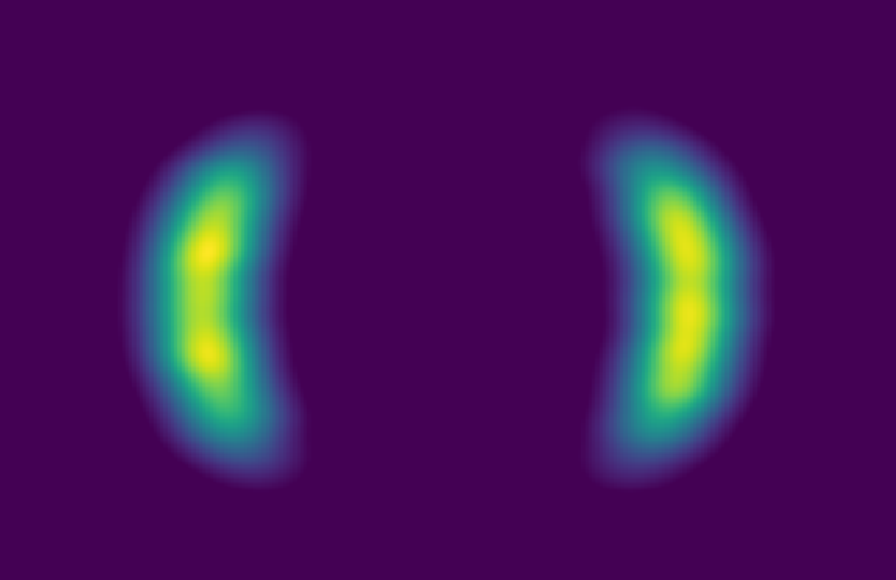}
      & \includegraphics[width=1.8cm,height=1.8cm]{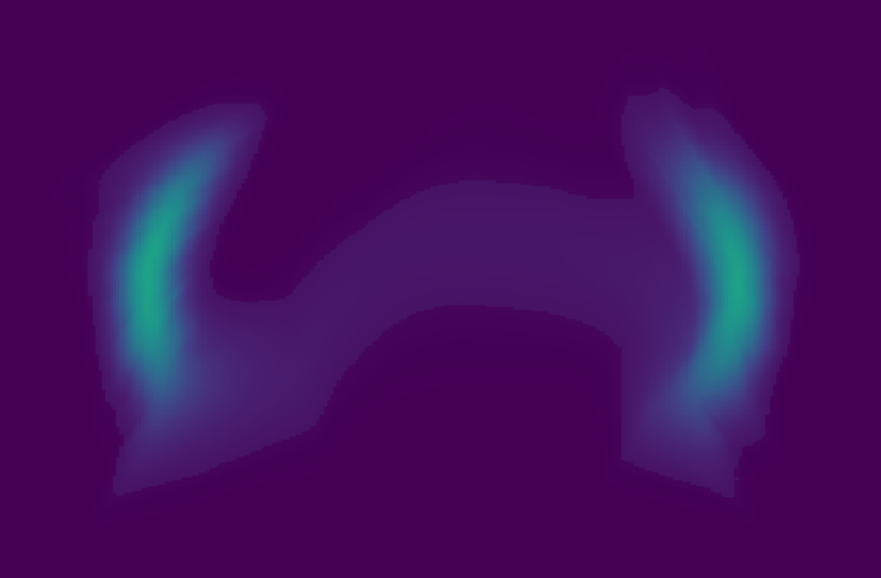}
      & \includegraphics[width=1.8cm,height=1.8cm]{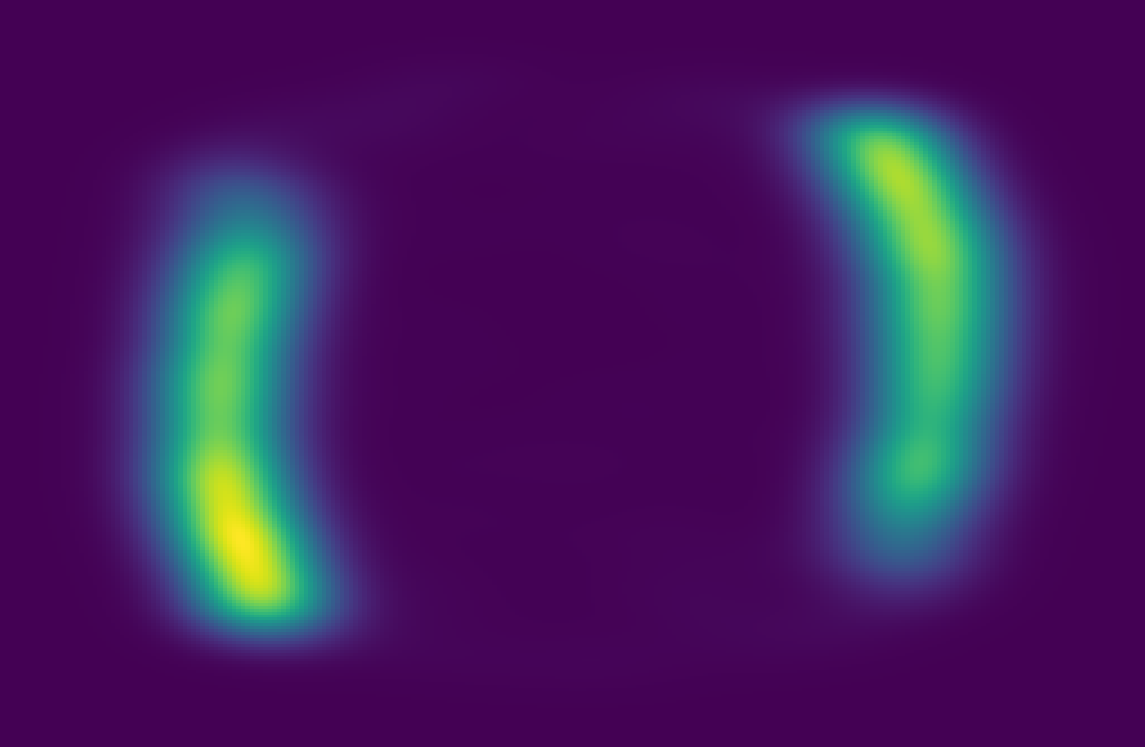}
      & \includegraphics[width=1.8cm,height=1.8cm]{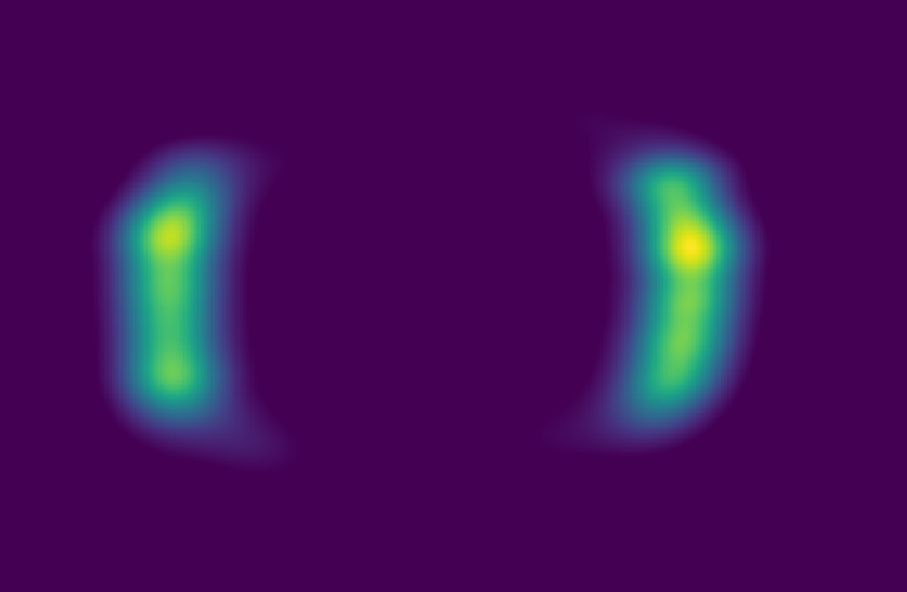}
      & \includegraphics[width=1.8cm,height=1.8cm]{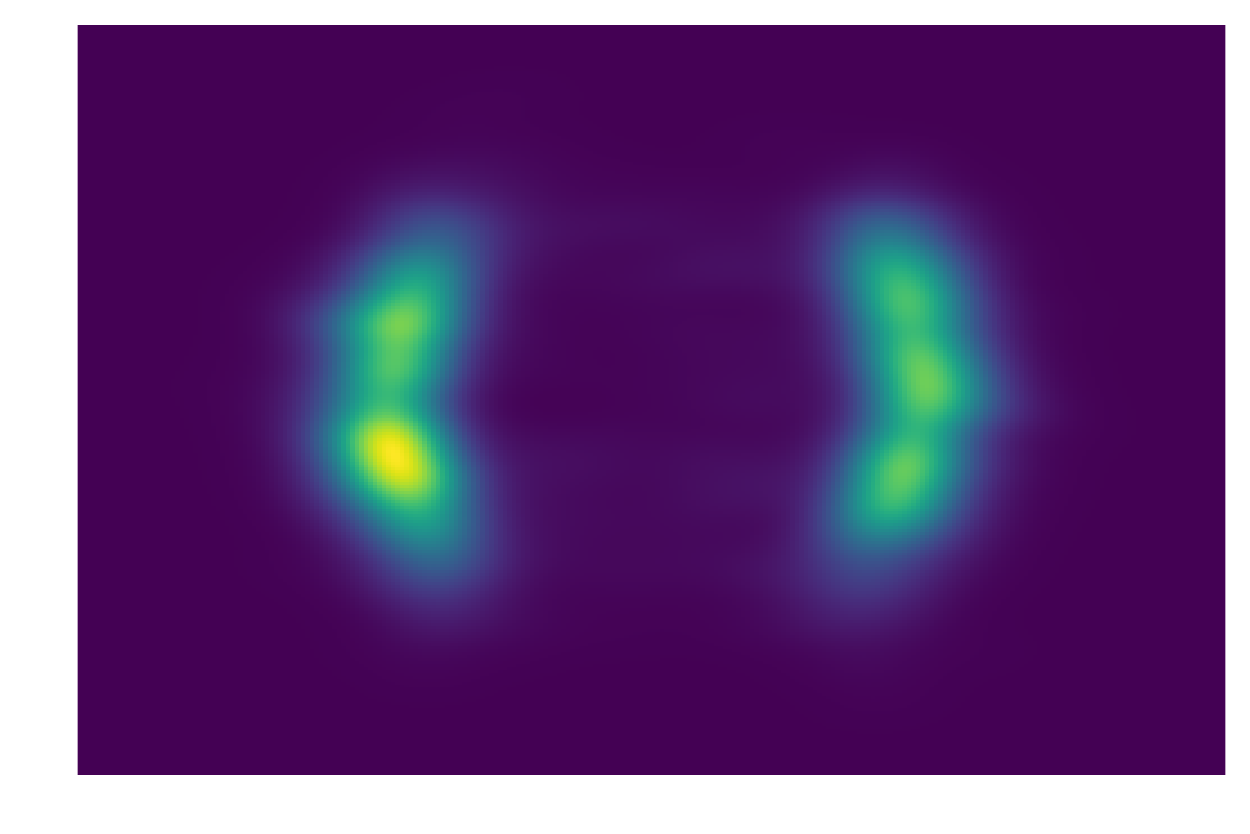}
      & \includegraphics[width=1.8cm,height=1.8cm]{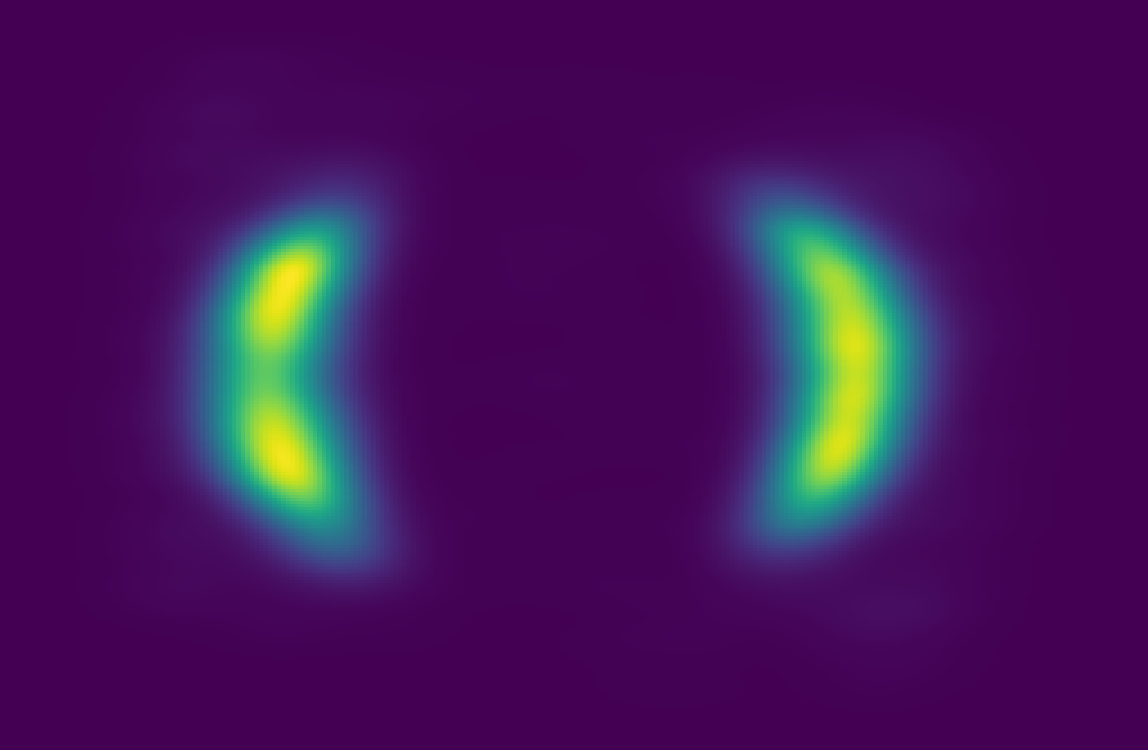}
    \\[2pt]
    \rotatebox{90}{\textbf{Circle Gauss}}
      & \includegraphics[width=1.8cm,height=1.8cm]{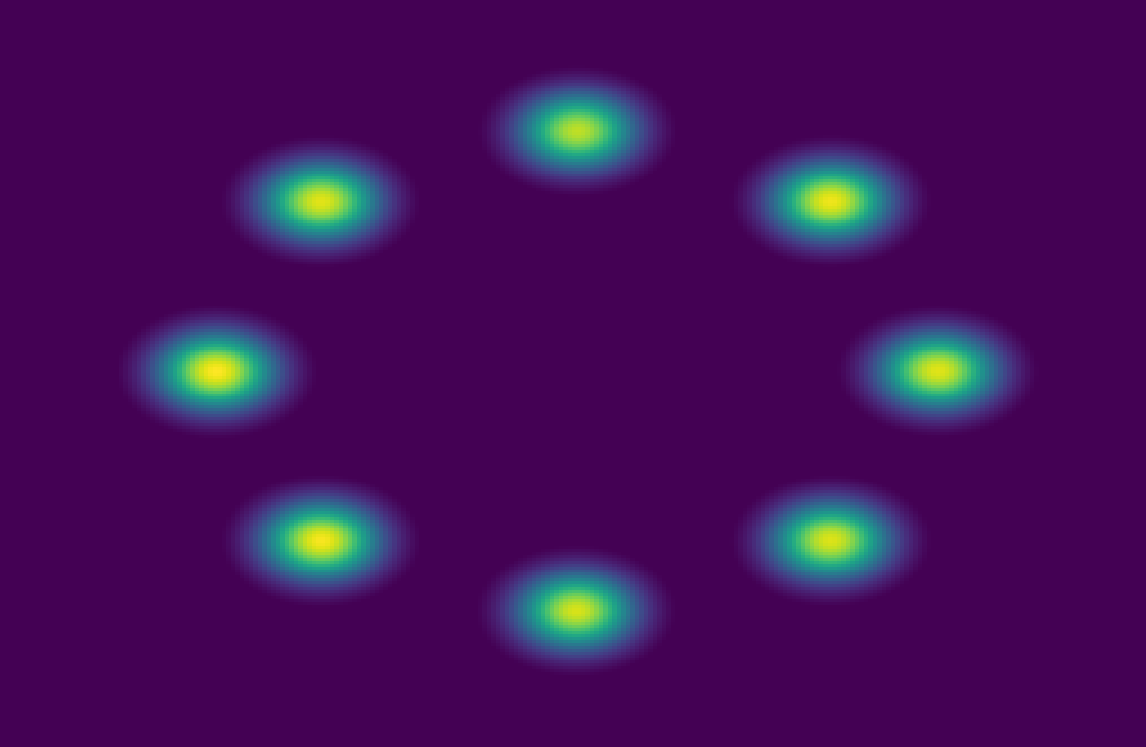}
      & \includegraphics[width=1.8cm,height=1.8cm]{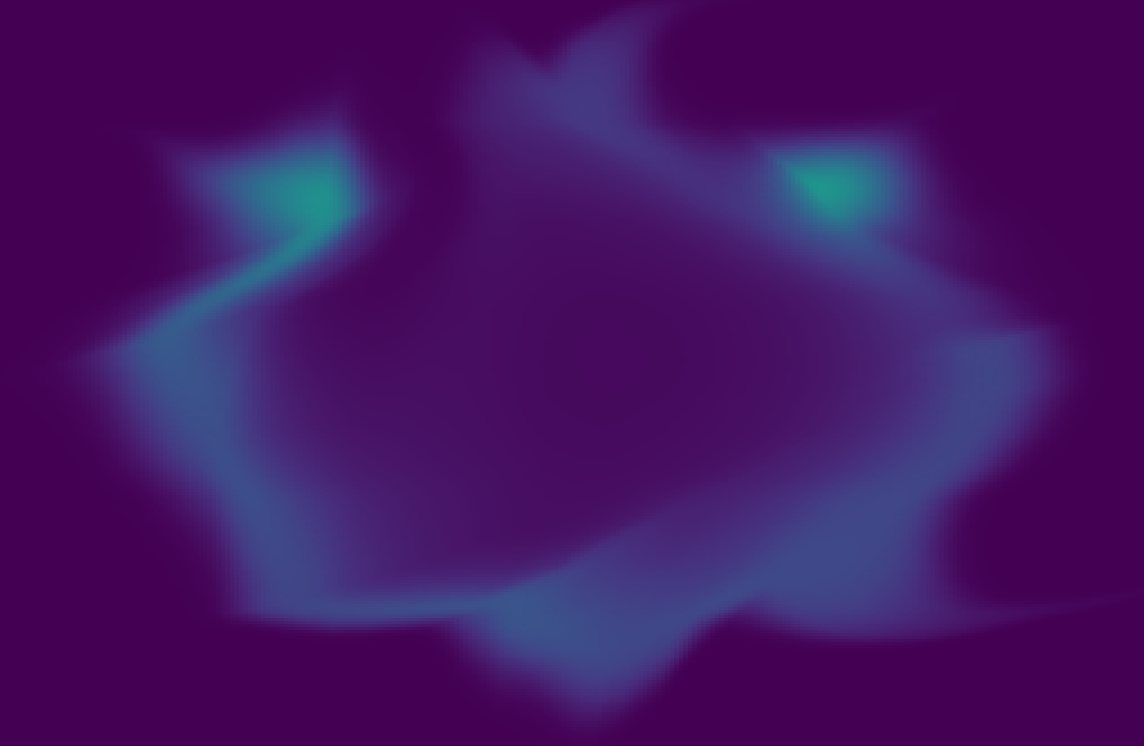}
      & \includegraphics[width=1.8cm,height=1.8cm]{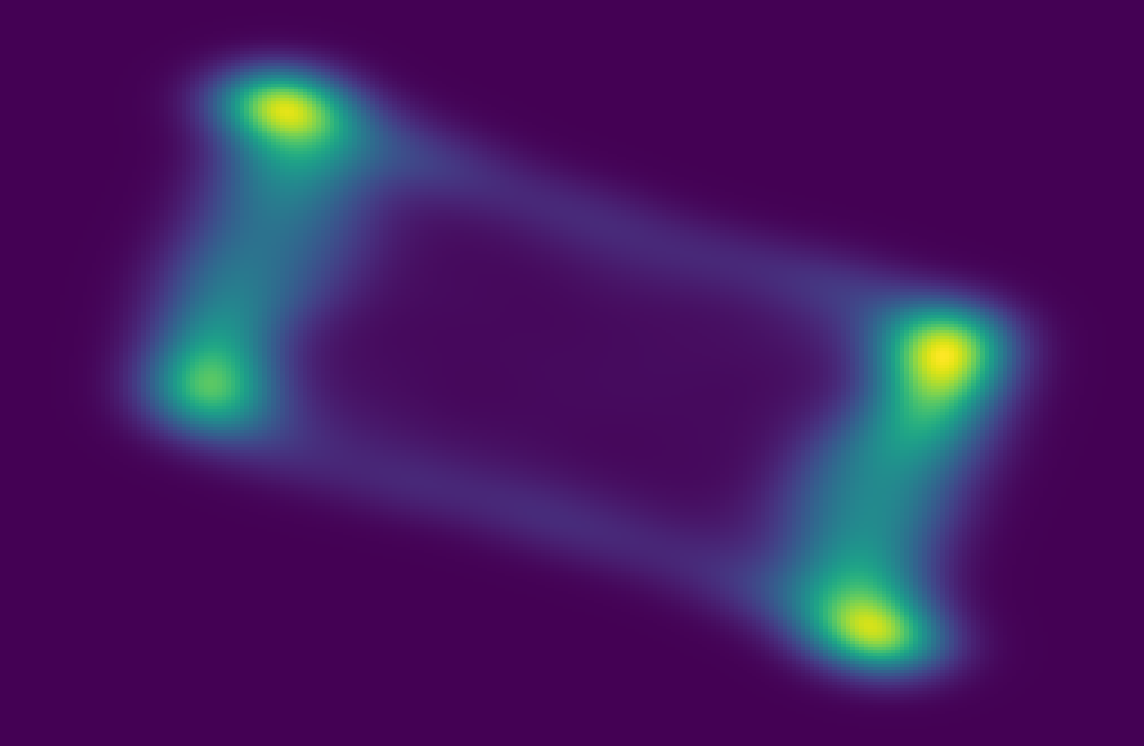}
      & \includegraphics[width=1.8cm,height=1.8cm]{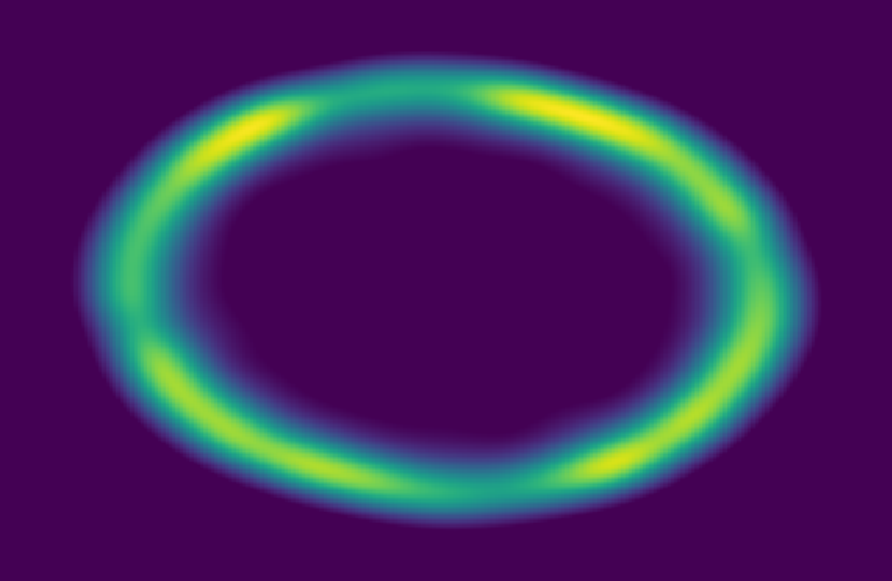}
      & \includegraphics[width=1.8cm,height=1.8cm]{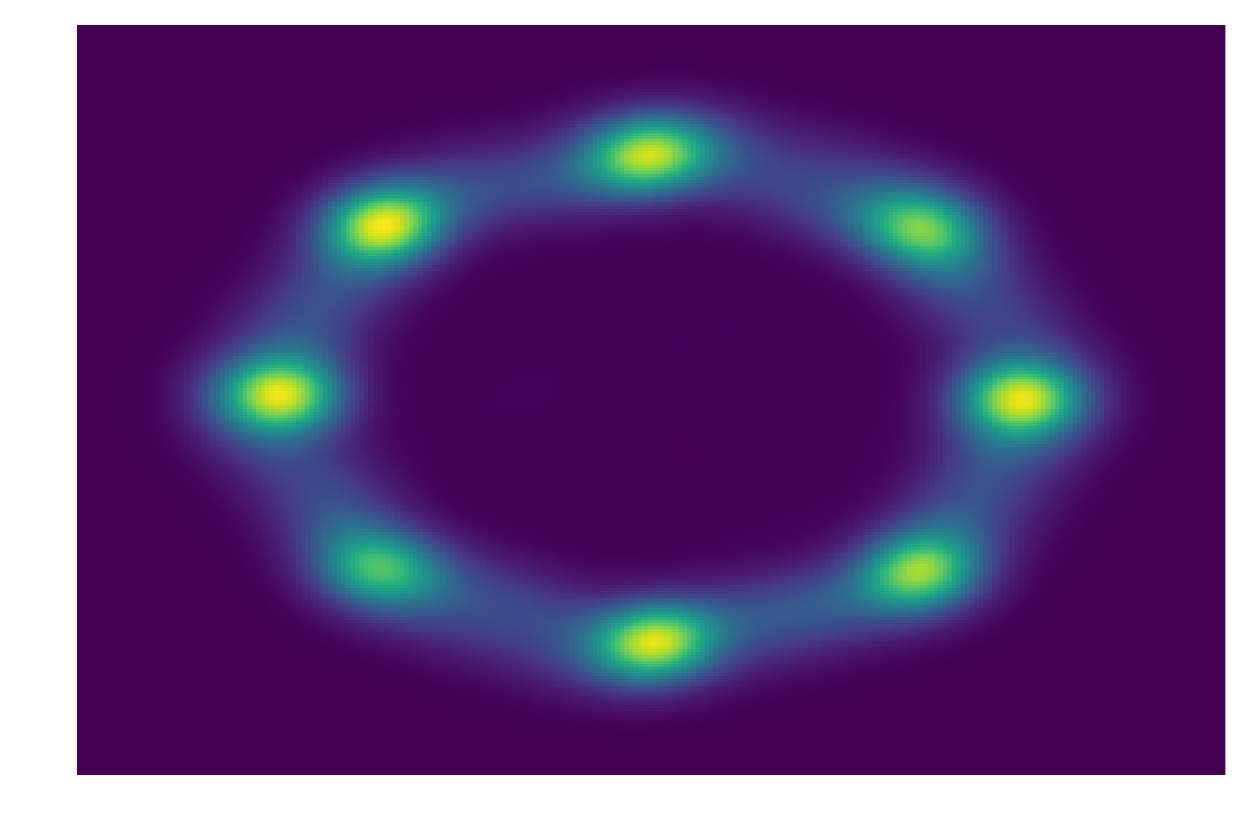}
      & \includegraphics[width=1.8cm,height=1.8cm]{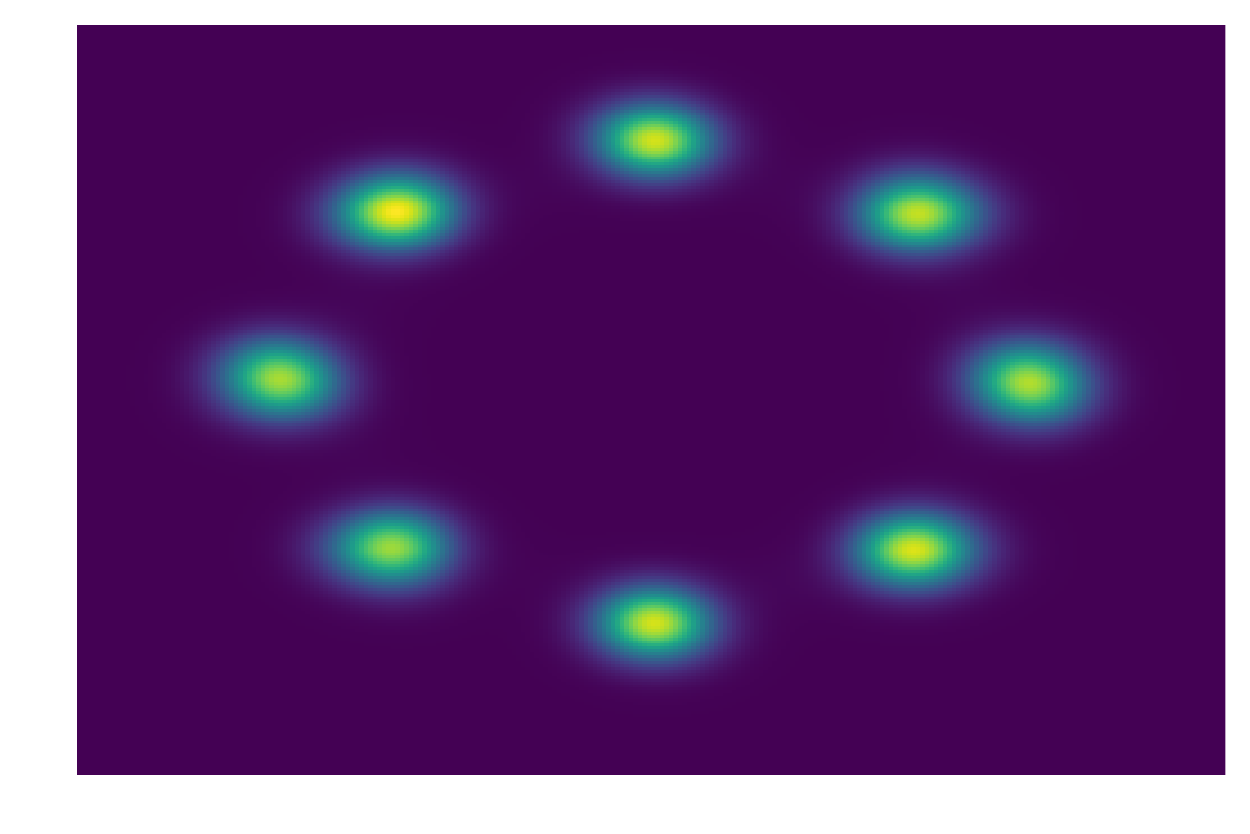}
    \\[2pt]
    \rotatebox{90}{\textbf{Two Rings}}
      & \includegraphics[width=1.8cm,height=1.8cm]{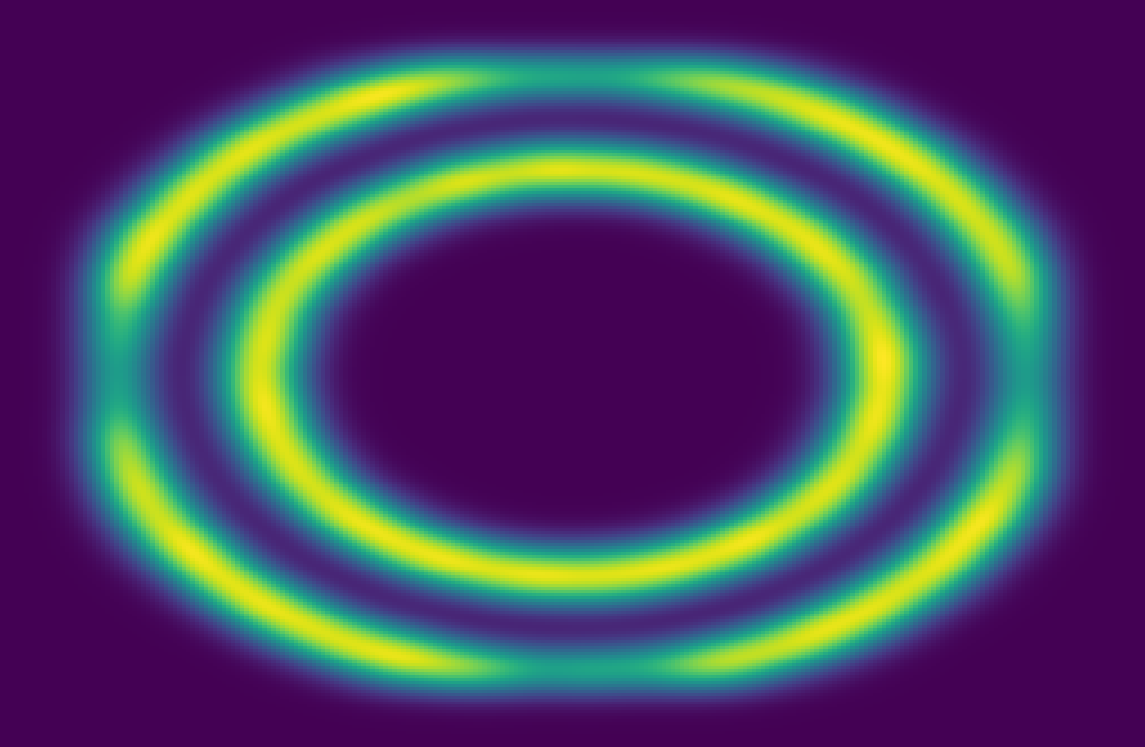}
      & \includegraphics[width=1.8cm,height=1.8cm]{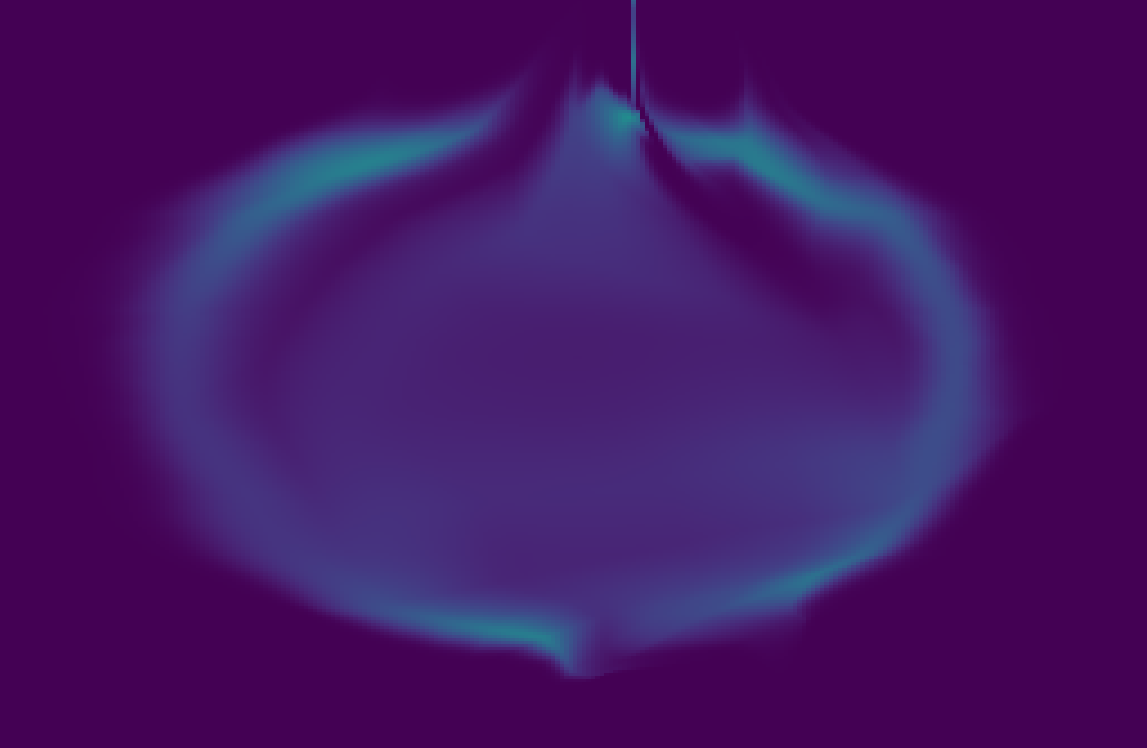}
      & \includegraphics[width=1.8cm,height=1.8cm]{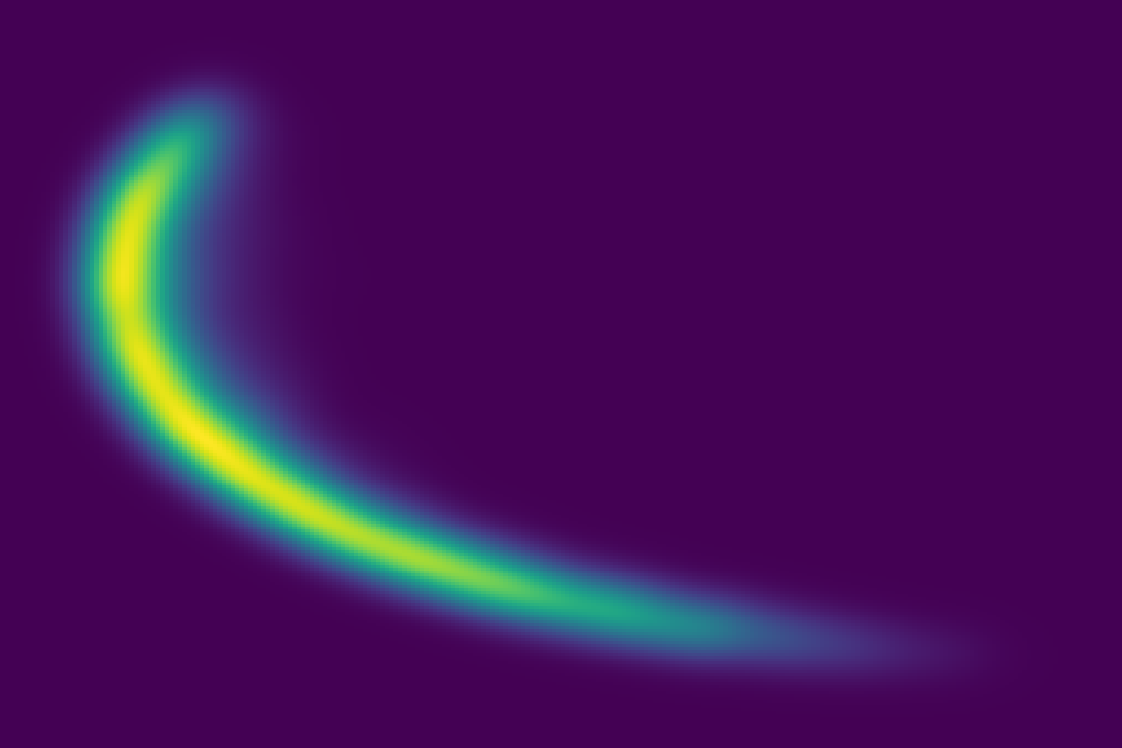}
      & \includegraphics[width=1.8cm,height=1.8cm]{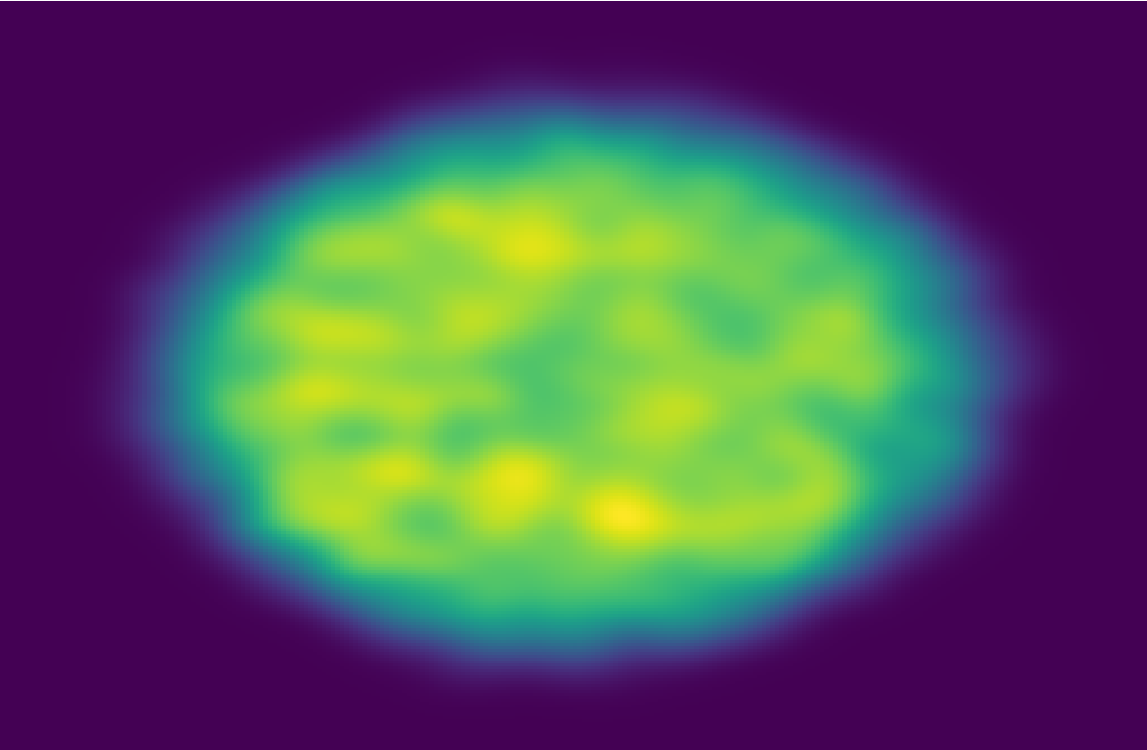}
      & \includegraphics[width=1.8cm,height=1.8cm]{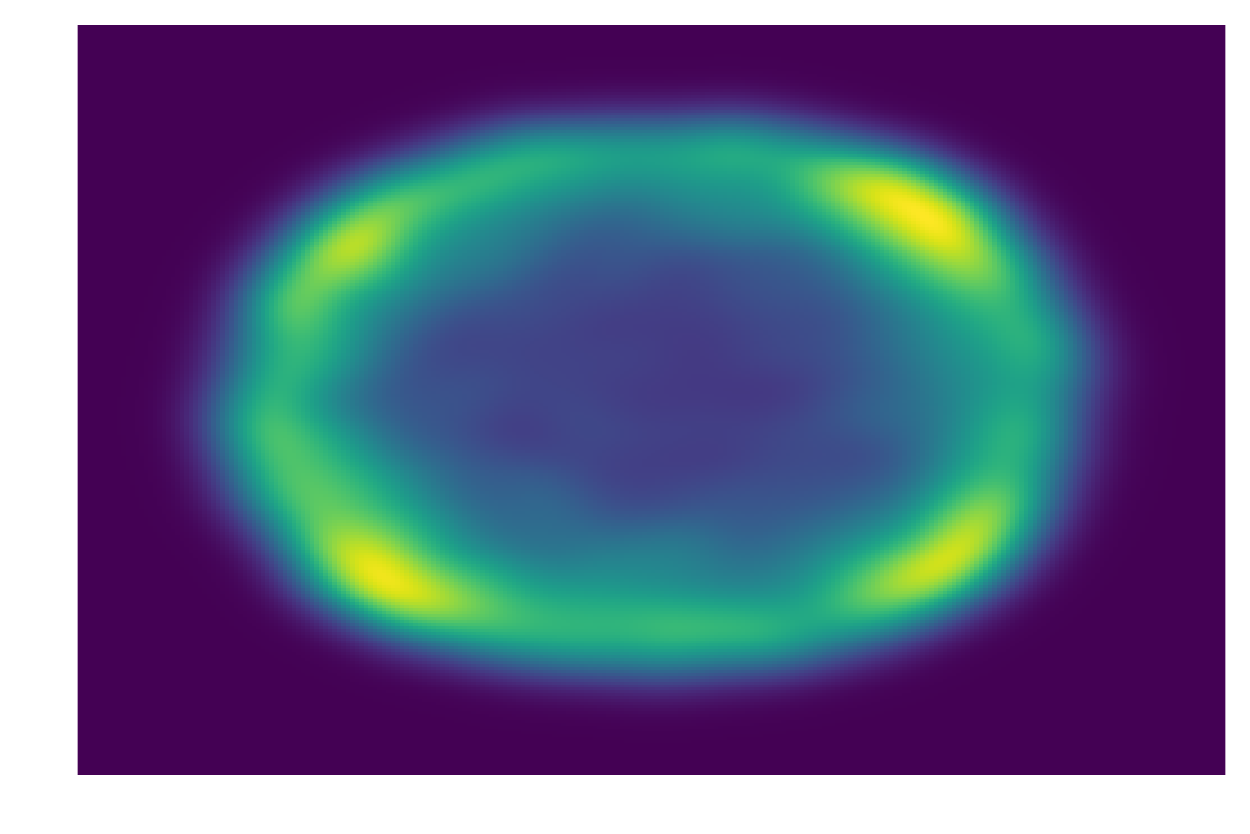}
      & \includegraphics[width=1.8cm,height=1.8cm]{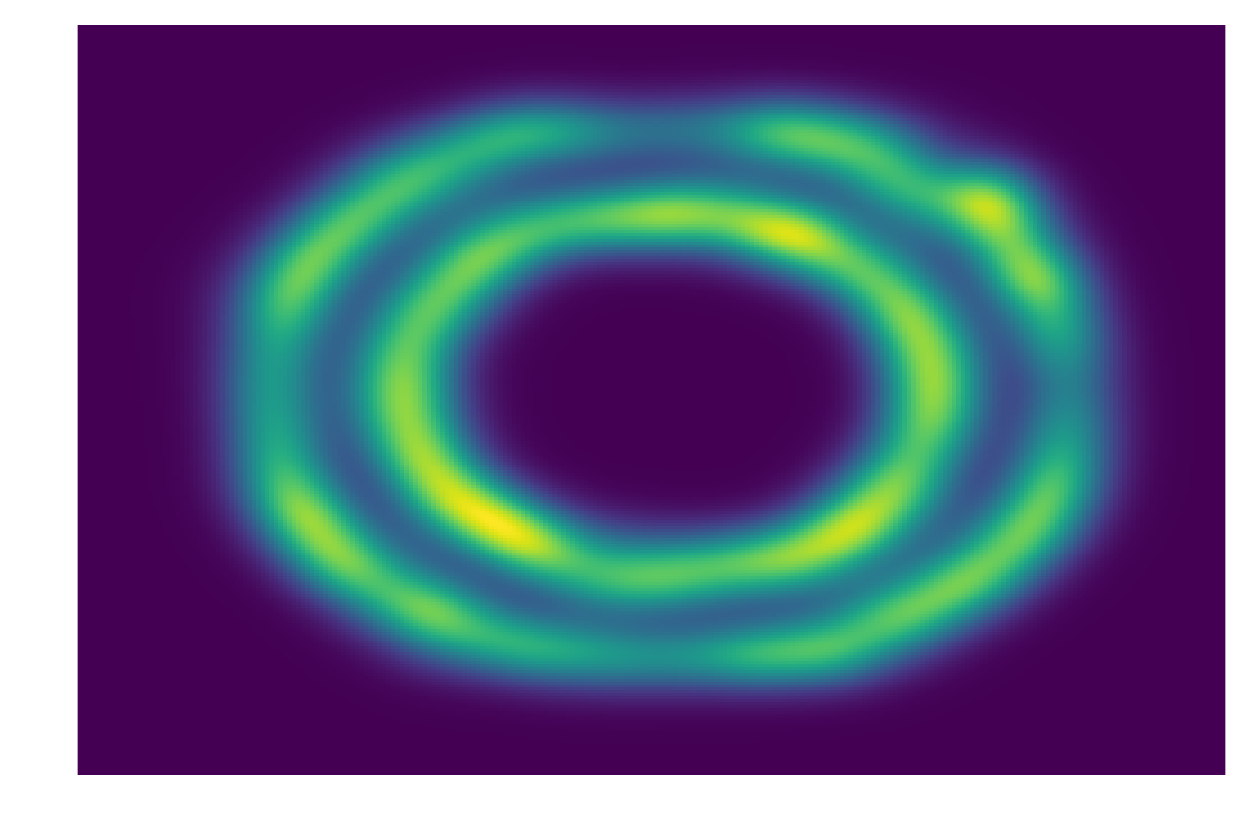}
    \\
    \bottomrule
  \end{tabular}
  \caption{Comparison of various generative methods on three 2D toy distributions. Each row (left) shows the target, and each column the outputs of Real NVP, WGAN, ISL-slicing, Dual-ISL, and dual-ISL(pretrained)+GAN, respectively.}
  \label{fig:2d-comparison}
\end{figure}

\begin{table}[htbp!]
  \centering
  \small
  \setlength{\tabcolsep}{8pt}
  \rowcolors{2}{gray!10}{white}
  \begin{tabular}{lcccccc}
    \toprule
    \rowcolor{gray!30}
    \textbf{Dataset}         & \textbf{Real NVP} & \textbf{GAN} & \textbf{WGAN} & \textbf{ISL} & \textbf{dual-ISL} & \textbf{dual-ISL+GAN} \\
    \midrule
    Dual Moon                & 1.77              & 1.23         & 1.02          & 0.43     &  0.35  & \textbf{0.21}    \\
    Circle of Gaussians      & 2.59              & 2.24         & 2.38          & 1.61   &  0.44    & \textbf{0.35}    \\
    Two Rings                & 2.69              & 1.46         & 2.74          & 0.56      & 0.43  & \textbf{0.29}    \\
    \bottomrule
  \end{tabular}
  \caption{KL‐divergence (lower is better) of different generative models on 2D toy benchmarks.}
  \label{table:2d_experiments}
\end{table}

Figure \ref{fig:2d_good_nice} overlays the true data samples (scatter points) with Gaussian‐KDE contours computed from $10^{5}$ points generated by sliced dual‐ISL.  These contours closely match the underlying support of both the Dual Moon and Two Rings datasets, accurately delineating disconnected modes and concentric rings.  This visualization confirms that sliced dual‐ISL not only recovers the global topology of complex 2D manifolds but also captures fine geometric details in the learned density.
\clearpage
\begin{figure}[htb!]
  \centering
  \begin{subfigure}[t]{0.48\textwidth}
    \centering
    \includegraphics[width=\linewidth, height=5cm]{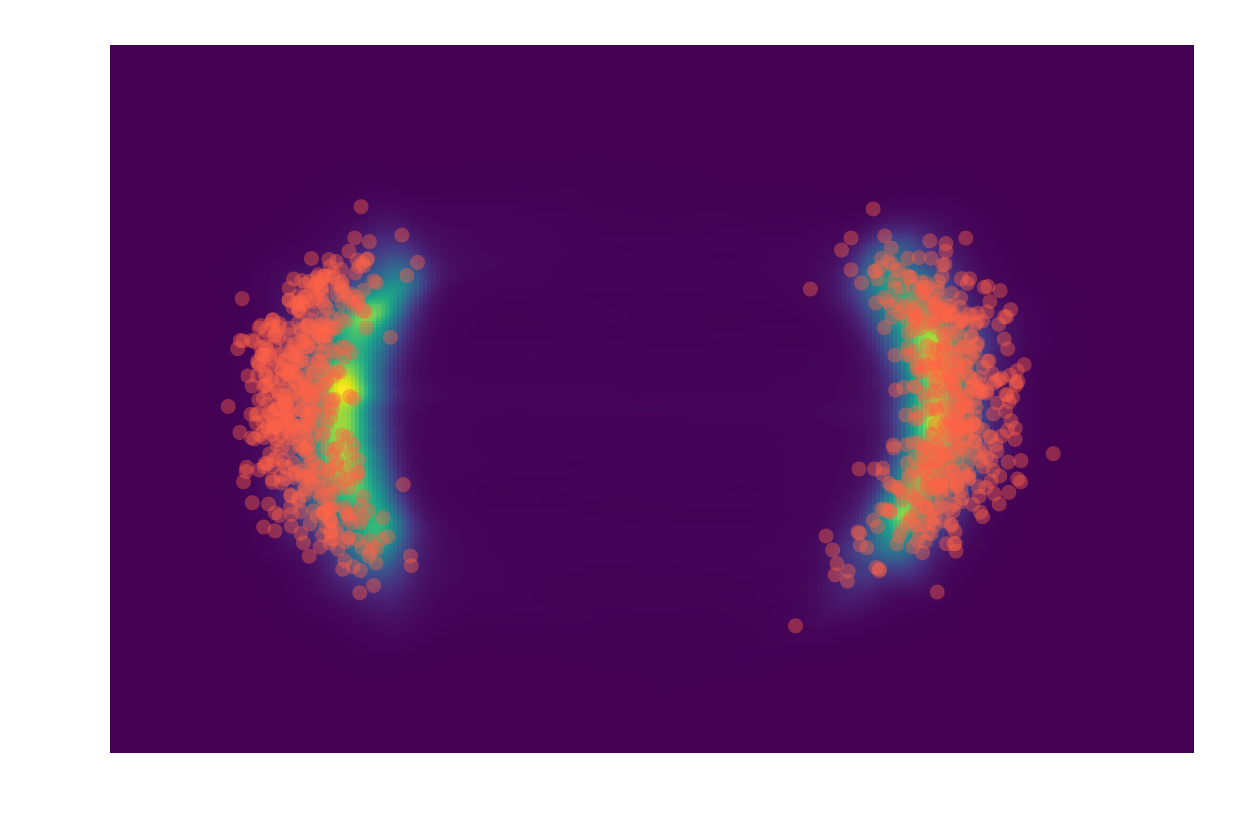}
    \caption{Dual Moon}
    \label{fig:2d_good_moon}
  \end{subfigure}
  \hfill
  \begin{subfigure}[t]{0.48\textwidth}
    \centering
    \includegraphics[width=\linewidth, height=5cm]{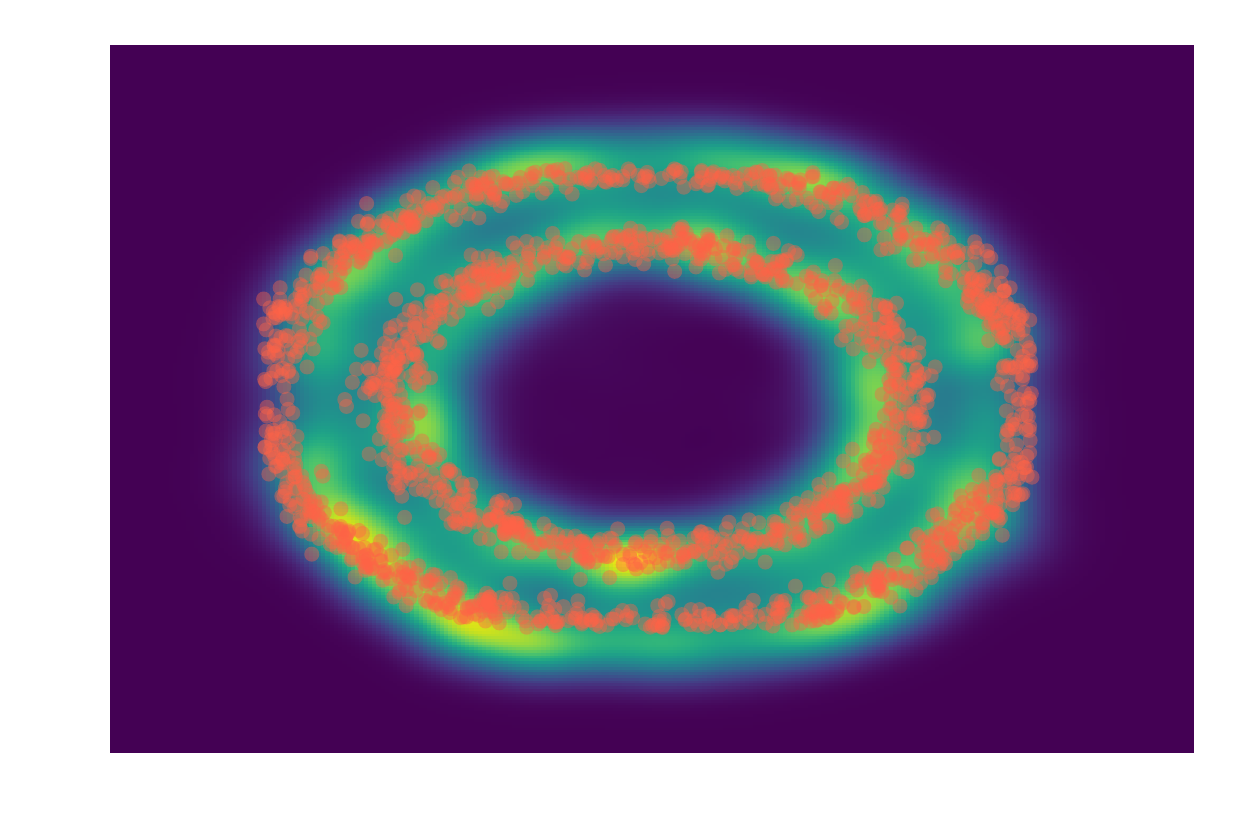}
    \caption{Two Rings}
    \label{fig:2d_good_rings}
  \end{subfigure}

  \vspace{0.5ex}
  \caption{Scatter plots of the true Dual Moon (left) and Two Rings (right) datasets, overlaid with density contours learned by Sliced dual-ISL and approximated via Gaussian KDE on $10^{5}$ generated samples. Models were trained for 1000 epochs with \(L=20\) random projections and a batch size of \(N=2000\).}
  \label{fig:2d_good_nice}
\end{figure}

\newpage \clearpage
\subsection{High dimensional experiments}

In this section, we evaluate the slicing dual-ISL method on high-dimensional image generation.  In particular, we incorporate the dual-ISL objective into the training of a DCGAN generator \cite{radford2015unsupervised} and benchmark the resulting architecture  on the MNIST and Fashion-MNIST datasets.  We report precision (a proxy for sample fidelity) and recall (a proxy for sample diversity) to assess the quality and diversity of the generated images \cite{sajjadi2018assessing}. Our models were trained for 40 epochs with a batch size of 128, except for the pretrained models, which were first pretrained with sliced dual-ISL for 20 epochs (using 20 random projections) and then trained as a DCGAN for 40 epochs.

In Table \ref{table:real word data 1}, we report our results alongside those of other implicit generative models. On MNIST, our simple ISL-based model achieves recall rates comparable to much more complex, multi-discriminator GANs \cite{durugkar2016generative, choi2022mcl} (5 discriminator each), despite using far fewer parameters. Furthermore, by pretraining the DCGAN generator with sliced dual-ISL and then fine-tuning under the standard adversarial loss, we attain state-of-the-art precision scores.

On Fashion MNIST, our model matches MCL-GAN in recall and—while we do not quite reach GMAN’s recall performance—our precision scores exceed theirs. This demonstrates that even with a simpler architecture, ISL can deliver competitive recall and precision results across diverse image-generation benchmarks.

\begin{table}[ht!]
  \centering
  \small
  \label{tab:quant-results}
  \setlength{\tabcolsep}{3pt}
  \begin{tabular}{
      l 
      l 
      S[table-format=2.2(2)] 
      S[table-format=2.2(2)] 
      S[table-format=2.2(2)] 
      S[table-format=2.2(2)] 
    }
    \toprule
    \multirow{2}{*}{\textbf{Dataset}}
      & \multirow{2}{*}{\textbf{Method}}
      & \multicolumn{2}{c}{\textbf{F-score}}
      & \multicolumn{2}{c}{\textbf{P\&R}} \\
    \cmidrule(lr){3-4} \cmidrule(lr){5-6}
      & 
      & {\(F_{1/8}\!\uparrow\)} 
      & {\(F_{8}\!\uparrow\)} 
      & {Precision\(\uparrow\)} 
      & {Recall\(\uparrow\)} \\
    \midrule
    \textbf{MNIST (28×28)}
      & dual-ISL (m=20)
      & 85.00 \pm 0.32   & 95.17 \pm 1.76
      & 84.85 \pm 1.20   & 95.35 \pm 1.39 \\
      & dual-ISL (m=50)
      & 85.69 \pm 0.29   & 95.81 \pm 1.24
      & 85.55 \pm 1.11   & 96.23 \pm 1.98 \\
      & DCGAN
      & 93.58 \pm 0.64   & 75.66 \pm 1.46
      & 93.85 \pm 1.45   & 75.43 \pm 2.56 \\
      & dual-ISL + DCGAN
      & 93.58 \pm 0.84   & 95.82 \pm 1.61
      & 94.03 \pm 1.82   & 96.68 \pm 2.42 \\
      & GMAN
      & 97.60 \pm 0.70   & 96.81 \pm 1.71
      & 97.60 \pm 1.82   & 96.80 \pm 2.42 \\
      & MCL-GAN
      & $\mathbf{97.71 \pm 0.19}$ & $\mathbf{98.49 \pm 1.57}$
      & $\mathbf{97.70 \pm 1.33}$ & $\mathbf{98.50 \pm 2.15}$ \\
    \addlinespace
    \textbf{FMNIST (28×28)}
      & dual-ISL (m=20)
      & 81.84 \pm 0.11   & 91.08 \pm 1.83
      & 81.48 \pm 1.43   & 91.49 \pm 2.15 \\
      & dual-ISL (m=50)
      & 83.90 \pm 0.09   & 91.18 \pm 1.57
      & 84.08 \pm 1.31   & 92.92 \pm 1.23 \\
      & DCGAN
      & 86.14 \pm 0.11   & 88.92 \pm 1.51
      & 86.60 \pm 1.58   & 88.97 \pm 1.33 \\
      & dual-ISL + DCGAN
      & 91.43 \pm 0.19   & 91.87 \pm 1.57
      & 91.88 \pm 1.35   & 92.42 \pm 1.47 \\
      & GMAN
      & 90.97 \pm 0.09   & $\mathbf{95.43 \pm 1.12}$
      & 90.90 \pm 1.33   & $\mathbf{95.50 \pm 2.25}$ \\
      & MCL-GAN
      & $\mathbf{97.62 \pm 0.09}$ & 92.97 \pm 1.28
      & $\mathbf{97.70 \pm 1.33}$ & 92.90 \pm 2.31 \\
    \bottomrule
  \end{tabular}\vspace{0.2cm}
    \caption{\small Quantitative comparison of generative models on MNIST and Fashion-MNIST (\(28\times28\)) using F$_{1/8}$, F$_{8}$ (weighted harmonic mean of Precision and Recall, with $\beta=1/8$ and $\beta=8$ respectively), Precision, and Recall (mean ± std) in \%. Results are shown for dual-ISL with \(m=20\) and \(m=50\) random projections, standard and Wasserstein DCGAN variants (with and without ISL pretraining), GMAN, and MCL-GAN. Boldface highlights the best score in each column per dataset. Higher is better.} \label{table:real word data 1}
\end{table}

Figure \ref{fig:digits frequency} shows the class‐frequency distributions obtained by our sliced dual‐ISL model (40 epochs, $m=50$ random projections) versus a standard DCGAN. Our model produces all ten digit classes in nearly uniform proportions—closely matching the true uniform distribution—while the DCGAN exhibits pronounced class imbalance. To compute these frequencies, we generated 10 000 samples from each model and classified them with a pretrained digit recognizer. In a Kolmogorov–Smirnov test for uniformity (on 10 000 samples), the sliced dual‐ISL model achieved $p = 0.070 $, compared to $p = 0.642$ for DCGAN, indicating a significantly better match to the ideal uniform distribution.

Finally, Figure \ref{fig:ablation studies on p and r metrics vs m} presents an ablation study of precision and recall in dual-ISL on MNIST, comparing the effects of using 20 versus 100 random projections. Even with only 20 projections, the model already achieves strong performance, and increasing the number of projections yields only marginal gains in these quality metrics.

\begin{figure}[htbp]
  \centering
\begin{tikzpicture}
  \begin{axis}[
      name=plotA,
      ybar,
      bar width=6pt,
      width=6.5cm, height=4cm,
      enlarge x limits=0.1,
      ymin=0, ymax=0.25,
      ylabel={Frecuency},
      xtick={0,...,9},
      xticklabel style={font=\footnotesize},
      axis lines=box,
      xtick pos=bottom,
      ytick pos=left,
    ]
    \draw[black, dashed] 
      (axis cs:-0.5,0.1) -- (axis cs:9.5,0.1);
    \addplot[fill=blue!60] coordinates {
      (0,0.086) (1,0.0996) (2,0.1223) (3,0.1694) (4,0.0904)
      (5,0.0979) (6,0.0762) (7,0.1034) (8,0.0806) (9,0.0742)
    };
  \end{axis}

  \begin{axis}[
      at={(plotA.south east)},    
      anchor=south west,          
      xshift=1.5cm,               
      ybar,
      bar width=6pt,
      width=6.5cm, height=4cm,
      enlarge x limits=0.1,
      ymin=0, ymax=0.25,
      xtick={0,...,9},
      xticklabel style={font=\footnotesize},
      axis lines=box,
      xtick pos=bottom,
      ytick pos=left,
    ]
    \draw[black, dashed] 
      (axis cs:-0.5,0.1) -- (axis cs:9.5,0.1);
    \addplot[fill=blue!60] coordinates {
      (0,0.1372) (1,0.0864) (2,0.2413) (3,0.036) (4,0.1659)
      (5,0.0155) (6,0.1171) (7,0.0087) (8,0.1895) (9,0.0078)
    };
  \end{axis}
\end{tikzpicture}
  \caption{Class‐frequency distributions of MNIST samples generated by two different models. Left: dual-ISL; Right: standard DCGAN. Each bar represents the proportion of generated images assigned to each digit class (0–9), illustrating that dual-ISL produces a more uniform coverage across all classes, whereas the DCGAN exhibits notable biases toward certain digits. The dashed line indicates the ideal uniform distribution across all classes.}
  \label{fig:digits frequency}
\end{figure}
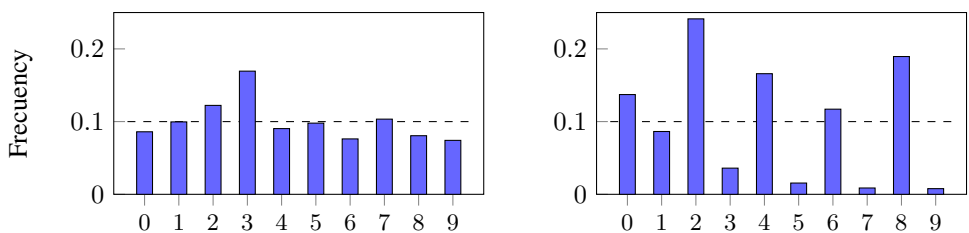

\begin{figure}[htb!]
  \centering
  \begin{subfigure}[t]{0.48\textwidth}
    \centering
    \includegraphics[width=\linewidth, height=5cm]{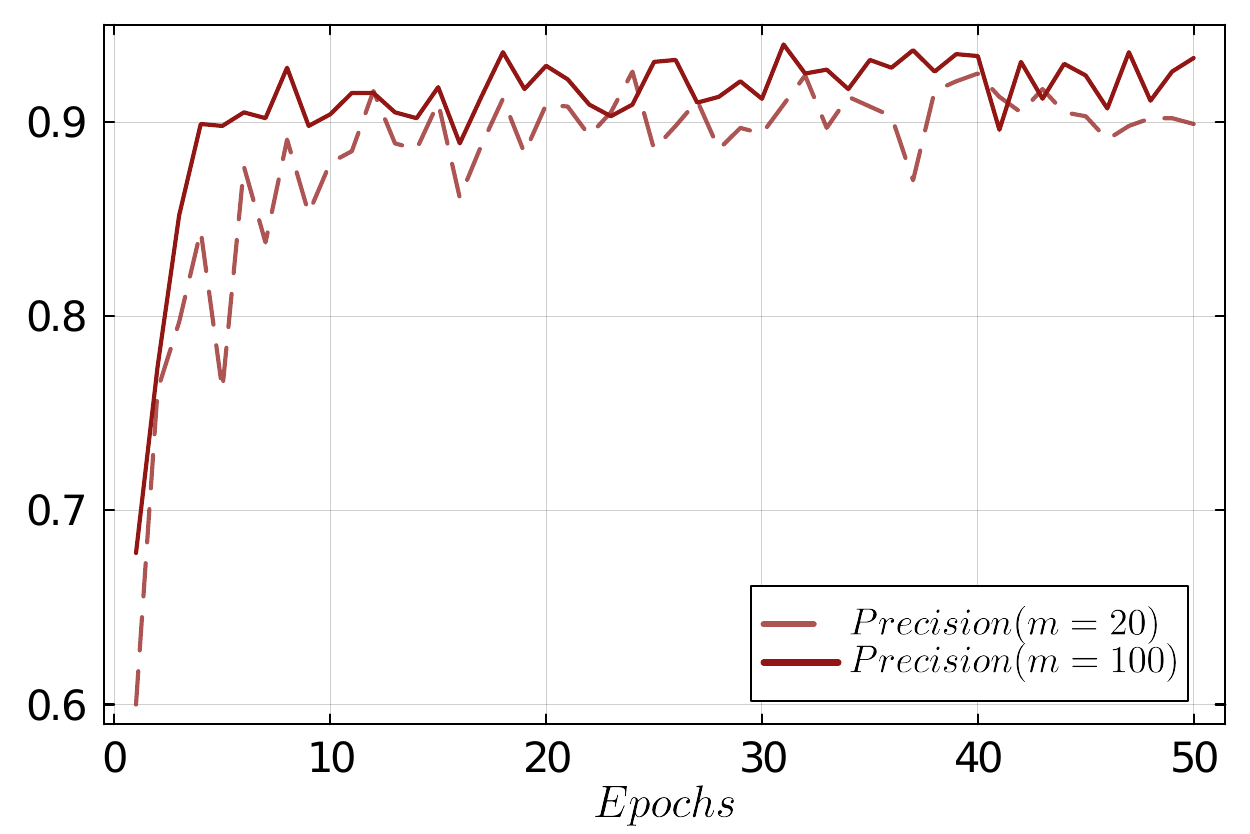}
    \label{fig:precision_m20_m100_MNIST}
  \end{subfigure}
  \hfill
  \begin{subfigure}[t]{0.48\textwidth}
    \centering
    \includegraphics[width=\linewidth, height=5cm]{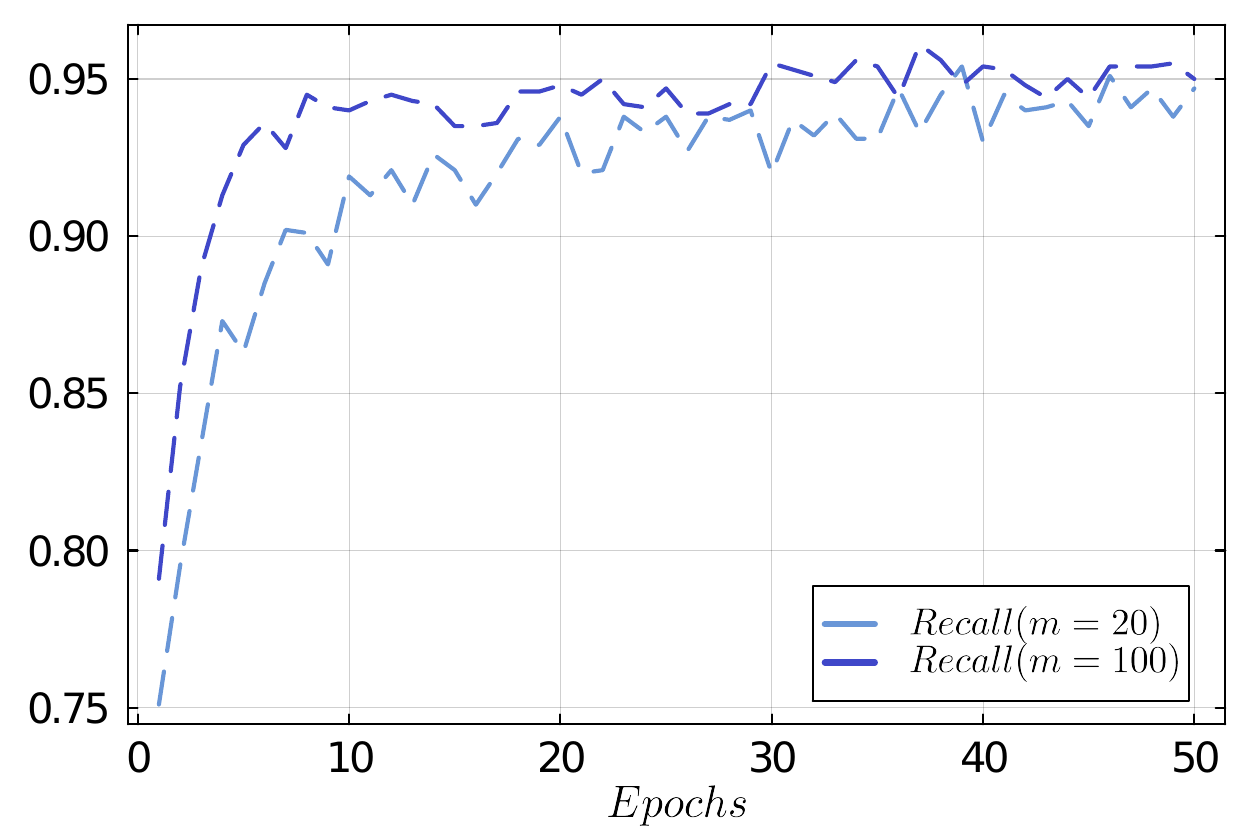}
    \label{fig:recall_m20_m100_MNIST}
  \end{subfigure}

  \vspace{0.5ex}
  \caption{Precision (left) and recall (right) on the MNIST test set after 50 training epochs, using $m=20$ and $m=100$ random projections.}
  \label{fig:ablation studies on p and r metrics vs m}
\end{figure}

\begin{figure}[htb!]
  \centering
  \begin{subfigure}[t]{0.48\textwidth}
    \centering
    \includegraphics[width=\linewidth, height=5cm]{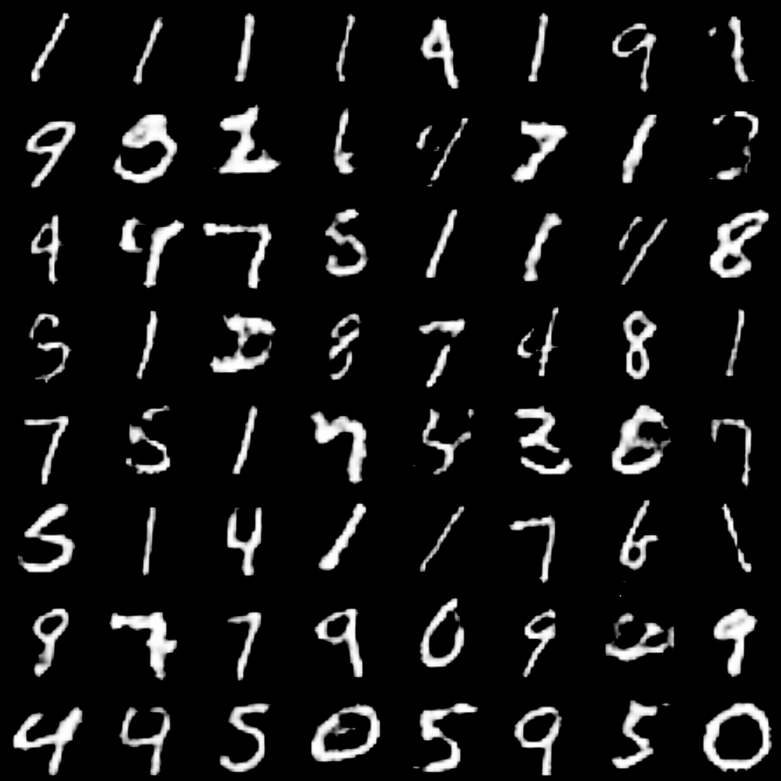}
    \caption{DCGAN}
    \label{fig:GAN}
  \end{subfigure}
  \hfill
  \begin{subfigure}[t]{0.48\textwidth}
    \centering
    \includegraphics[width=\linewidth, height=5cm]{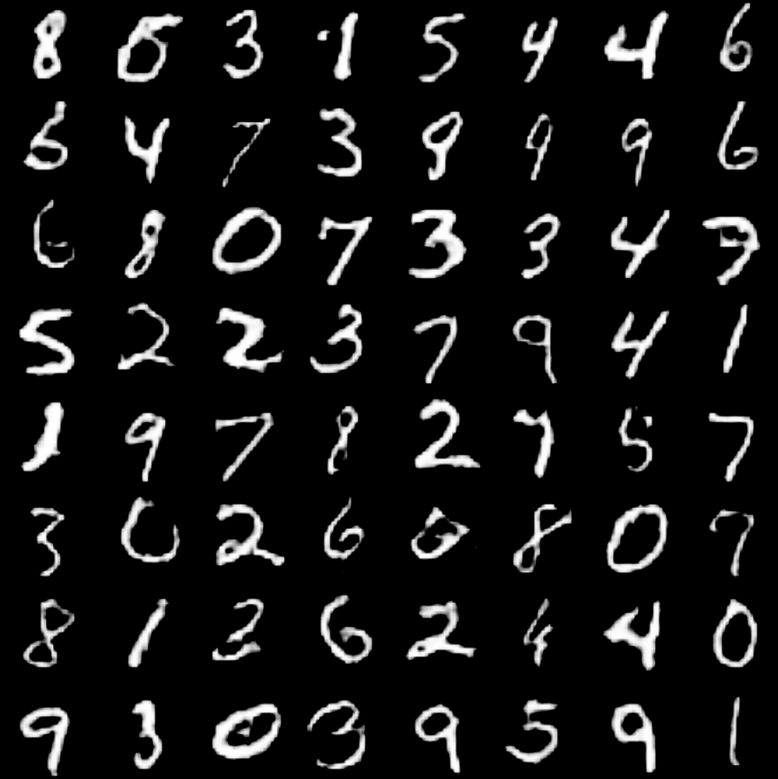}
    \caption{DCGAN pretrained with ISL}
    \label{fig:ISL_GAN}
  \end{subfigure}

  \vspace{0.5ex}
  \caption{Figure \ref{fig:GAN} shows digits generated by a standard DCGAN, while Figure \ref{fig:ISL_GAN} shows samples from a DCGAN pretrained with ISL. The ISL-pretrained model exhibits significantly greater diversity across all classes, whereas the vanilla DCGAN produces an overabundance of ‘1’s.
}
  \label{fig:MNIST_good}
\end{figure}

\begin{figure}[htb!]
  \centering
  \begin{subfigure}[t]{0.48\textwidth}
    \centering
    \includegraphics[width=\linewidth, height=5cm]{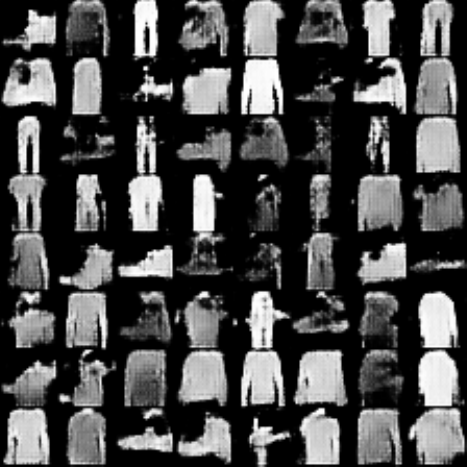}
    \caption{DCGAN}
    \label{fig:FMNIST_ISL}
  \end{subfigure}
  \hfill
  \begin{subfigure}[t]{0.48\textwidth}
    \centering
    \includegraphics[width=\linewidth, height=5cm]{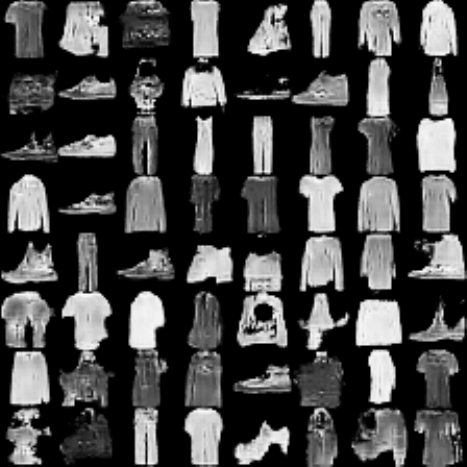}
    \caption{DCGAN pretrained with ISL}
    \label{fig:FMNIST_ISLDCGAN}
  \end{subfigure}

  \vspace{0.5ex}
  \caption{Comparison of Fashion-MNIST samples: \ref{fig:FMNIST_ISL} generated by DCGAN trained 40 epochs, and \ref{fig:FMNIST_ISLDCGAN} generated by a DCGAN pretrained with dual-ISL. The ISL-pretrained model demonstrates greater class diversity and improved precision.
}
  \label{fig:FMNIST_good}
\end{figure}

\newpage\clearpage

\newpage \clearpage

\section{Pseudocodes} \label{pseudocodes}
Below we summarize two variants of the dual‐ISL training procedure.  Algorithm~\ref{alg:dual-isl} presents the basic likelihood‐free, rank‐based update in the one‐dimensional case.  Algorithm~\ref{alg:dual-isl-multidim} extends this to multi‐dimensional data by drawing random one‐dimensional projections and averaging the resulting rank losses.

\begin{algorithm}[ht!]
\setstretch{1.2}
\caption{dual-ISL Training}\label{alg:dual-isl}
\begin{algorithmic}[1]
  \REQUIRE Generator network $f_{\theta}$, real samples $\{y_i\}_{i=1}^N$, batch size $M$, rank draws $K$, epochs $T$, learning rate $\eta$
  \ENSURE  Trained parameters $\theta$
  \FOR{epoch $=1$ to $T$}
    \FOR{each minibatch $B\subset \{y_i\}_{i=1}^{N}$ of size $M$}
      \STATE Sample noise $z \sim p_z$
      \STATE Generate fictious sample $\tilde y = f_\theta(z)$
      \STATE Initialize histogram $\mathbf{q} \leftarrow \mathbf{0}$
      \FOR{$t = 1,\dots,\lfloor M/K\rfloor$}
        \STATE $\{y_i\}_{i=1}^{K} \gets \text{RandomSubset}(B,\,K)$  \COMMENT{draw $K$ real samples from the minibatch}
        \STATE $a_K \leftarrow \sum_{i=1}^K \mathrm{SoftIndicator}[\tilde y \le y_{i}]$ \COMMENT{differentiable count of the $A_{K}$ statistic}
        \STATE $\mathbf{q}\leftarrow \mathbf{q} + \mathrm{SoftHotEncoding}(a_K,\ \text{length}=K+1)$ \COMMENT{accumulate a differentiable one-hot into the histogram}
      \ENDFOR
      \STATE $\mathbf{q} \leftarrow \mathrm{normalize}(\mathbf{q})$
      \STATE Compute loss: $\mathit{loss} \leftarrow \left|\left|\frac{1}{K+1}\mathbf{1}_{K+1}-\mathbf{q}\right|\right|_{\ell^1}$
      \STATE Update: $\theta \leftarrow \theta \;-\;\eta\,\nabla_{\theta}\,\mathit{loss}$
    \ENDFOR
  \ENDFOR 
  \RETURN $\theta$
\end{algorithmic}
\end{algorithm}

\begin{algorithm}[ht!]
\setstretch{1.2}
\caption{Dual‐ISL with Random Projections}\label{alg:dual-isl-multidim}
\begin{algorithmic}[1]
  \REQUIRE Generator $f_{\theta}$, real data $\{y_i\}_{i=1}^N\subset\mathbb{R}^d$, batch size $M$, rank draws $K$, projection draws $L$, epochs $T$, learning rate $\eta$
  \ENSURE  Learned parameters $\theta$
  \FOR{epoch = 1 \TO $T$}
    \FOR{each minibatch $B\subset\{y_i\}$ of size $M$}
      \STATE Sample noise $z \sim p_z$
      \STATE Generate fictious sample $\tilde y = f_\theta(z)$
      \STATE Initialize histogram $\mathbf{q} \leftarrow \mathbf{0}$
      \FOR{$\ell = 1$ \TO $L$}
        \STATE Sample random unit vector $v_\ell\sim \mathrm{Uniform}(S^{d-1})$
        \FOR{$t = 1,\dots,\lfloor M/K\rfloor$}
            \STATE $\{y_i\}_{i=1}^{K} \gets \text{RandomSubset}(B,\,K)$  \COMMENT{draw $K$ real samples from the minibatch}
            \STATE Compute projections $\tilde u = v_\ell^\top \tilde y$ and $u_i = v_\ell^\top y_i$ for all $i$
            \STATE $a_K \leftarrow \sum_{i=1}^K \mathrm{SoftIndicator}[\tilde u \le u_{i}]$ \COMMENT{differentiable count of the $A_{K}$ statistic}
            \STATE $\mathbf{q}\leftarrow \mathbf{q} + \mathrm{SoftHotEncoding}(a_K,\ \text{length}=K+1)$ \COMMENT{accumulate a differentiable one-hot into the histogram}
        \ENDFOR
        \STATE $\mathbf{q} \leftarrow \mathrm{normalize}(\mathbf{q})$
        \STATE $\mathrm{loss} \leftarrow \|\,\mathbf{q} - \tfrac{1}{K+1}\mathbf1_{K+1}\|_1$
      \ENDFOR
      \STATE Compute loss: $\mathrm{projection\_loss = \mathrm{mean}(loss)}$
      \STATE Update: $\theta \leftarrow \theta - \eta\,\nabla_\theta\,\mathrm{projection\_loss}$
    \ENDFOR
  \ENDFOR
  \RETURN $\theta$
\end{algorithmic}
\end{algorithm}

\newpage

\section{Experimental Setup}\label{Experimental Setup}
All experiments were performed on a MacBook Pro running macOS 13.2.1, equipped with an Apple M1 Pro CPU and 16 GB of RAM. When GPU acceleration was required, we used a single NVIDIA TITAN Xp with 12 GB of VRAM. Detailed hyperparameter settings for each experiment are provided in the corresponding sections. An anonymous repository containing all code and data is available at \url{https://anonymous.4open.science/r/dual-isl-6633}. The code will also be included in the supplementary materials in a folder.

\section{Limitations}

While our invariant statistical loss (ISL) framework eliminates the need for adversarial critics and guarantees strong convergence, it faces a critical trade-off when extended to high-dimensional data via ISL-slicing. Specifically, it requires $m$ random projections: a large $m$ enhances fidelity but incurs steep computational costs, whereas a small $m$ may overlook key anisotropic features. To address this, future research should design adaptive strategies for choosing or weighting projections—potentially drawing on recent advances in slicing-Wasserstein theory—to maximize information gain per projection. Moreover, exploring alternative projection methods (such as data-dependent or learned mappings) and establishing rigorous convergence bounds that link $m$  to both convergence rate and approximation error will be essential for fully automating and optimizing ISL’s performance.

By viewing ISL as the “cost” of rearranging generated samples to match real data points, we uncover its direct relationship with optimal transport. In essence, the permutation that sorts one sample set against another defines an explicit coupling—much like the Monge map—between the model and data distributions. Future work should formalize this correspondence, harnessing optimal transport tools to both analyze ISL’s theoretical properties and develop faster, more principled algorithms.

\section{Potential Societal Impact}

Implicit generative models unlock powerful data‐synthesis capabilities but also pose dual-use risks. Although our ISL framework could produce highly realistic outputs—such as photorealistic faces or authentic-sounding voices—it could likewise be misused to generate deep fakes. At the same time, we have shown that ISL excels at modeling heavy-tailed distributions, which helps ensure rare or minority subpopulations are neither over- nor under-represented. Furthermore, the closed-form density estimation inherent in ISL offers a transparent window into what is otherwise a black-box process, improving explainability and enabling practitioners to audit and adjust model behavior for fairness. By combining high-fidelity synthesis with built-in safeguards and interpretability, ISL paves the way for more responsible, equitable deployment of generative technologies.

\end{document}